\newcommand{\akash}[1]{{\textcolor{purple}{[{\bf A:} #1]}}}
\newcommand{\parthe}[1]{{\textcolor{red}{[{\bf A:} #1]}}}
\newcommand{\annotate}[1]{{\textcolor{blue}{[{\bf Note:} #1]}}}
\newcommand{\akash}[1]{}
\newcommand{\parthe}[1]{}
\newcommand{\annotate}[1]{}
\newtheorem{assumption}{Assumption}
\newcommand{\Rmnum}[1]{\expandafter\@slowromancap\romannumeral #1@}
\newcommand{\figref}[1]{Fig.~\ref{#1}}
\newcommand{\eqnref}[1]{\text{Eq.}~(\ref{#1})}
\newcommand{\secref}[1]{\textnormal{Section}~\ref{#1}}
\newcommand{\appref}[1]{Appendix \ref{#1}}
\newcommand{\thmref}[1]{Theorem~\ref{#1}}
\newcommand{\lemref}[1]{Lemma~\ref{#1}}
\newcommand{\rtv}[1]{\mathcal{R}\mathrm{TV}^2({#1})}
\newcommand{\rbv}[1]{\mathcal{R}\mathrm{BV}^2({#1})}
\newcommand{\paren} [1] {\ensuremath{ \left( {#1} \right) }}
\newcommand{\bracket}[1]{\left[#1\right]}
\newcommand{\curlybracket}[1]{\ensuremath{\left\{#1\right\}}}
\newcommand{\reals}{\ensuremath{\mathbb{R}}}
\newcommand{\cS}{{\mathcal{S}}}
\newcommand{\cR}{{\mathcal{R}}}
\newcommand{\cB}{{\mathcal{B}}}
\newcommand{\cX}{{\mathcal{X}}}
\newcommand{\cH}{{\mathcal{H}}}
\newcommand{\be}{{\mathbf{e}}}
\newcommand{\bw}{{\mathbf{w}}}
\renewcommand{\sf}[1]{\textsf{#1}}
\def\BState{\State\hskip-\ALG@thistlm}
\newcommand{\tv}{\mathcal{R}\mathrm{TV}^2}
\newcommand{\rv}{\mathcal{RBV}}
\definecolor{cyanblue}{RGB}{170, 23, 205}
\definecolor{redblue}{RGB}{10, 23, 205}
\newcommand{\curly}[1]{\left\{#1\right\}}
\def\x{\bm{x}}
\def\y{\bm{y}}
\definecolor{shadecolor}{gray}{0.9}
\definecolor{linkblue}{HTML}{5DADE2}
\definecolor{linkpink}{HTML}{D16BA5}
\def\x{\bm{x}}
\def\w{\bm{w}}
\def\z{\bm{z}}
\def\y{\bm{y}}
\def\mR{\bm{R}}
\def\@maketitle{\vbox{\hsize\textwidth
 \linewidth\hsize \vskip \beforetitskip
 {\begin{center} \Large\bf \@title \par \end{center}} \vskip \aftertitskip
 {\begin{center}%
  \def\and{\unskip\enspace{\rm and}\enspace}%
  \def\addr{\small\it}%
  \def\email{\small\sc}% no \hfill so email stays centered
  \def\name{\normalsize\bf}%
  \def\AND{\@endauthor\rm\hss \vskip \interauthorskip \@startauthor}
  \@startauthor \@author \@endauthor
  \end{center}}%
  \vskip \aftermaketitskip
  \if@preprint
  \else
  %\noindent \@starteditor \@editor \@endeditor
  \vskip \aftermaketitskip
  \fi
}}
\begin{document}

\title{A Gap Between Decision Trees and Neural Networks}

\author{\name Akash Kumar \\\email (akk002@ucsd.edu) \\
       \addr Department of Computer Science \& Engineering\\
       University of California-San Diego\\
       La Jolla, CA 92093-0404, USA
       }

%\editor{}

\maketitle
% \begin{abstract}
\vspace{-5mm}
\begin{abstract}
We study when geometric simplicity of decision boundaries, used here as a notion of interpretability, can conflict with accurate approximation of axis-aligned decision trees by shallow neural networks. Decision trees induce rule-based, axis-aligned decision regions (finite unions of boxes), whereas shallow ReLU networks are typically trained as score models whose predictions are obtained by thresholding. We analyze the infinite-width, bounded-norm, single-hidden-layer ReLU class through the Radon total variation ($\mathcal{R}\mathrm{TV}$) seminorm, which controls the geometric complexity of level sets.

We first show that the hard tree indicator $1_A$ has infinite $\mathcal{R}\mathrm{TV}$. Moreover, two natural split-wise continuous surrogates--piecewise-linear ramp smoothing and sigmoidal (logistic) smoothing--also have infinite $\mathcal{R}\mathrm{TV}$ in dimensions $d>1$, while Gaussian convolution yields finite $\mathcal{R}\mathrm{TV}$ but with an explicit exponential dependence on $d$.

We then separate two goals that are often conflated: classification after thresholding (recovering the decision set) versus score learning (learning a calibrated score close to $1_A$). For classification, we construct a smooth barrier score $S_A$ with finite $\mathcal{R}\mathrm{TV}$ whose fixed threshold $\tau=1$ exactly recovers the box. Under a mild tube-mass condition near $\partial A$, we prove an $L_1(P)$ calibration bound that decays polynomially in a sharpness parameter, along with an explicit $\mathcal{R}\mathrm{TV}$ upper bound in terms of face measures. Experiments on synthetic unions of rectangles illustrate the resulting accuracy--complexity tradeoff and how threshold selection shifts where training lands along it.
\end{abstract}

% \begin{keywords}%
%   Interpretability, Shallow Networks, Decision Trees,  Function Approximation, Neural Networks% 
% \end{keywords}
% !TEX root = MAIN.tex
%-------------------------------------------------------
\section{Introduction}\label{sec:intro}

In safety-critical and socially sensitive applications, it is often desirable to deploy predictors whose behavior can be explained and audited. A common interpretable baseline is the axis-aligned decision tree: it classifies $x\in\mathbb{R}^d$ by a sequence of one-dimensional threshold tests. Equivalently, it induces a piecewise-constant classifier of the form

\[
\hspace{-6mm}f_{\mathrm{DT}}(x) \;=\; \mathbf{1}\{x\in A\}, \hspace{5mm}
\]
where $A\subseteq\mathbb{R}^d$ is a finite union of axis-aligned boxes.

\begin{wrapfigure}{r}{0.38\textwidth}
    % \hspace*{1em}
  \vspace{-6.4em}
  \centering
  \includegraphics[width=\linewidth]{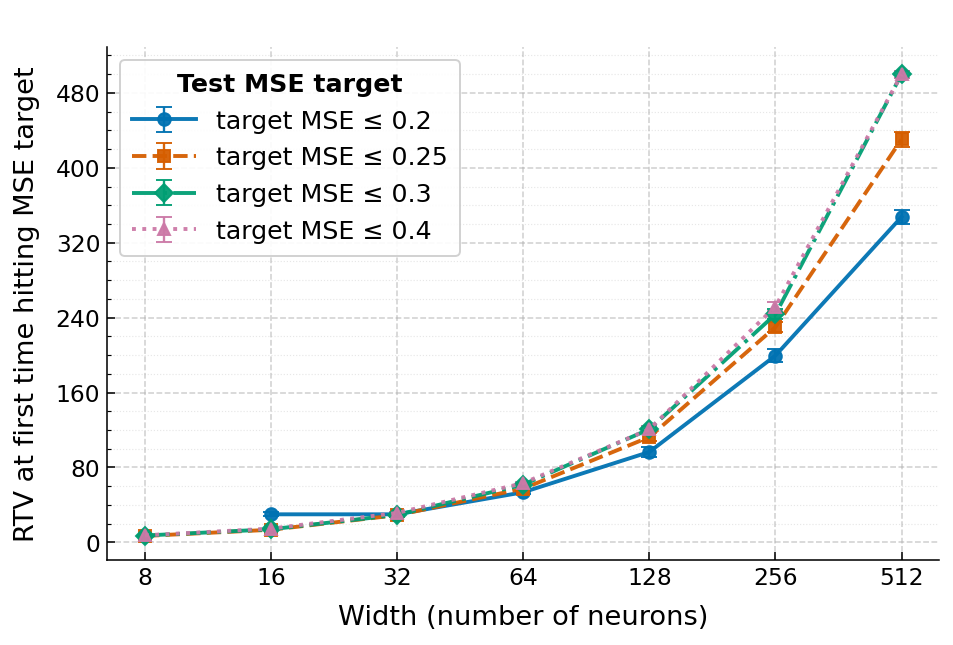}
  \caption{Width–$\mathcal{R}\mathrm{TV}$ frontier (depth-1 ReLU, box task).}
  \label{fig:intro-rtv-frontier}
    \vspace{-2.1em}
\end{wrapfigure}

In contrast, a single-hidden-layer network forms a score
\[
x \;\mapsto\; \sum_{k=1}^K v_k\,\sigma(w_k^\top x-b_k),
\]
and typically produces a classifier only after applying a nonlinearity and/or threshold. We analyze the infinite-width, bounded-norm counterpart of this model through the \emph{Radon total variation} ($\mathcal{R}\mathrm{TV}$) seminorm $\|\cdot\|_{\mathcal{R}}$ \citep{savarese19a,Ongie2020A,Parhi2019TheRO}. Intuitively, $\|\cdot\|_{\mathcal{R}}$ controls the amount of geometric complexity in level sets of a function and coincides with the minimum-mass ridgelet (infinite-width ReLU) representation norm. 
This makes $\mathcal{R}\mathrm{TV}$ a natural lens for asking when shallow networks can represent tree-like decision structure without becoming geometrically complex.

\paragraph{Two learning goals that should be separated.}
There are (at least) two distinct objectives when ``learning a tree'' with a shallow net:
\begin{enumerate}
\item \textbf{Classification via thresholding.} Learn a score $s:\mathbb{R}^d\to[0,1]$ such that $\{x:s(x)\ge \tau\}=A$ for some threshold $\tau$ (possibly fixed or tuned on validation data).
\item \textbf{Score learning / regression.} Learn a calibrated score close to $1_A$ (e.g., in $L_1(P)$), so that the score's level sets and gradients align with the symbolic features of the tree.
\end{enumerate}
These goals are often conflated in empirical evaluations: many very different score functions can induce the same thresholded classifier, so near-perfect classification accuracy can mask large differences in calibration and geometry.

\paragraph{This paper.}
We formalize the distinction above and show that it matters sharply under $\mathcal{R}\mathrm{TV}$ control. Structurally, we prove that the hard tree indicator $1_A$ has infinite $\mathcal{R}\mathrm{TV}$. We then show that several natural continuous surrogates retain this pathology in $d>1$: ramp smoothing is continuous piecewise-linear but typically has infinite $\mathcal{R}\mathrm{TV}$ when multiple split normals are present, and sigmoidal smoothing has infinite $\mathcal{R}\mathrm{TV}$ as soon as the depth is at least two. In contrast, Gaussian smoothing yields a finite $\mathcal{R}\mathrm{TV}$ bound, but with an explicit exponential dependence on the ambient dimension.

On the learning side, we show that \emph{classification itself is easy} if one only requires correct thresholded decisions: we construct a smooth barrier score $S_B$ for an axis-aligned box $B$ such that a fixed cutoff $\tau=1$ exactly recovers $B$. Under a mild tube-mass condition near $\partial B$, $S_B$ is also $L_1(P)$-close to $\mathbf{1}_B$ with a polynomial rate in a sharpness parameter, and it has an explicit finite $\mathcal{R}\mathrm{TV}$ bound. Thus, shallow nets can match tree decisions via thresholding without approximating the discontinuous tree function in any strong sense. In contrast, insisting on a score that is both calibrated and geometrically simple reveals a quantitative accuracy--complexity frontier.

\paragraph{Contributions (informal).}
Let $A$ be a finite union of axis-aligned boxes in $\mathbb{R}^d$.
\begin{enumerate}
\item \textit{Hard trees lie outside bounded-$\mathcal{R}\mathrm{TV}$ balls.} For every $d\ge 1$, $\|1_A\|_{\mathcal{R}}=+\infty$.
\item \textit{Naive smoothings can still have infinite $\mathcal{R}\mathrm{TV}$ in $d>1$.} Ramp and sigmoidal split-wise smoothings retain infinite $\mathcal{R}\mathrm{TV}$ under mild conditions (in particular, multiple split directions / depth $\ge 2$).
\item \textit{Gaussian smoothing yields finite $\mathcal{R}\mathrm{TV}$ but is dimension-dependent.} Convolution with an isotropic Gaussian produces $\|f_\sigma\|_{\mathcal{R}}<\infty$ with an explicit bound that scales exponentially in $d$.
\item \textit{Classification via thresholding admits finite-$\mathcal{R}\mathrm{TV}$ exact recovery.} We construct a smooth barrier score $S_B$ with $\{S_B\ge 1\}=B$ and prove (i) an $L_1(P)$ calibration bound under a tube-mass condition and (ii) an explicit $\mathcal{R}\mathrm{TV}$ upper bound in terms of the box face measures.
\item \textit{Experiments.} On synthetic unions of rectangles, we empirically trace the calibration--complexity frontier and illustrate how post hoc threshold tuning changes the induced decision set without changing the underlying learned score.
\end{enumerate}

So, if one only cares about the \emph{thresholded classifier}, shallow nets can represent trees easily. If one cares about learning a score that is both \emph{interpretable} (low $\mathcal{R}\mathrm{TV}$, gradients aligned to splits) and \emph{close} to the symbolic model, a quantitative trade-off emerges. This clarifies what exactly is in tension and why thresholding alone can mask feature misalignment.

\section{Related Work}\label{sec:related}

\paragraph{The accuracy--interpretability debate.}
Early empirical work reported an apparent performance gap between transparent models (linear regressions, GAMs, decision trees) and deep neural nets \citep{doshivelez2017}. More recent benchmarks and case studies nuance this picture: in some regimes, carefully tuned interpretable models can rival black-box baselines, and the relationship between interpretability and predictive performance can be non-monotonic \citep{lovo2025tacklingaccuracyinterpretabilitytradeoffhierarchy,atrey2025demystifying}. In high-stakes settings, some authors argue that the presumed trade-off is overstated and advocate for interpretable-by-design predictors \citep{rudin2019}. Our analysis sharpens this discussion by identifying a dimension-dependent regime---captured through a geometric complexity measure---in which high accuracy necessarily coincides with large complexity.

\paragraph{Complexity measures for neural functions.}
Generalisation guarantees for neural networks are typically phrased in weight-space norms, e.g., the path-norm \citep{neyshabur2015pathsgdpathnormalizedoptimizationdeep}, products of spectral norms \citep{bartlett2017spectrallynormalizedmarginboundsneural}, or Neural Tangent Kernel radii \citep{jacot2020neuraltangentkernelconvergence}. While these quantities can correlate with test error (and sometimes robustness), their connection to \emph{geometric} properties of decision boundaries is indirect. A complementary viewpoint is provided by the \emph{Radon bounded-variation} space $\rbv{\Omega}$ ($\Omega \subseteq \reals^d$), defined via bounded Radon-domain total variation ($\mathcal{R}\mathrm{TV}$) \citep{savarese19a}. Representer theorems show that shallow ReLU networks trained with weight decay admit $\rbv{\Omega}$ solutions \citep{Parhi2020BanachSR}, and subsequent work show connection to reproducing kernel Banach spaces (RKBS) and related structural/approximation results \citep{Ongie2020A,mao2024approximationratesshallowreluk,kumar2024mirrordescentreproducingkernel}. Building on this framework, we analyze $\mathcal{R}\mathrm{TV}$ for both smoothed and hard decision-tree limits, highlighting regimes where symbolic transparency clashes with bounded-complexity function classes.

\paragraph{Approximation behaviour of shallow \emph{versus} deep networks.}
Depth-2 networks are universal approximators of continuous functions on bounded domains under mild conditions on the activation \citep{Cybenko1989ApproximationBS,HORNIK1989359,FUNAHASHI1989183}, but \citet{kumar2025gapgaussianrkhsneural} have shown that there exists Gaussian kernel machines that lie outside the representation scheme on unbounded domain. Classical results quantify the curse of dimensionality for single-hidden-layer approximation of smooth targets, with width scaling like $\epsilon^{-d/n}$ for $C^n$ functions \citep{Pinkus_1999,mhaskar}, and refinements for ReLU architectures clarifying how depth can mitigate this dependence \citep{pmlr-v75-yarotsky18a}. For discontinuous or sharply varying targets, complexity can be dominated by boundary geometry; in our setting, the $\mathcal{R}\mathrm{TV}$ needed to represent (or closely calibrate) tree-style boundaries can scale rapidly with dimension, so bounded-complexity shallow predictors may require large width or additional depth to match tree-level accuracy. Depth-separation results provide a complementary perspective: certain functions computable by moderately deep networks provably require exponentially many units at smaller depth \citep{telgarsky2016benefits,eldan2016power}.

\paragraph{Smooth decision trees and differentiable forests.}
Gradient-based tree learning has grown well beyond early neural decision forest constructions \citep{Kontschieder_2015_ICCV,frosst2017distillingneuralnetworksoft}. Neural Oblivious Decision Ensembles (NODE) combine stacks of feature-shared oblivious trees with end-to-end training and are competitive on tabular benchmarks \citep{popov2019node}. Adaptive Neural Trees learn both topology and predictors \citep{tanno2019ant}, while the Tree Ensemble Layer integrates a soft forest into a deep network and trains jointly with upstream representations \citep{hazimeh2020treeensemblelayer}. More recently, GRANDE optimizes large differentiable forests with Adam and reports strong empirical performance \citep{grande2023gradient}. These works show that back-prop can fit high-capacity tree-like models; our $\mathcal{R}\mathrm{TV}$ calculations provide a complementary explanation for why smoothing alone may not avoid dimension-driven complexity in regimes where sharp, axis-aligned boundaries dominate.

% !TEX root = MAIN.tex
\section{Problem Setup}\label{sec:setup}
\paragraph{Data space.}
We work on the ambient Euclidean space \(\cX = \mathbb{R}^{d}\)
with \(d\ge 1\).  We denote a datapoint by \(\x\in\cX\) and scalar values by $y,z,\omega \in \reals$.

\paragraph{Axis-aligned decision trees.}
A depth--$D$ axis-aligned decision tree consists of internal nodes $t$,
each testing a single coordinate $x_{j_t}$ against a threshold $\theta_t \in \mathbb{R}$.
Equivalently, each test can be written as $\mathbf{1}\{w_t^\top x + b_t > 0\}$
with $w_t \in \{\pm e_j\}$ and $b_t \in \mathbb{R}$.
Traversing the tree from root to leaf evaluates the classifier in $O(D)$ tests.

Each leaf $\ell$ corresponds to an axis-aligned cell (a box, possibly unbounded)
$B_\ell = \prod_{j=1}^d [\ell_{\ell,j},u_{\ell,j}]$, obtained by intersecting the
halfspace constraints along its root-to-leaf path. Let $\mathcal{L}_+$ denote
the leaves labeled $1$. The induced classifier is
\[
  f_{\sf DT}(x) \;=\; \mathbf{1}\{x\in A\},
  \qquad
  A \;=\; \bigcup_{\ell\in\mathcal{L}_+} B_\ell ,
\]
and (up to boundaries) this union is disjoint, so $f_{\sf DT}=\sum_{\ell\in\mathcal{L}_+}\mathbf{1}_{B_\ell}$ a.e.
Its decision boundary is composed of pieces of coordinate-aligned $(d-1)$-hyperplanes.
However, the jump discontinuities place \(f_{\sf{DT}}\) outside smooth
function classes such as reproducing kernel Hilbert spaces (RKHS) or the space of 2-layered infinite width ReLU neural networks,
giving \(\|f_{\sf{DT}}\|_{\mathcal{R}}=\infty\) (see \secref{sec:hard}).
% For analytic control we therefore introduce smooth surrogates that
% retain the tree structure but soften each split (see \figref{fig:smoothing} for illustration).
 For analytic control we therefore introduce smooth surrogates
that soften each split while preserving this leaf/box structure (see \figref{fig:smoothing}).

% ---
% \paragraph{Axis-aligned decision trees.}
% A depth–\(D\) axis-aligned tree is determined by ordered splits
% \((\bw_i,b_i)_{i=1}^{D}\) where
% \(\bw_i\!\in\!\{\pm\be_{j_i}\}\) selects the coordinate
% \(j_i\!\in\![d]\) and \(b_i\in\mathbb{R}\) is the threshold.
% The classifier is the indicator
% \[
%   f_{\sf{DT}}(\x)
%   \;=\;
%   {1}\!\{\x\in A\},\qquad
%   A=\textstyle\bigcup_{k} B_k
% \]
% with \(A\) a union of axis-aligned boxes \(\{B_k\}\)
% (corresponding to the leaves).
% Its decision boundary is a union of coordinate-aligned \((d-1)\)-planes,
% yielding perfect interpretability and \(O(D)\) evaluation cost.
% However, the jump discontinuities place \(f_{\sf{DT}}\) outside smooth
% function classes such as reproducing kernel Hilbert spaces (RKHS) or the space of 2-layered infinite width ReLU neural networks,
% giving \(\|f_{\sf{DT}}\|_{\mathcal{R}}=\infty\) (see \secref{sec:hard}).
% For analytic control we therefore introduce smooth surrogates that
% retain the tree structure but soften each split (see \figref{fig:smoothing} for illustration).

\paragraph{Smoothed decision trees.}
All surrogates keep the same \((\bw_i,b_i)\) and depth \(D\); they
differ only in how the sign test \({1}\{\bw_i^{\!\top}\x+b_i>0\}\)
is replaced.

\emph{Piecewise-linear ramp smoothing.}
For a margin width \(\epsilon>0\) define
\begin{align}
  \rho_\epsilon(z)=
  \begin{cases}
    0, & z\le-\tfrac{\epsilon}{2},\\
    \frac{z}{\epsilon}+\frac{1}{2}, & |z|<\tfrac{\epsilon}{2},\\ \label{eq:ramp}
    1, & z\ge \tfrac{\epsilon}{2}.
  \end{cases}  
\end{align}
The ramp-smoothed tree is
\begin{align}
  f_{\sf{DT},\epsilon}(\x)
  \;=\;
  \prod_{i=1}^{D} \rho_\epsilon\!\bigl(\bw_i^{\!\top}\x+b_i\bigr).  \label{eq:rampdt}
\end{align}
It coincides with \(f_{\sf{DT}}\) outside width-\(\epsilon\) slabs and
converges to the hard tree as \(\epsilon\!\to\!0\).

\begin{figure}[t]
  \hspace{-12mm}
  \centering
  \begin{minipage}[t]{0.28\linewidth}
    \centering
    \includegraphics[width=\linewidth]{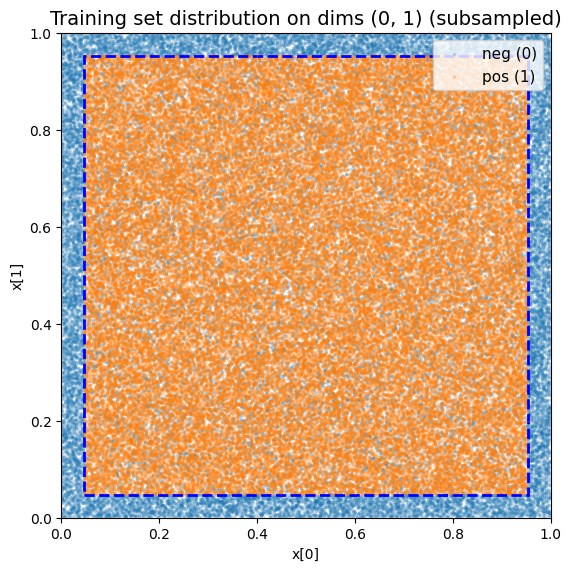}
    \vspace{0.3em}
    {\small\textbf{(a)} Training data (slice).}
  \end{minipage}
  \hspace{2mm}
  \begin{minipage}[t]{0.32\linewidth}
    \centering
    \includegraphics[width=\linewidth]{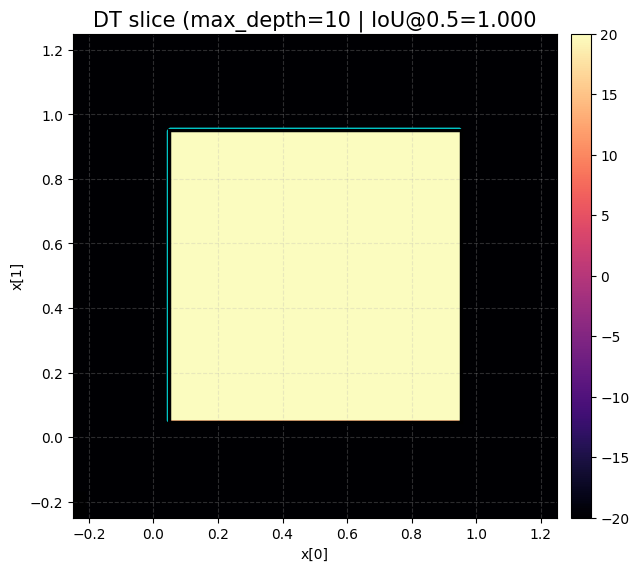}
    \vspace{0.3em}
    {\small\textbf{(b)} Decision tree boundary (slice).}
  \end{minipage}
  \begin{minipage}[t]{0.43\linewidth}
    \hspace*{4mm}
    \centering
    \includegraphics[width=\linewidth]{rtvvswidth.png}
    \vspace{0.3em}
    {\small\textbf{(c)} Width-$\mathcal{R}\mathrm{TV}$ frontier.}
  \end{minipage}
  \caption{\textbf{Box-classification dataset and decision-tree baseline.}
  \textbf{(a)} Labeled samples for the synthetic box task, visualized on the $(x_0,x_1)$ slice
  with $x_2=x_3=x_4=0.5$. Points are uniformly sampled from $[0,1]^5$ and colored by box membership;
  the dashed rectangle denotes the ground-truth boundary on the slice.
  \textbf{(b)} An axis-aligned decision tree fit on the same data, visualized via its induced slice boundary. \textbf{(c)} shows how the $\mathcal{R}\mathrm{TV}$ of shallow ReLU networks grows with width on the same box task, highlighting the trade-off between approximation quality and Radon total variation. For depth-1 ReLU networks trained with Adam (with weight decay) on the $d=10$ box classification task, where $X \sim \mathrm{Unif}([0,1]^{10})$ and $y = \mathbf{1}\{x \text{ lies in a centered axis-aligned box}\}$ (roughly $50\%$ positives). Curves correspond to test raw-MSE targets $0.20, 0.25, 0.30,$ and $0.40$; for each width $m\in\{8,16,32,64,128,256,512\}$ we plot the mean $\mathcal{R}\mathrm{TV}(W,a)=\tfrac12(\|W\|_F^2+\|a\|_2^2)$ at the first epoch when the test raw MSE drops below the target, averaged over multiple random initializations.
  }
  \label{fig:dt-training-data}
\end{figure}

% \caption{\textbf{Box-classification dataset and decision-tree baseline.}
%   \textbf{(a)} Labeled samples for the synthetic box task, visualized on the $(x_0,x_1)$ slice
%   with $x_2=x_3=x_4=0.5$. Points are uniformly sampled from $[0,1]^5$ and colored by box membership;
%   the dashed rectangle denotes the ground-truth boundary on the slice.
%   \textbf{(b)} An axis-aligned decision tree fit on the same data, visualized via its induced slice boundary. \textbf{(c)} shows how the $\mathcal{R}\mathrm{TV}$ of shallow ReLU networks grows with width on the same box task, highlighting the trade-off between approximation quality and Radon total variation. For depth-1 ReLU networks trained with Adam (with weight decay) on the $d=10$ box classification task, where $X \sim \mathrm{Unif}([0,1]^{10})$ and $y = \mathbf{1}\{x \text{ lies in a centered axis-aligned box}\}$ (roughly $50\%$ positives). Curves correspond to test raw-MSE targets $0.20, 0.25, 0.30,$ and $0.40$; for each width $m\in\{8,16,32,64,128,256,512\}$ we plot the mean $\mathcal{R}\mathrm{TV}(W,a)=\tfrac12(\|W\|_F^2+\|a\|_2^2)$ at the first epoch when the test raw MSE drops below the target, averaged over multiple random initializations.
%   }
%   \label{fig:dt-training-data}
\emph{Sigmoidal (logistic) smoothing.}
With temperature \(\gamma>0\) let
\(\sigma_\gamma(z)=\bigl(1+e^{-z/\gamma}\bigr)^{-1}\).
The model
\begin{align}
  f_{\sf{DT},\gamma}(\x)
  \;=\;
  \prod_{i=1}^{D} \sigma_\gamma\!\bigl(\bw_i^{\!\top}\x+b_i\bigr)  \label{eq:sig}
\end{align}
is infinitely differentiable; its transition width is \(O(\gamma)\) and
its spectrum decays faster than any polynomial in frequency, faster than the ramp yet
slower than the Gaussian surrogate below.

\emph{Gaussian smoothing.}
Global diffusion is obtained by convolving the hard tree with an
isotropic Gaussian kernel \(G_\sigma(z)=
 (2\pi\sigma^{2})^{-d/2}\exp(-\|z\|^{2}/(2\sigma^{2}))\):
 \begin{align}
  f_\sigma(\x)
  \;=\;
  \int_{\mathbb{R}^{d}}
    f_{\sf{DT}}(\y)\,G_\sigma(\x-\y)\,d\y.   \label{eq:gauss}
 \end{align}
This surrogate is \(C^\infty\) with Fourier transform
\(\widehat{f_\sigma}(\xi)=e^{-\sigma^{2}\|\xi\|^{2}/2}\widehat{f_{\sf{DT}}}(\xi)\),
implying exponential spectral decay.  Unlike the ramp or sigmoid
constructions—which preserve the separable, axis-aligned product
structure -- Gaussian convolution couples all coordinates, spreading the
effect of each split over a neighbourhood of radius~\(\sigma\).

Each smoothing scheme recovers \(f_{\sf{DT}}\) in the
limit \(\epsilon,\gamma,\sigma\to 0\), but exhibits markedly different
regularity and spectral behaviour; these differences will be central to
our subsequent analysis of their Radon total-variation norm.

\paragraph{Radon transform.}
For \(f\in L^{1}(\mathbb{R}^{d})\) the Radon transform is
\[
  \mathcal{R} f(\beta,t)
  :=\!
  \int_{\{x:\,\beta^{\!\top}x=t\}}
      f(x)\,ds(x),
  \qquad
  (\beta,t)\in\mathbb{S}^{d-1}\times\mathbb{R},
\]
where \(ds\) denotes the \((d-1)\)-Lebesgue measure on the hyperplane.
With the unitary Fourier convention
\(
  \widehat{f}(\xi)=(2\pi)^{-d/2}\int f(x)e^{-i\xi^{\top}x}\,dx
\)
the \emph{Fourier–slice theorem} (see, e.g.,
\cite{kak_slaney}) gives
\begin{equation}\label{eq: slice}
  \mathcal{R} f(\beta,t)
  \;=\;
  (2\pi)^{\tfrac{1-d}{2}}
  \int_{\mathbb{R}}
      e^{i\omega t}\,
      \widehat{f}(\omega\beta)\,d\omega .
\end{equation}

\paragraph{Second-order Radon bounded-variation space.}
Following \citet{Ongie2020A,Parhi2020BanachSR} we define
\[
  \rbv{\mathbb{R}^{d}}
  :=\bigl\{
      f\in L^{\infty,1}(\mathbb{R}^{d})
      :\|f\|_{\mathcal{R}}<\infty
    \bigr\},
\]
where
\begin{equation}\label{eq: RNormDef}
  \|f\|_{\mathcal{R}}
  :=c_{d}\,
     \bigl\|
       \partial_{t}^{2}\Lambda^{d-1}\mathcal{R} f
     \bigr\|_{\mathcal{M}(\mathbb{S}^{d-1}\times\mathbb{R})},
  \qquad
  c_{d}^{-1}=2(2\pi)^{d-1},
\end{equation}
and
\(\Lambda^{d-1}=(-\partial_{t}^{2})^{(d-1)/2}\) is the 1-D “ramp
filter’’ operator.\footnote{For odd \(d\) the fractional power is
interpreted via Fourier multipliers; all derivatives are taken in the
sense of tempered distributions.}
The norm \(\|\cdot\|_{\mathcal{R}}\) coincides with the minimum-width,
infinite-neuron ReLU network norm introduced by
\citet{Ongie2020A}, and measures the second-order total variation of
\(\mathcal{R} \curlybracket{f}\) across all projection directions. In this work we denote this seminorm by $\|\cdot\|_{\mathcal R}$.

Using the discussion above, it turns out the computation of $\|\cdot\|_{\mathcal R}$ in the one-dimensional setting can be simplified. This has been formally proven in \cite{savarese19a} as follows:

\begin{theorem}[Theorem 3.1~\cite{savarese19a}]
    For any function $f: \reals \to \reals$, we have:
    \begin{align*}
        \rtv{f} = \max\paren{\int_{-\infty}^{\infty} |f''(x)| d x, |f'(\infty) + f'(-\infty)|} \le \int_{-\infty}^{\infty} |f''(x)| d x + 2 \inf_{x} |f'(x)|
    \end{align*}
\end{theorem}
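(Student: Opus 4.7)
I would reduce the claim to a concrete variational problem on signed measures via the 1D characterization of $\rtv{\cdot}$ as the minimum-mass infinite-width ReLU representation. Every admissible $f$ can be written as
\[
f(x) \;=\; \int \sigma(x-t)\,d\mu_{+}(t) \;+\; \int \sigma(t-x)\,d\mu_{-}(t) \;+\; \beta,
\]
for signed Radon measures $\mu_{\pm}$ on $\reals$ and a constant $\beta$, and $\rtv{f}$ coincides with the infimum of $\|\mu_{+}\|_{\mathcal{M}} + \|\mu_{-}\|_{\mathcal{M}}$ over all such decompositions; see \citep{Ongie2020A,Parhi2020BanachSR} specialised to $d=1$. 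Differentiating twice in the distributional sense, the representation is equivalent to the two constraints $\mu_{+}+\mu_{-} = f''$ as measures and, from the asymptotic slopes, $\mu_{+}(\reals) = f'(\infty)$, $\mu_{-}(\reals) = -f'(-\infty)$.

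The lower bound $\rtv{f}\ge \max\bigl(\int|f''|,\,|f'(\infty)+f'(-\infty)|\bigr)$ then follows from two triangle inequalities for signed Radon measures: first, $\|\mu_{+}\|_{\mathcal{M}}+\|\mu_{-}\|_{\mathcal{M}}\ge \|\mu_{+}+\mu_{-}\|_{\mathcal{M}} = \int|f''|$; and second, $\|\mu_{+}\|_{\mathcal{M}}+\|\mu_{-}\|_{\mathcal{M}}\ge|\mu_{+}(\reals)|+|\mu_{-}(\reals)|\ge |\mu_{+}(\reals)-\mu_{-}(\reals)| = |f'(\infty)+f'(-\infty)|$. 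Taking the maximum gives the claimed bound.

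For the matching upper bound I would construct optimal $\mu_{\pm}$ in two cases, split on the sign of $|s|-\int|f''|$ with $s:=f'(\infty)+f'(-\infty)$. Let $f''=\mu^{+}-\mu^{-}$ be the Jordan decomposition, with totals $p:=\mu^{+}(\reals)$, $q:=\mu^{-}(\reals)$. In Case~1 ($|s|\le p+q$) I set $\mu_{+}=\alpha\mu^{+}-\beta\mu^{-}$ and $\mu_{-}=(1-\alpha)\mu^{+}-(1-\beta)\mu^{-}$ with $(\alpha,\beta)\in[0,1]^{2}$; the endpoint constraint $\mu_{+}(\reals)=f'(\infty)$ becomes $\alpha p-\beta q=(s+p-q)/2$, whose range over $[0,1]^{2}$ is exactly $[-q,p]$, so feasibility is equivalent to $|s|\le p+q$, and the total cost equals $p+q=\int|f''|$. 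In Case~2 ($|s|> p+q$) a sign-respecting split cannot meet the endpoint constraint, so I would add a canceling Dirac pair: for $s>p+q$, take $\mu_{+}=\mu^{+}+\lambda\delta_{a}$ and $\mu_{-}=-\mu^{-}-\lambda\delta_{a}$ with $\lambda=(s-p-q)/2>0$ and any $a\in\reals$. Each $\mu_{\pm}$ is then single-signed, so the cost equals $(p+\lambda)+(q+\lambda)=s=|s|$, while $\mu_{+}+\mu_{-}=f''$ and $\mu_{+}(\reals)=f'(\infty)$ both still hold; a symmetric construction handles $s<-(p+q)$. Combining the two cases yields $\rtv{f}=\max\bigl(\int|f''|,\,|s|\bigr)$.

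Finally, the algebraic inequality $\max\bigl(\int|f''|,\,|s|\bigr)\le \int|f''|+2\inf_{x}|f'(x)|$ is a calculus identity: for any $x_{\ast}\in\reals$, the fundamental theorem of calculus bounds $|f'(\infty)-f'(x_{\ast})|$ and $|f'(-\infty)-f'(x_{\ast})|$ by the $|f''|$-integrals over the right and left half-lines at $x_{\ast}$, which sum to $\int|f''|$; hence $|f'(\infty)+f'(-\infty)|\le 2|f'(x_{\ast})|+\int|f''|$, and infimising over $x_{\ast}$ gives the stated bound. The main obstacle is Case~2 of the upper-bound construction: one has to recognise that when the endpoint discrepancy $|s|$ exceeds the budget provided by $f''$, a canceling atomic pair of TV cost $2\lambda$ is both necessary and sufficient, and to verify that this additional cost exactly closes the gap to the $|s|$ lower bound rather than overshooting it.
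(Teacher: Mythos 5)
The paper does not prove this statement; it is quoted verbatim from \citet{savarese19a}, so there is no internal proof to compare against, and your argument has to stand on its own. It does. The reduction to the variational problem over pairs $(\mu_+,\mu_-)$ of finite signed measures subject to $\mu_+ + \mu_- = f''$ and $\mu_+(\reals)=f'(\infty)$ (hence $\mu_-(\reals)=-f'(-\infty)$) is the correct normal form for the one-dimensional representational cost, and the two triangle inequalities give the lower bound $\max\paren{\int|f''|,\,|s|}$ with $s:=f'(\infty)+f'(-\infty)$ cleanly. Your upper-bound construction is also right, and the case split is exactly the crux: when $|s|\le p+q=\int|f''|$, the feasibility computation (the range of $\alpha p-\beta q$ over $[0,1]^2$ is $[-q,p]$, which contains $(s+p-q)/2$ precisely when $|s|\le p+q$) shows a sign-respecting reallocation of the Jordan decomposition meets the endpoint constraint at cost $p+q$; when $|s|>p+q$, the canceling atomic pair of mass $\lambda=(|s|-p-q)/2$ keeps $\mu_++\mu_-=f''$, fixes the endpoint constraint, and costs exactly $|s|$, so it does not overshoot the lower bound. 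The closing inequality via the fundamental theorem of calculus at an arbitrary $x_\ast$ is also correct. Two points you use implicitly and should state: (i) the mutual singularity of $\mu^+$ and $\mu^-$ is what makes $\|\alpha\mu^+-\beta\mu^-\|_{\mathcal M}=\alpha p+\beta q$ an equality rather than a bound, so the Case~1 cost is exactly $\int|f''|$; and (ii) the statement tacitly assumes $f''$ is a finite signed measure and the limits $f'(\pm\infty)$ exist, otherwise both sides are $+\infty$ and the identity is vacuous. With those caveats recorded, this is a complete and self-contained derivation of the cited result.
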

%\medskip
In higher dimension, we follow a three-step procedure to compute $\|\cdot\|_{\mathcal R}$. 
%%%%%%%%%%%%%%%%%%%%%%%%%%%%%%%%%%%%%%%%%%%%%%%%%%%%%%%%%%%%%%%%%%%%%%%%
\subsection{A universal three-step recipe for \texorpdfstring{$\|f\|_{\mathcal{R}}$}{|f|R}}
%%%%%%%%%%%%%%%%%%%%%%%%%%%%%%%%%%%%%%%%%%%%%%%%%%%%%%%%%%%%%%%%%%%%%%%%
\eqnref{eq: slice} and linearity yield the following
computational pattern, which we employ in this work.

\begin{enumerate}%[label=Step\,\arabic*., itemsep=2pt]
\item \textbf{Fourier transform.}
      Obtain (exactly or up to an explicit bound) the Fourier transform
      \(\widehat{f}(\xi)\).

\item \textbf{One-dimensional Radon slice.}\\
      Substitute \(\widehat{f}(\omega\beta)\) into \eqref{eq: slice}
      and differentiate once more in \(t\):
      $$
        \partial_{t}^{d+1}\mathcal{R} f(\beta,t)
        =(2\pi)^{(1-d)/2}
        \int_{\mathbb{R}}
            (i\omega)^{d+1}
            e^{i\omega t}\,
            \widehat{f}(\omega\beta)\,d\omega .
      $$

\item \textbf{\(L^{1}\)-norm of the \(\,(d\!+\!1)\)st derivative.}\\
      Integrate the absolute value over
      \(t\in\mathbb{R}\) and \(\beta\in\mathbb{S}^{d-1}\),
      applying Fubini/Tonelli and any required bounds on
      \(\widehat{f}\) to obtain
      \(
        \|f\|_{\mathcal{R}}
      \).
\end{enumerate}

\noindent
Intuitively, Step 1 encodes geometric information in frequency space,
Step 2 converts that information into directional line integrals, and
Step 3 aggregates the variation of these integrals to yield the Radon
BV norm.  The recipe is agnostic to the specific form of \(f\); it
applies verbatim to the hard tree, ramp-smoothed, sigmoid-smoothed and
Gaussian-smoothed models introduced earlier, differing only in the
bounds used for \(\widehat{f}\).
\iffalse
\paragraph{Technical remarks.}\akash{check what all can be retained from these ones}
\begin{itemize}[leftmargin=*]
\item
Absolute integrability of
\(\widehat{f}(\omega\beta)(i\omega)^{d+1}\) in \(\omega\) (or an
appropriate majorant) guarantees the validity of Fubini’s theorem and
justifies exchanging the order of integration in Step 3.
\item
For functions with polynomially bounded spectra (e.g.\ ramp or sigmoid
smoothings) we use Plancherel and generalized Parseval identities to
control the inner integral; for functions with rapidly decaying spectra
(e.g.\ Gaussian smoothing) classical Hermite–Gaussian bounds are
sufficient.
\item
Because \(\mathcal{R}\) is injective and \(\Lambda^{d-1}\) is a Fourier
multiplier, finiteness of \(\|f\|_{\mathcal{R}}\) implies a specific
band-limiting behaviour that we exploit in approximation results
later in the paper.
\end{itemize}
\fi

This formalises the procedure implicit in previous discussions: each
$||\cdot||_\cR$-norm computation reduces to a Fourier bound followed by a one-D
integration in the projection variable~\(t\).

\begin{figure}[t!]
  \centering
  \begin{minipage}[t]{1.1\linewidth}
    \hspace{-20mm}
    \centering
    \includegraphics[width=\linewidth]{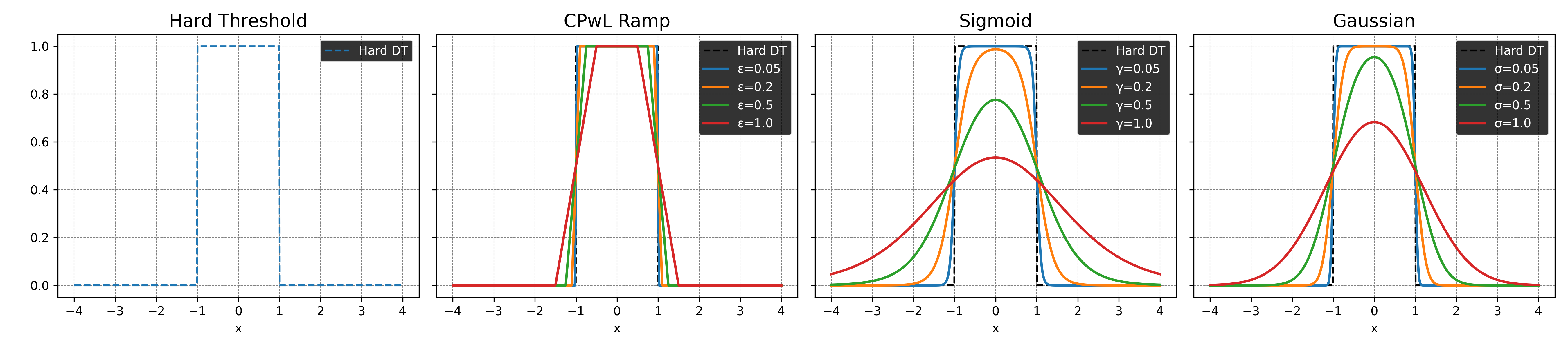}
    % \vspace{0.3em}
    %{\small\textbf{(a)} Illustration of the three smoothing schemes.}
  \end{minipage}\hfill
  % \begin{minipage}[t]{0.4\linewidth}
  %   \centering
  %   \includegraphics[width=\linewidth]{rtvvswidth.png}
  %   \vspace{0.3em}
  %   {\small\textbf{(b)} Width–$\mathcal{R}\mathrm{TV}$ frontier }
  % \end{minipage}
  \caption{Plot visualizes how ramp, sigmoidal, and Gaussian smoothings soften a one-dimensional hard threshold.}
  \label{fig:smoothing}
\end{figure}

%%%%%%%%%%%%%%%%%%%%%%%%%%%%%%%%%%%%%%%%%%%%%%%%%%%%%%%%%%%%%%%%%%%%%%%%
% Problem setup
%%%%%%%%%%%%%%%%%%%%%%%%%%%%%%%%%%%%%%%%%%%%%%%%%%%%%%%%%%%%%%%%%%%%%%%%

%%%%%%%%%%%%%%%%%%%%%%%%%%%%%%%%%%%%%%%%%%%%%%%%%%%%%%%%%%%%%%%%%%%%%%%%
% Problem setup (detailed)
%%%%%%%%%%%%%%%%%%%%%%%%%%%%%%%%%%%%%%%%%%%%%%%%%%%%%%%%%%%%%%%%%%%%%%%%

% \paragraph{Radon total-variation norm.}
% For \(f\in L^{\infty,1}(\mathbb{R}^{d})\) let
% \(\mathcal{R}f(\beta,t)
%    =\int_{\beta^{\!\top}\x=t} f(\x)\,ds(\x)\)
% be its Radon transform, where
% \(\beta\in\mathbb{S}^{d-1},\,t\in\mathbb{R}\).
% Following \citet{Ongie2020A,Parhi2020BanachSR} we define the
% second-order Radon bounded-variation space
% \[
% \RBV(\mathbb{R}^{d})
%    :=\bigl\{f\in L^{\infty,1}(\mathbb{R}^{d})
%       :\|f\|_{\mathcal{R}}<\infty\bigr\},
% \qquad
% \|f\|_{\mathcal{R}}
%    :=c_d\bigl\|\partial_t^2\Lambda^{d-1}\mathcal{R}f
%       \bigr\|_{\mathcal{M}(\mathbb{S}^{d-1}\!\times\!\mathbb{R})},
% \]
% with \(c_d^{-1}=2(2\pi)^{d-1}\) and
% \(\Lambda^{d-1}=(-\partial_t^{2})^{(d-1)/2}\).
% The norm \(\|\cdot\|_{\mathcal{R}}\) coincides with the minimum-width,
% infinite-neuron ReLU network “\(\mathcal{R}\)-norm’’ of
% \citet{Ongie2020A}.  
% Subsequent sections quantify how each smoothing scheme affects
% \(\|f\|_{\mathcal{R}}\).

% !TEX root = MAIN.tex
\section{Approximation of Hard-Threshold Decision Trees via Shallow Networks}\label{sec:hard}

%In \secref{sec:setup}, we discussed a concrete formulation to compute the $\tv$ for a given function in \eqnref{eq: RNormDef}. In this section, we show that the norm turns out to be unbounded in the case of hard-threshold decision trees as stated in \thmref{thm:hard}. First, we illustrate the simple case of 1-dimensional decision trees, which are defined as step functions with constant values in $\curly{0,1}$.
In \secref{sec:setup} we introduced the Radon total‐variation norm $\|\cdot\|_{\mathcal R}$ and its explicit form in \eqnref{eq: RNormDef}.  We now establish that this norm is \emph{unbounded} for hard-threshold decision trees (\thmref{thm:hard}).  We start with the one–dimensional setting, where a decision tree reduces to a step function taking values in $\curly{0,1}$.

%\paragraph{1-D decision trees} 
Consider the step function in single dimension denoted as $f_{\sf{step}}: \reals \to \reals$, defined as
\begin{align*}
    f_{\sf{step}}(x) = \sum_{i=1}^{n} c_i\cdot {1}\curlybracket{x \in (z_i, z_{i+1})}
\end{align*}
for given set of scalars $-\infty < z_1 \le z_2 \le \ldots \le z_N < \infty$.

\begin{lemma}\label{lem:step}
    $\rtv{f_{\sf{step}}}$ is unbounded.\vspace{-3mm}
\end{lemma}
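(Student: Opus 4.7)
The plan is to apply the 1D Savarese formula stated above: by the definition of $f_{\sf{step}}$, the function is compactly supported and vanishes outside $[z_1, z_N]$, so the boundary term $|f_{\sf{step}}'(\infty) + f_{\sf{step}}'(-\infty)|$ is zero. It therefore suffices to show that the distributional second derivative has infinite total variation mass, i.e., the $\mathcal{M}$-norm of $f_{\sf{step}}''$ is $+\infty$. I implicitly assume $f_{\sf{step}}$ is non-trivial so that at least one jump $\Delta_i := c_i - c_{i-1}$ (with the conventions $c_0 = c_{n+1} = 0$) is nonzero; otherwise the claim is vacuous.

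Computing in the sense of tempered distributions, the piecewise-constant structure of $f_{\sf{step}}$ yields
\begin{align*}
    f_{\sf{step}}' \;=\; \sum_{i} \Delta_i\, \delta_{z_i}, \qquad
    f_{\sf{step}}'' \;=\; \sum_{i} \Delta_i\, \delta'_{z_i},
\end{align*}
a finite linear combination of derivatives of Dirac masses, which is a distribution of order $1$ rather than a Radon measure. To make the failure quantitative, I argue by contradiction via mollification. Suppose $\rtv{f_{\sf{step}}} < +\infty$, so that $f_{\sf{step}}''$ is realized by a finite signed Radon measure $\mu$. Fix a standard bump $\rho \in C_c^\infty(\reals)$ with $\int \rho = 1$, set $\rho_\eta(x) := \eta^{-1}\rho(x/\eta)$, and form the smooth mollification $f_\eta := f_{\sf{step}} * \rho_\eta$. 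Then $f_\eta'' = \mu * \rho_\eta$, and Young's inequality gives the uniform bound $\|f_\eta''\|_{L^1} \le \|\mu\|_{\mathcal{M}}$ for every $\eta > 0$.

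On the other hand, applying commutation of differentiation with convolution to the distributional identity above yields $f_\eta''(x) = \sum_i \Delta_i\, \rho_\eta'(x - z_i)$. For $\eta$ smaller than half the minimal spacing between distinct knots, these bumps are disjointly supported, so
\begin{align*}
    \|f_\eta''\|_{L^1} \;=\; \Big(\sum_i |\Delta_i|\Big)\, \eta^{-1}\, \|\rho'\|_{L^1} \;\xrightarrow[\eta \to 0]{}\; +\infty,
\end{align*}
contradicting the uniform bound and forcing $\rtv{f_{\sf{step}}} = +\infty$. The main obstacle here is conceptual rather than computational: interpreting ``$\int |f''|$'' for a non-smooth $f$ requires pinning down that $f''$ must exist as a finite measure for $\rtv{f}$ to be finite. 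The mollification route sidesteps any direct duality computation against test functions and instead exposes a concrete $\eta^{-1}$ blow-up in the smoothed $L^1$ norm of $f_\eta''$, which is what obstructs any representation of $f_{\sf{step}}''$ as a finite Radon measure.
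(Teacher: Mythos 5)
Your proof is correct and follows essentially the same route as the paper's: both identify $f_{\sf{step}}''$ as a finite combination of dipoles $\delta'_{z_i}$ and exhibit the $\eta^{-1}$ blow-up of the mollified $L^1$ norm. Your version is in fact slightly tighter than the paper's, since you frame the divergence as a contradiction with the uniform bound $\|f_\eta''\|_{L^1}\le\|\mu\|_{\mathcal M}$ from Young's inequality (and you dispose of the boundary term $|f'(\infty)+f'(-\infty)|$ explicitly), rather than informally assigning an infinite ``$L^1$ norm'' to $\delta'$ as the paper does.
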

\begin{proof}[Proof outline]
For \(d=1\),
\(\|f\|_{\mathcal R}= \int_{\mathbb R}\!|f''(x)|\,dx\).
Each jump at \(x=z_i\) yields
\(f''=c_i\,\delta'_{z_i}\), where
\(\delta'_{z_i}\) is a \emph{dipole distribution}
(\(\langle\delta'_{z_i},\varphi\rangle=-\varphi'(z_i)\)).
Approximating
\(\delta'_{z_i}\) by the mollifier
\(\delta'_{\varepsilon}(x-z_i)=\varepsilon^{-2}\psi'((x-z_i)/\varepsilon)\)
gives
\(\|\delta'_{z_i}\|_{L^{1}}
      =\lim_{\varepsilon\to 0}\varepsilon^{-2}\!\int|\psi'(u)|\,du
      =\infty\).
Hence\vspace{-2mm}
\[
  \int_{\mathbb R}|f''(x)|\,dx
  =\sum_{i=1}^{n}|c_i|\,
    \|\delta'_{z_i}\|_{L^{1}}
  =\infty,
\]
so the $\mathcal{R}\mathrm{TV}$ seminorm diverges.
\end{proof}
%The key issue here is that higher derivatives of $\delta(\cdot)$ leads to unbounded $\ell_1$ norm. Even in higher dimension, this can't be mitigated and thus the norm remains unbounded for hard-threshold decision trees in high dimensions. We state the main result on the unboundedness of $\tv$ in \thmref{thm:hard} with the proof deferred to the supplemental materials.
The divergence stems from the $(d\!+\!1)$-st derivative of an indicator, which contains derivatives of the Dirac delta distribution.  These derivatives have infinite total‐variation (equivalently, $\ell_1$) norm, so no amount of averaging can regularise them.  Because every axis-aligned decision tree contains a one–dimensional slice exhibiting the same pathology, the $\mathcal{R}\mathrm{TV}$ remains unbounded in any ambient dimension.  The formal statement is given in \thmref{thm:hard}; its proof is deferred to the supplemental materials.
% \caption{Effect of thresholding on the same 2-D slice. Each subplot shows the binary mask of the model’s positive region (green = TP, red = FP, blue = FN, gray = TN) with the ground-truth box outlined (dashed). Top row: fixed threshold $0.5$; bottom row: the IoU-optimal threshold $\tau^\star$ chosen on a validation split. Numbers above each subplot give the threshold used and the resulting test IoU; numbers below give $W$, $\mathcal{R}\mathrm{TV}=\tfrac12\big(\sum_i \lVert w_i\rVert_2^2+\sum_i a_i^2\big)$, and test MSE. IoU is $\mathrm{TP}/(\mathrm{TP}+\mathrm{FP}+\mathrm{FN})$ on the full 5-D test set. Tuning the threshold typically reduces both FP and FN along the box boundary, yielding higher IoU without changing the underlying network.}
% \paragraph{High-dimensional decision tree} Consider a high-dimensional $f: \reals^d \to \reals$ corresponding to a decision tree
% \begin{align*}
%     f_{\sf{DT}} = \mathbb{I}\{x \in A\}
% \end{align*}
% where $A \subset \reals^d$ such that every boundary of $A$ is axes-aligned to one of the eigendirections of $\reals^d$, i.e. one of $\be_1, \be_2,\ldots, \be_d$. Here, we consider the case where there are only two classes $\{0,1\}$.

\begin{theorem}\label{thm:hard}
    Consider the decision tree $f_{\sf{DT}}: \reals^d \to \reals$ such that $f_{\sf{DT}}(\x) := {1}\curly{\x \in A}$ for an axes aligned compact subset $A \subset \reals^d$. Then, 
    $\rtv{f_{\sf{DT}}}$ of the decision tree as defined is unbounded.
\end{theorem}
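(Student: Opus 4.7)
The plan is to apply the three-step recipe of Section~\ref{sec:setup} and reduce the claim to a Fourier lower bound on $\widehat{\mathbf{1}_A}$ along generic sphere directions. I first treat a single axis-aligned box $A = \prod_{j=1}^d [a_j,b_j]$; the general finite-union case follows by a genericity argument (after a small axis-aligned translation, the leading $1/\omega^d$ Fourier tails of distinct boxes cannot cancel on a positive-measure set of directions in $\mathbb S^{d-1}$).

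For a single box,
$$\widehat{f_{\sf{DT}}}(\xi) \;=\; (2\pi)^{-d/2}\prod_{j=1}^d \frac{e^{-i\xi_j a_j}-e^{-i\xi_j b_j}}{i\xi_j},$$
so for $\beta$ in the full-measure cone $\Sigma^* := \curly{\beta \in \mathbb S^{d-1}: \beta_j \ne 0 \text{ for all } j}$ one obtains $\limsup_{|\omega|\to\infty} |\omega|^d \,|\widehat{f_{\sf{DT}}}(\omega\beta)| > 0$ (the product of the $|\beta_j|$'s sits in the denominator, while the sine factors remain order-one along suitable subsequences). Via the Fourier slice identity $\widehat{\mathcal{R} f(\beta,\cdot)}(\omega) = \sqrt{2\pi}\,\widehat{f}(\omega\beta)$ and the fact that the $t$-Fourier multiplier of $\partial_t^2\Lambda^{d-1}$ is $\pm|\omega|^{d+1}$, the fiber distribution $\mu_\beta := \partial_t^2\Lambda^{d-1}\mathcal{R} f_{\sf{DT}}(\beta,\cdot)$ satisfies $\limsup_{|\omega|\to\infty}|\widehat{\mu_\beta}(\omega)| = +\infty$ for every $\beta \in \Sigma^*$. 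Since any finite signed Borel measure $\nu$ on $\mathbb R$ obeys $\|\widehat\nu\|_{L^\infty}\le\|\nu\|_{\mathcal M}/\sqrt{2\pi}$, $\mu_\beta$ is not a finite signed measure for any $\beta \in \Sigma^*$. Standard disintegration of signed measures on $\mathbb S^{d-1}\times\mathbb R$ then forces the product distribution $\mu := \partial_t^2\Lambda^{d-1}\mathcal{R} f_{\sf{DT}}$ itself to fail to be a finite measure (otherwise a.e.\ fiber would be), and hence $\|f_{\sf{DT}}\|_{\mathcal R} = +\infty$.

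The main technical obstacle is passing from the obvious pathology at the exact coordinate directions $\beta = e_j$, where $\mathcal{R} f_{\sf{DT}}(e_j,\cdot)$ is literally a 1D step function and Lemma~\ref{lem:step} applies verbatim, to a set of positive sphere-measure of tilted directions. At a tilted $\beta$, $\mathcal{R} f_{\sf{DT}}(\beta,\cdot)$ becomes piecewise polynomial of degree $d-1$ and $C^{d-2}$-smooth, so one might naively worry that the singularity is concentrated on the measure-zero axis-aligned fibers. The Fourier-multiplier calculation above is designed to dispel this: the universal $|\omega|^{-d}$ decay of $\widehat{f_{\sf{DT}}}$ along every direction in $\Sigma^*$, combined with the $|\omega|^{d+1}$ amplification of $\partial_t^2\Lambda^{d-1}$, leaves an unbounded symbol on every generic fiber, so disintegration closes the argument uniformly in $d$. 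An alternative route would be an approximation-plus-lower-semicontinuity argument (checking that the ramp smoothings $f_{\sf{DT},\epsilon}$ satisfy $\|f_{\sf{DT},\epsilon}\|_{\mathcal R}\to\infty$ as $\epsilon\to 0$), but that would require a separate proof of lower semicontinuity for the Radon-domain signed-measure norm.
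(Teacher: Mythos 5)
Your argument is correct in substance but takes a genuinely different route from the paper. The paper's proof localizes at a coordinate direction: it writes the fiber $g_{\bm\beta}(u)=\mathrm{Vol}_{d-1}(A\cap\{\bm\beta^\top\x=u\})$, observes that at $\bm\beta_0=\mathbf e_1$ this is a discontinuous step function, mollifies, and shows that the $L^1$ mass of the $(d{+}1)$-st derivative blows up like $\epsilon^{-(d+1)}$ on a spherical cap of measure $\epsilon^{d-1}$, so the iterated integral exceeds $C'/\epsilon\to\infty$. You instead work on the full-measure cone $\Sigma^*$ of non-axis-parallel directions, use the exact product formula for $\widehat{\mathbf 1_A}$ to get $\limsup_{|\omega|\to\infty}|\omega|^d|\widehat f(\omega\beta)|>0$, note that the multiplier of $\partial_t^2\Lambda^{d-1}$ is $|\omega|^{d+1}$ in magnitude, and conclude that no generic fiber can be a finite measure because finite measures have bounded Fourier transforms. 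Your route is arguably cleaner: it avoids the paper's delicate quantitative claims about mollified slices (e.g.\ that every $\bm\beta$ in the cap retains a jump of height $H/2$ smoothed over width exactly $O(\epsilon)$) and it shows the pathology is present for \emph{almost every} direction rather than only near the axes. What you should still nail down: (i) the positivity of $\limsup_\omega\prod_j|\sin(\omega\theta_j)|$ for fixed nonzero $\theta_j$ — true, e.g.\ because $\omega\mapsto\prod_j\sin^2(\omega\theta_j)$ is a nonnegative, not identically zero almost periodic function with positive mean, but it deserves a line; (ii) the disintegration step, which is fine under the paper's own iterated-integral convention for $\|\cdot\|_{\mathcal R}$ but should be phrased as ``if $\mu$ were finite, Lebesgue differentiation against shrinking $\psi\otimes\varphi$ test functions would force a.e.\ fiber to be finite''; and (iii) the finite-union case, where your non-cancellation claim is asserted rather than proved — though the paper's own proof is no more complete on this point.
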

\begin{proof}[Proof outline]
% Consider a binary decision tree classifier
% \[
% f_{\sf{DT}}(\x) = \mathbb{I}\{\x \in A\}, 
% \quad 
% A = [a_1,b_1]\times\cdots\times[a_d,b_d]\subseteq\mathbb{R}^d,
% \]
% whose boundaries are axis-aligned with the canonical directions $\mathbf{e}_1,\dots,\mathbf{e}_d$.
First observe that the Fourier transform of the indicator over the axis-aligned box $A$ is  
\begin{align*}
    \hat{f}_{\sf{DT}}(\bm{\xi})
      \;=\;
      (2\pi)^{-d/2}
      \int_{A} e^{- \mathrm{i} \bm{\xi}^\top \x}\,d\x .
      % = (2\pi)^{-d/2} \mathbb{I}\curlybracket{\xi \in A}\,\delta^{(d)}(\xi)
\end{align*}
Consequently, for odd dimension $d$ the Radon-TV norm becomes
\[
\rtv{f_{\sf{DT}}}
  =
  c_d(2\pi)^{-(d-1)/2}
  \int_{\mathbb{S}^{d-1}}
  \int_{\mathbb{R}}
    \Bigl|
      \int_A
        \delta^{(d+1)}\!\bigl(t-\bm{\beta}^{\top}\x\bigr)\,d\x
    \Bigr|
  \,dt\,d\bm{\beta}.
\]

% Fix the direction $\bm{\beta}_0 = \mathbf{e}_1 \in \mathbb{S}^{d-1}$ and consider a neighborhood $\mathcal{B}(\bm{\beta}_0,\epsilon)\subseteq\mathbb{S}^{d-1}$ for small $\epsilon$. Define
% \[
% g_{\bm{\beta}}(u) := \int_A\delta(u-\bm{\beta}^{\top}\x)\,d\x = \text{Vol}_{d-1}(A\cap\{\x:\bm{\beta}^{\top}\x=u\}).
% \]
\vspace{-2mm} Fix the direction $\bm{\beta}_0=\mathbf{e}_1\in\mathbb{S}^{d-1}$ and let  
$\mathcal{B}(\bm{\beta}_0,\epsilon)\subseteq\mathbb{S}^{d-1}$ be the spherical cap of radius $\epsilon>0$.  
For each $\bm{\beta}\in\mathbb{S}^{d-1}$ define  
\[
  g_{\bm{\beta}}(u)
    := \int_A \delta\!\bigl(u-\bm{\beta}^{\top}\x\bigr)\,d\x .
\]
\vspace{-1mm} By the co-area formula \citep{Mattila_1995}, this can be rewritten as  
\[
  g_{\bm{\beta}}(u)
    = \int_{A\cap\{\x:\bm{\beta}^{\top}\x=u\}}
        \frac{1}{\|\bm{\beta}\|}\,d\sigma(\x)
    \;=\;
    \text{Vol}_{d-1}\!\bigl(A\cap\{\x:\bm{\beta}^{\top}\x=u\}\bigr),
\]
where $d\sigma$ denotes the $(d-1)$-dimensional Hausdorff measure on the hyperplane \(\{\x:\bm{\beta}^{\top}\x=u\}\).

For $\bm{\beta}_0=\mathbf{e}_1$ this is the step function  
\[
g_{\mathbf{e}_1}(u)=
\begin{cases}
0, & u\notin[a_1,b_1],\\[3pt]
\text{Vol}_{d-1}\bigl([a_2,b_2]\times\cdots\times[a_d,b_d]\bigr), & u\in(a_1,b_1).
\end{cases}
\]
Hence $g_{\mathbf{e}_1}$ has sharp jumps at $u=a_1$ and $u=b_1$. Perturbing to $\bm{\beta}=\bm{\beta}_0+\Delta$ with $\|\Delta\|\le\epsilon$ smooths these jumps over an $O(\epsilon)$ interval, but the slope grows rapidly.  In particular, for sufficiently small $\epsilon>0$ there exists \(u_\epsilon\approx a_1\) such that
\[
  \bigl|g_{\bm{\beta}_0+\Delta}^{(d+1)}(u_\epsilon)\bigr|
  \;\approx\;
  \frac{C}{\epsilon^{d+1}}
\]
for some constant \(C>0\).
Integrating over a window of width $\epsilon$ yields  
\[
  \int_{\mathbb{R}}
    \bigl|g_{\bm{\beta}_0+\Delta}^{(d+1)}(u)\bigr|
  \,du
  \;\ge\;
  \frac{C'}{\epsilon^{d}},\,\, \textnormal{with } C'>0.
\]

Now integrate this lower bound over the cap $\mathcal{B}(\bm{\beta}_0,\epsilon)$, whose surface measure scales as $\epsilon^{d-1}$:
\[
  \int_{\mathcal{B}(\bm{\beta}_0,\epsilon)}
    \int_{\mathbb{R}}
      \bigl|g_{\bm{\beta}}^{(d+1)}(u)\bigr|
    \,du
  \,d\bm{\beta}
  \;\ge\;
  \epsilon^{d-1}\,\frac{C'}{\epsilon^{d}}
  \;=\;
  \frac{C'}{\epsilon}.
\]
Because this bound holds for every sufficiently small $\epsilon$, letting $\epsilon\to 0$ forces $\rtv{f_{\sf{DT}}}=\infty$.

The same argument applies to any canonical direction, completing the proof for general axis-aligned decision trees.
\end{proof}

\begin{remark}[Implication for shallow networks]
Any single‐hidden‐layer ReLU network whose weights are chosen so that the
induced function has bounded Radon‐TV cannot approximate a hard-threshold
decision tree to arbitrary accuracy. Additional depth or unbounded weight growth is necessary.
\end{remark}
%------------------------------------------------------------------------------
%  RTV diverges for an axis-aligned hard-threshold decision tree
%------------------------------------------------------------------------------

% --- RTV formula -------------------------------------------------------------
% --- One-dimensional slices --------------------------------------------------

% !TEX root = MAIN.tex
%------------------------------------------------------------------
\section{Approximation of Smoothed Decision Trees via Shallow Networks}\label{sec:smooth}

Section~\ref{sec:hard} showed that the Radon total-variation ($\mathcal{R}\mathrm{TV}$) norm is infinite for hard-threshold decision trees. At first sight this is puzzling: the universal approximation theorem guarantees that even a \emph{shallow} neural network can approximate any continuous function to arbitrary accuracy. The catch is the discontinuity---no finite $\mathcal{R}\mathrm{TV}$ ball, regardless of width, can capture a step. We therefore ask whether \emph{smoothing} the tree reduces the norm.

\subsection{Piecewise-linear (ramp) smoothing}

Equation~\eqref{eq:rampdt} replaces each hard split by a centred ramp, producing a continuous piecewise-linear function.  The behaviour differs sharply between one- and higher-dimensional domains.

\paragraph{The one--dimensional case.}
For $d=1$, each factor $\rho_\epsilon(w_i x+b_i)$ has breakpoints at
\[
t_i^\pm := \frac{-b_i \pm \epsilon/2}{w_i},\qquad i=1,\dots,D,
\]
(with the understanding that $t_i^-<t_i^+$ after ordering).
Let $s_1<\cdots<s_M$ be the distinct sorted points in $\{t_i^-,t_i^+\}_{i=1}^D$
so that $M\le 2D$.
Assume the transition intervals $(t_i^-,t_i^+)$ are pairwise disjoint, so that
on each open interval $(s_k,s_{k+1})$ at most one factor is in its linear regime.
Then $f_{{\sf DT},\epsilon}(x)=\prod_{i=1}^D \rho_\epsilon(w_i x+b_i)$ is continuous piecewise-linear,
hence affine on each $(s_k,s_{k+1})$ with slope $m_k$.
Writing $c_0:=f_{{\sf DT},\epsilon}(x_0)$ for some $x_0<s_1$ and
$c_k:=m_k-m_{k-1}$ for $k=1,\dots,M$, we obtain
\[
f_{{\sf DT},\epsilon}(x)=c_0+\sum_{k=1}^{M} c_k\,[x-s_k]_+.
\]
Hence a single-hidden-layer ReLU network with $M$ units represents $f_{{\sf DT},\epsilon}$
exactly on~$\mathbb R$; in particular $f_{{\sf DT},\epsilon}\in \rbv{\mathbb R}$.

% \paragraph{The one–dimensional case.}
% For $d=1$ a depth-\(D\) ramp tree has exactly \(2D\) kink points
% \[
%    s_{1}<s_{2}<\dots<s_{2D},
%    \qquad
%    s_{2i-1/2} \;=\; -\frac{b_i}{w_i}\;\mp\;\frac{\epsilon}{2w_i},
%    \quad i=1,\dots,D,
% \]
% and is affine on each open interval \((s_k,s_{k+1})\) with slope \(m_k\).
% Writing $c_0:=f_{\sf{DT},\epsilon}(x_0)$ for some $x_0<s_1$ and 
% \(c_k:=m_k-m_{k-1}\) for \(k=1,\dots,2D\), we obtain
% \[
%    f_{\sf{DT},\epsilon}(x)
%      \;=\;
%      c_0+\sum_{k=1}^{2D} c_k\,[x-s_k]_{+}.
% \]
% Hence a single-hidden-layer ReLU network with \(2D\) units (unit input weights, biases \(-s_k\), and output weights \(c_k\)) represents the ramp tree \emph{exactly} on~$\mathbb R$; the function lies in $\rbv{\mathbb R}$.\vspace{-1mm}
\paragraph{The multi-dimensional case.}
When $d>1$ the situation changes.
For continuous piecewise-linear (CPwL) surrogates with compact support,
the $\mathcal{R}\mathrm{TV}$ norm is typically infinite as soon as the function has kinks in more than
one direction. In particular, \citet{Ongie2020A} show that a generic compactly supported CPwL
function has $\|f\|_{\mathcal{R}}=+\infty$ whenever its boundary normals are not all parallel.
Thus, any ramp-smoothed construction that induces split boundaries in two distinct directions
(e.g., uses two different coordinates) will generally have infinite $\mathcal{R}$-norm; the main
exception is the essentially one-dimensional ridge case where all split normals are parallel, which
reduces to the $d=1$ setting.
%------------------------------------------------------------------

% \paragraph{The multi-dimensional case.}
% When $d>1$ the situation changes.  
% If $D\le 2$ and the normals \(\w_1,\w_2\) are parallel, the RTV remains finite.
% For $D\ge 3$ the ramp tree generally has infinite Radon norm.
% The following result from \citet{Ongie2020A} applies.

\begin{proposition}[Proposition 5-(a) of \citealp{Ongie2020A}]\label{prop:parallel}
Suppose \( f : \mathbb{R}^d \to \mathbb{R} \) is a continuous piecewise linear function with compact support. If at least one of the boundary normals is not parallel with every other boundary normal.  Then \( f \) has infinite \(\mathcal{R}\)-norm. \vspace{-1mm}
% \begin{enumerate}[label=(\alph*)]
%     \item at least one of the boundary normals is not parallel with every other boundary normal, or\
%     \item \( f \) is everywhere convex (or everywhere concave) when restricted to its support, and at least one of the inner boundary normals is not parallel with all outer boundary normals.
% \end{enumerate}
% Then \( f \) has infinite \(\mathcal{R}\)-norm.
\end{proposition}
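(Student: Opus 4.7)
I would argue by contradiction using the representer characterisation of the Radon BV space: whenever $\|f\|_{\mathcal{R}}<\infty$, the function $f$ admits a finite-mass ridge representation $f(x)=q(x)+\int_{\mathbb{S}^{d-1}\times\mathbb{R}}(\omega^{\top}x-b)_{+}\,d\mu(\omega,b)$ with $q$ affine and $\|\mu\|_{\mathcal{M}}\asymp\|f\|_{\mathcal{R}}$ \citep{Ongie2020A,Parhi2020BanachSR}. The three-step recipe of Section~\ref{sec:setup} further identifies (the even part of) $\mu$ with $\partial_{t}^{2}\Lambda^{d-1}\mathcal{R}f$, so the geometry of the kinks of $f$ is directly encoded in the support of $\mu$. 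The plan is then to reduce $\mu$ to a finite atomic measure supported on the boundary-normal data, use compact support to extract a linear system on the atomic weights, and show that this system is infeasible once two of the $\nu_{i}$ are non-parallel.

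\textbf{Steps 1--2 (atomic reduction).} For CPwL $f$, the distributional Hessian is a sum of single-layer measures supported on the kink hyperplanes $H_{i}=\{\nu_{i}^{\top}x=b_{i}\}$, with densities given by the jumps of the normal derivatives. The Radon transform $\mathcal{R}f(\beta,\cdot)$ is piecewise polynomial in $t$ for each $\beta$, with $t$-singularities only on the codimension-one curves $t=b_{i}\nu_{i}^{\top}\beta$. Applying $\Lambda^{d-1}$ and two further $t$-derivatives concentrates $\partial_{t}^{2}\Lambda^{d-1}\mathcal{R}f$ on the discrete set $\{(\pm\nu_{i},\pm b_{i})\}\subset\mathbb{S}^{d-1}\times\mathbb{R}$, so a finite representing $\mu$ must reduce to $\mu=\sum_{i}c_{i}\delta_{(\nu_{i},b_{i})}$. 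Hence $f(x)=q(x)+\sum_{i=1}^{n}c_{i}(\nu_{i}^{\top}x-b_{i})_{+}$.

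\textbf{Step 3 (contradiction via compact support).} The arrangement $\{H_{i}\}$ partitions $\mathbb{R}^{d}$ into finitely many polyhedral cells; its unbounded cells correspond to open chambers of the induced great-hypersphere arrangement on $\mathbb{S}^{d-1}$, labelled by realisable sign vectors $s\in\{-1,+1\}^{n}$. On the cell with sign vector $s$, $f$ equals the affine map $x\mapsto q(x)+\sum_{i:s_{i}=+1}c_{i}(\nu_{i}^{\top}x-b_{i})$, and compact support forces this affine map to vanish. Pick $i,j$ with $\nu_{i}\not\parallel\nu_{j}$; the four sign combinations on $\{i,j\}$ are each realisable on $\mathbb{S}^{d-1}$, so the $(-,-)$ chamber forces $q\equiv 0$, and then the $(+,-)$ and $(-,+)$ chambers force $c_{i}=c_{j}=0$. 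Iterating over all pairs of normals (invoking standard realisability facts for hyperplane arrangements on the sphere) collapses every $c_{k}$ to zero, giving $f\equiv 0$ and contradicting the hypothesis that $f$ has non-trivial boundary normals.

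\textbf{Main obstacle.} The most delicate step is the atomic reduction: one must rule out a diffuse component in $\mu$ even though $\partial_{t}^{2}\Lambda^{d-1}\mathcal{R}f$ is a distribution rather than a classical function. A clean way is to invoke uniqueness of $\mu$ (modulo the even/odd symmetrisation $\omega\mapsto-\omega$) as the $\Lambda^{d-1}$-filtered second $t$-derivative of $\mathcal{R}f$ and observe that any diffuse component would smooth some directional slice of $f$, contradicting the CPwL structure. If this proves awkward, the cleanest fallback is a direct lower bound on $\|f\|_{\mathcal{R}}$ via integration of $|\partial_{t}^{2}\Lambda^{d-1}\mathcal{R}f(\beta,t)|$ over a shrinking spherical cap centred at $\nu_{i}$: the persistent kink of $f$ along $H_{j}$ with $\nu_{j}\not\parallel\nu_{i}$ forces a divergence of exactly the $\epsilon^{-1}$ type exhibited in the proof of Theorem~\ref{thm:hard}.
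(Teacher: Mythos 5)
First, note that the paper does not prove this statement itself; it is quoted from \citet{Ongie2020A}, so the relevant comparison is with the argument behind that citation (which is a shrinking-cap divergence argument of the same flavour as the proof of Theorem~\ref{thm:hard} in this paper). Your primary route does not work as written, and the gap is in the atomic-reduction step.

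\textbf{The gap in Steps 1--2.} You claim that for a CPwL $f$ the distribution $\partial_t^2\Lambda^{d-1}\mathcal{R}f$ is concentrated on the discrete set $\{(\pm\nu_i,\pm b_i)\}$, so that a finite representing measure $\mu$ must be a finite sum of atoms at the kink data. This is false for compactly supported CPwL functions. For each fixed $\beta$, the slice $t\mapsto\mathcal{R}f(\beta,t)$ is piecewise polynomial with breakpoints at \emph{every} vertex projection $t=\beta^\top v$, not only at $t=b_i$ when $\beta=\pm\nu_i$: the lower-dimensional faces (edges, vertices) of the polyhedral complex have conormal cones that sweep out essentially all of $\mathbb{S}^{d-1}$, so the singular support of $\partial_t^2\Lambda^{d-1}\mathcal{R}f$ contains the full graphs $\{(\beta,\beta^\top v):\beta\in\mathbb{S}^{d-1}\}$ rather than isolated points. (For even $d$ the non-locality of $\Lambda^{d-1}$ makes the support claim even further from true.) Consequently, even granting $\|f\|_{\mathcal{R}}<\infty$, you cannot conclude $f=q+\sum_i c_i(\nu_i^\top x-b_i)_+$; a diffuse component of $\mu$ spread over these graphs is exactly what your Step 3 cannot rule out. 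Note also that if your reduction were valid, the non-parallel hypothesis would be superfluous in $d\ge2$ (any compactly supported finite-width ReLU network is forced to vanish by looking at the unbounded portion of each kink hyperplane), which is a sign the reduction is doing all the work without justification. Separately, Step 3 as written is not correct: the sign vectors index all $n$ units, so the ``$(-,-)$ chamber'' does not isolate $q$, and realisability of the specific chambers you need is not established; this part is fixable (e.g., by evaluating the jump of $\nabla f$ on the unbounded part of each $H_i$), but only after the unjustified atomic reduction.

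\textbf{What would work.} Your one-sentence fallback is the correct strategy and is essentially how both \citet{Ongie2020A} and Theorem~\ref{thm:hard} of this paper argue: for $\beta$ at angle $\epsilon$ from the distinguished normal $\nu_i$, the first $t$-derivative of $\mathcal{R}f(\beta,\cdot)$ transitions by a fixed amount over a window of width $O(\epsilon)$, so $\|\partial_t^{d+1}\mathcal{R}f(\beta,\cdot)\|_{L^1_t}\gtrsim\epsilon^{-(d-1)}$, and integrating against the cap measure $\epsilon^{d-2}\,d\epsilon$ yields a logarithmic divergence. The missing ingredient in your sketch is precisely where hypothesis (a) enters: the size of that persistent transition is the \emph{net} normal-derivative jump $\int_{H_i}[\partial_{\nu_i}f]\,d\mathcal{H}^{d-1}$, which could vanish by cancellation between coplanar or parallel faces; the assumption that $\nu_i$ is not parallel to any other boundary normal rules this cancellation out. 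Without that observation the fallback does not close either.
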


In a ramp-smoothed tree the hyperplanes
\(
  \{\x\!:\,(\w_i^{\top}\x+b_i)/\epsilon+{\tfrac12}=0\}
\)
intersect the planes \(\{(\bw, b)=(\mathbf 0,0)\}\) and \(\{(\bw, b)=(\mathbf 0,1)\}\) along $(d-1)$-dimensional faces, so condition (a) is met whenever the ambient dimension satisfies \(d>1\).  Moreover, constructing a continuous approximation to a hard threshold requires at least four ramps (\(D\ge 4\)), ensuring the hypothesis of Proposition~\ref{prop:parallel}.  Consequently, for \(d>1\) and realistic depths the Radon norm of a piecewise-linear smoothed tree is still infinite.

%------------------------------------------------------------------
\subsection{Smoothed Decision Trees}\label{subsec:smoother}

We now investigate two differentiable smoothing schemes that exhibit markedly different spectral decay: the \emph{sigmoidal} and the \emph{Gaussian} smoothings.  For the sigmoidal case we show that the Radon norm remains infinite (\thmref{thm:sigmoid}), whereas Gaussian smoothing yields a finite norm whose magnitude depends explicitly on the ambient dimension (\thmref{thm:gaussian}).\footnote{For one–dimensional signals certain smoothings do produce a bounded Radon norm; the argument is analogous to the $d=1$ ramp analysis and is omitted for brevity.}

%------------------------------------------------------------------
\paragraph{Sigmoidal smoothing.}
For any $\gamma>0$ and shift $b\in\mathbb R$ the Fourier transform of the scaled logistic
\[
   \sigma_\gamma(z)
     \;=\;
     \tfrac{1}{1+e^{-z/\gamma}}
     \;=\;
     \tfrac12+\tfrac12\tanh\!\bigl(z/(2\gamma)\bigr)
\]
is
\begin{equation}
    \widehat{\sigma_\gamma(\,\cdot+b)}(\omega)
       \;=\;
       e^{-\mathrm i\omega b}
       \Bigl[
         \frac{\pi}{2}\,\delta(\omega)
         +\frac{\mathrm i\,\gamma\pi}{\sinh(\pi\gamma\omega)}
       \Bigr],
    \qquad \omega\in\mathbb R.
    \label{eq:fsig}
\end{equation}
Using \eqnref{eq:fsig}, the Fourier transform of the sigmoidal \(D\)-split tree
\[
   f_{\sf{DT},\gamma}(\x)
     \;:=\;
     \prod_{i=1}^{D}\sigma_\gamma\!\bigl(\w_i^{\top}\x+b_i\bigr),
     \qquad
     \{\w_i\}_{i=1}^{D}\subset\mathbb R^{d},
\]
with orthonormal normals $\w_i$, is
\[
   \widehat{f_{\sf{DT},\gamma}}(\bm{\xi})
     \;=\;
     (2\pi)^{-(d-D)/2}\,
     \delta\!\bigl(P_{\mathcal S^{\perp}}\bm{\xi}\bigr)
     \prod_{i=1}^{D}
        e^{-\mathrm i\,b_i\eta_i}
        \Bigl[\tfrac{\pi}{2}\,\delta(\eta_i)
              +\tfrac{\mathrm i\,\gamma\pi}{\sinh(\pi\gamma\eta_i)}\Bigr],
     \quad
     \eta_i=\w_i^{\top}\bm{\xi},
\]
where $\mathcal S=\operatorname{span}\{\w_1,\dots,\w_D\}$.  Substituting this expression into the norm definition \eqnref{eq: RNormDef} yields the following theorem. We defer the proof to the supplemental materials.

\begin{theorem}\label{thm:sigmoid}
Fix a depth $D\ge 1$ and temperature $\gamma>0$, and let
\[
  f_{\sf{DT},\gamma}(\x)
     \;=\;
     \prod_{i=1}^{D}\sigma_\gamma\!\bigl(\w_i^{\top}\x+b_i\bigr),
     \qquad
     \{\w_i\}_{i=1}^{D}\ \text{orthonormal and distinct}.
\]
Then
\[
    \|f_{\sf{DT},\gamma}\|_{\mathcal R}
      =\frac{c_{d}}{\sqrt{2\pi}}\,
        (2\pi)^{-\frac{d-D}{2}}\,
        (\gamma\pi)^{D}
        \int_{\mathbb S^{d-1}}
        \int_{\mathbb R}
            |\omega|^{d+1}\,
            \mathbf 1_{\{\bm{\beta}\in\mathcal S\}}
            \prod_{i=1}^{D}
              \frac{1}
                   {\bigl|\sinh\bigl(\pi\gamma\lambda_i(\bm{\beta})\,\omega\bigr)\bigr|}
        \,d\omega\,d\sigma(\bm{\beta}),
\]
where $\lambda_i(\bm{\beta})=\w_i^{\!\top}\bm{\beta}$.  In particular, for every $D\ge 2$ the Radon norm diverges:
\(
   \|f_{\sf{DT},\gamma}\|_{\mathcal R}= \infty.
\)
\end{theorem}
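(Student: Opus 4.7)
The plan is to invoke the universal three-step recipe of Section~3 ($f\mapsto \widehat f\mapsto \partial_t^{d+1}\mathcal{R}f\mapsto \int|\cdot|$), using the orthonormality of $\{\w_1,\ldots,\w_D\}$ to separate the product cleanly in Fourier space. First, I would complete $\{\w_1,\ldots,\w_D\}$ to an orthonormal basis of $\mathbb{R}^d$ and change variables so that $f_{\sf{DT},\gamma}$ is a product of $D$ one-dimensional logistics in the first $D$ coordinates and is constant along the remaining $d-D$ directions. The 1D formula \eqref{eq:fsig} handles each logistic factor, and the $(d-D)$ constant directions contribute a Dirac factor at zero frequency, producing the stated distributional expression for $\widehat{f_{\sf{DT},\gamma}}(\bm{\xi})$ as a product of the 1D transforms in $\eta_i=\w_i^{\top}\bm{\xi}$ times $\delta(P_{\mathcal{S}^\perp}\bm{\xi})$.

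Next, substituting $\bm{\xi}=\omega\bm{\beta}$ into the Fourier slice identity \eqref{eq: slice} and differentiating $d+1$ times in $t$ brings the symbol $(i\omega)^{d+1}$ under the integral. Two simplifications then occur together: the factor $\delta(P_{\mathcal{S}^\perp}\omega\bm{\beta})$ restricts $\bm{\beta}$ to $\mathcal{S}\cap\mathbb{S}^{d-1}$ (away from $\omega=0$); and, when the product $\prod_{i=1}^{D}\bigl[\tfrac{\pi}{2}\delta(\omega\lambda_i(\bm{\beta}))+\tfrac{i\gamma\pi}{\sinh(\pi\gamma\omega\lambda_i(\bm{\beta}))}\bigr]$ is expanded into $2^{D}$ cross terms, every term containing at least one $\delta(\omega\lambda_i)$ is proportional to $\delta(\omega)$ and is therefore annihilated by the multiplier $(i\omega)^{d+1}$. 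Only the pure-$\sinh$ term survives. Taking absolute values and applying Fubini/Tonelli to the resulting non-negative integrand produces the closed form of $\|f_{\sf{DT},\gamma}\|_{\mathcal{R}}$ claimed in the theorem, with the prefactor $\tfrac{c_d}{\sqrt{2\pi}}(2\pi)^{-(d-D)/2}(\gamma\pi)^{D}$ collecting all normalising constants.

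For the divergence when $D\ge 2$, I would zoom into a neighbourhood of a direction $\bm{\beta}_\star=\w_1\in\mathcal{S}\cap\mathbb{S}^{d-1}$. Writing $\bm{\beta}=\sum_{j=1}^{D}\lambda_j\w_j$ with $\sum_j\lambda_j^{2}=1$, at $\bm{\beta}_\star$ one has $\lambda_1=1$ and $\lambda_j=0$ for $j>1$, so $\lambda_2$ is the coordinate that first vanishes. For fixed $\bm{\beta}$ close to $\bm{\beta}_\star$, I would split the $\omega$-integral at $|\omega|\sim 1/|\lambda_2|$: on the inner region Taylor-expand $\sinh(\pi\gamma\lambda_2\omega)\approx \pi\gamma\lambda_2\omega$ to extract a factor $|\lambda_2|^{-1}$ times a convergent 1D integral in a rescaled variable (the exponential decay in the $\lambda_1$-factor ensures integrability), while on the outer region the exponential growth of $\sinh(\pi\gamma\lambda_2\omega)$ makes the tail negligible. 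This shows the inner $\omega$-integral grows like $|\lambda_2|^{-1}$ as $\lambda_2\to 0$. Integrating $|\lambda_2|^{-1}$ against the surface measure on the $(D-1)$-sphere $\mathcal{S}\cap\mathbb{S}^{d-1}$ gives a codimension-one, logarithmically divergent contribution whenever $D\ge 2$, so $\|f_{\sf{DT},\gamma}\|_{\mathcal{R}}=\infty$.

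The main obstacle will be the distributional rigour of the second step. Because $\widehat{f_{\sf{DT},\gamma}}$ carries Dirac factors in both the $\mathcal{S}^\perp$ component and in each $\eta_i$, the substitution $\bm{\xi}=\omega\bm{\beta}$ and the claim that $(i\omega)^{d+1}$ annihilates every cross term must be justified by first pairing against Schwartz test functions and then extracting the pointwise formula by approximation. A secondary subtlety is the identification of $\int_{\mathbb{R}}|\partial_t^{d+1}\mathcal{R}f(\bm{\beta},t)|\,dt$ with the stated $\int_{\mathbb{R}}|\omega|^{d+1}\prod_i|\sinh(\pi\gamma\lambda_i(\bm{\beta})\omega)|^{-1}\,d\omega$; this uses the definite phase/sign structure of the surviving pure-$\sinh$ integrand so that the 1D inverse Fourier transform preserves the $L^{1}$-norm up to the stated constant.
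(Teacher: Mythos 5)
Your derivation of the closed form follows the paper's route essentially step for step: orthonormal completion and rotation so that the score separates into $D$ one-dimensional logistics times constants along $\mathcal{S}^\perp$, the 1D Fourier pair \eqref{eq:fsig}, the slice theorem with the multiplier $(i\omega)^{d+1}$ killing every cross term containing a $\delta(\omega\lambda_i)$, and Fubini to assemble the prefactor. Where you genuinely depart from the paper is the divergence argument for $D\ge 2$. The paper does \emph{not} analyze the closed-form $\omega$-integral asymptotically; it lower-bounds $\|\partial_t^{\,d+1}\mathcal{R}\{f_{\sf DT,\gamma}\}(\bm{\beta},\cdot)\|_{L^1_t}$ by duality, pairing against a band-limited test function $\phi$ with $\widehat\phi=\mathbf 1_{[0,\omega_0]}$, applying H\"older and Parseval to reduce to the truncated integral $\int_0^{\omega_0}\omega^{d+1}\prod_i|\sinh(\pi\gamma\omega\,\w_i^\top\bm{\beta})|^{-1}\,d\omega$, and then using $|\sinh z|\le 2|z|$ for $|z|\le 1$ to extract a single non-integrable factor $1/|\w_1^\top\bm{\beta}|$, whose integral over $\mathbb{S}^{d-1}$ diverges near the equator $\{\w_1^\top\bm{\beta}=0\}$. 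Your version instead localizes near $\bm{\beta}_\star=\w_1$ and tracks the $|\lambda_2|^{-1}$ blow-up of the inner integral; both arguments rest on the same small-argument singularity $1/|\sinh(\pi\gamma\lambda\omega)|\sim 1/(\pi\gamma|\lambda\omega|)$ producing a codimension-one non-integrable $1/|\lambda|$ over directions, and your asymptotics are correct (for $D\ge d+2$ the $\omega$-integral already diverges at $\omega=0$, which only strengthens the conclusion). The practical advantage of the paper's duality route is exactly the point you flag as your ``secondary subtlety'': it bounds the true $L^1_t$ norm from below without ever identifying $\int_{\reals}|\partial_t^{\,d+1}\mathcal{R}f(\bm{\beta},t)|\,dt$ with $\int_{\reals}|\omega|^{d+1}\prod_i|\sinh(\cdot)|^{-1}\,d\omega$ --- an identification that does not follow from Plancherel alone (the $L^1$ norms of a function and of its Fourier transform are not equal in general), so if you rely on the closed form literally as stated you should either justify that step separately or, for the $=\infty$ half, switch to the lower-bound-by-testing strategy, which makes the issue moot.
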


% \begin{proof}[Proof outline]
%     If the normals $\curlybracket{\w_i}$'s are not axes aligned, rotate an orthonormal basis so $Q^{\top}w_i=e_i$, then 
%     \begin{align}
%      \widehat{f_{\sf{DT},\gamma}}(\bm{\xi})=(2\pi)^{-(d-D)/2}\delta(\bm{\xi}_\perp)\prod_{i=1}^{D}e^{-ib_i\eta_i}\bigl[\tfrac{\pi}{2}\delta(\eta_i)+\tfrac{i\gamma\pi}{\sinh(\pi\gamma\eta_i)}\bigr], \quad \eta_i=\w_i^{\top}\bm{\xi}.\label{eq:dirac}
%     \end{align}
%     %where with $\eta_i=w_i^{\top}\bm{\xi}$
    
%     The Fourier–slice identity gives $\mathcal Rf(\bm{\beta},t)=(2\pi)^{-1/2}\!\int e^{i\omega t}\widehat f(\omega\bm{\beta})\,d\omega$; taking $\partial_t^{\,d+1}$ multiplies the integrand by $(i\omega)^{d+1}$, annihilating every $\delta(\eta_i)$.  
%     Now, using the $L^1$-Norm via Plancherel, and noting the Dirac parts in \eqnref{eq:dirac} vanish because $|\omega|^{d+1}\delta(\omega)=0$. Thus
% \[
%   \bigl\|\partial_t^{\,d+1}\mathcal Rf_{\sf{DT},\gamma}(\bm{\beta},\cdot)\bigr\|_{1}
%    =(2\pi)^{-\frac{d-D+1}{2}}\,
%      (\gamma\pi)^{D}\!\int_{\mathbb R}
%         |\omega|^{d+1}
%         \prod_{i=1}^{D}
%            \frac{1}{\bigl|\sinh(\pi\gamma\bm{\beta}_i\omega)\bigr|}
%      \,d\omega .
% \]
%     Now, integrating the equation over $\bm{\beta}$ yields the stated form of the $\rtv{f_{\sf{DT},\gamma}}$.
% \end{proof}
Note that for $D = 1$, as noted in the case of linear ramp smoothing, it is straight-forward to note that $\|f_{\sf{DT},\gamma}\|_{\mathcal R}$ is bounded.

\paragraph{Gaussian smoothing.}
Convolving the hard tree with an isotropic Gaussian rapidly suppresses high-frequency components: in the Fourier domain
\(
\widehat{f_\sigma}(\bm{\xi})=\widehat{f_{\sf{DT}}}(\bm{\xi})\,e^{-\sigma^{2}\|\bm{\xi}\|^{2}/2}
\),
so every factor of \(\|\bm{\xi}\|\) in the Radon norm integrand is offset by Gaussian decay.  
We state the main theorem that quantifies the gain, with the proof deferred to the supplemental materials.

\begin{theorem}\label{thm:gaussian}
Fix any bandwidth \(\sigma>0\).  
For the Gaussian smoothed tree 
\[
f_\sigma(\x)=\int_{\mathbb R^{d}}\!f_{\sf{DT}}(\y)\,
            G_\sigma(\x-\y)\,d\y,
\qquad 
G_\sigma(z)=(2\pi\sigma^{2})^{-d/2}e^{-\|z\|^{2}/(2\sigma^{2})},
\]
the we show
$
\|f_\sigma\|_{\mathcal R}\;\le\;
 C\,d^{\,1/2}\;
   \Bigl(\tfrac{\sqrt{2}\,e}{\sigma}\Bigr)^{\!d}\;
   Vol(A),$ for 
$d\ge1,$
with a universal constant \(C\le1.2\).
\end{theorem}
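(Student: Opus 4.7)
The plan is to apply the universal three-step recipe from Section~\ref{sec:setup}, using a key structural property: convolution with an isotropic Gaussian descends cleanly through the Radon transform. Since $\mathcal{R} G_\sigma(\beta,t)=g_\sigma(t):=(2\pi\sigma^{2})^{-1/2}e^{-t^{2}/(2\sigma^{2})}$ is the one-dimensional Gaussian (independent of $\beta$), writing $\mu_\beta(t):=\mathrm{Vol}_{d-1}(A\cap\{x:\beta^{\!\top}x=t\})=\mathcal{R} f_{\sf DT}(\beta,t)$, we have
\[
\mathcal{R} f_\sigma(\beta,t)=(\mu_\beta *_t g_\sigma)(t),\qquad \|\mu_\beta\|_{L^{1}(\mathbb{R})}=\mathrm{Vol}(A),
\]
by Fubini. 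This reduces the multidimensional estimate to a one-dimensional problem about the Gaussian kernel.

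The second step is Young's inequality. Since $\partial_t^{2}\Lambda^{d-1}$ (Fourier multiplier $-|\omega|^{d+1}$) is a pure-$t$ operator, it commutes with convolution in $t$ and can be moved entirely onto the Gaussian factor, giving the $\beta$-uniform bound
\[
\|\partial_t^{2}\Lambda^{d-1}\mathcal{R} f_\sigma(\beta,\cdot)\|_{L^{1}(\mathbb{R})}\;\le\;\mathrm{Vol}(A)\cdot\sigma^{-(d+1)}\,\|\partial_t^{2}\Lambda^{d-1}g_1\|_{L^{1}(\mathbb{R})},
\]
using the dilation $g_\sigma=\sigma^{-1}g_1(\cdot/\sigma)$. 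For odd $d$, $\Lambda^{d-1}$ is an integer derivative, and the identity $\partial_t^{n}g_1=(-1)^{n}\mathrm{He}_{n}\,g_1$ combined with the orthonormality $\int \mathrm{He}_n^{2} g_1\,dt=n!$ yields, via Cauchy--Schwarz, $\|\partial_t^{d+1}g_1\|_{L^{1}}\le\sqrt{(d+1)!}$. For even $d$, where $\Lambda^{d-1}$ is a non-local Riesz-type operator, I would instead apply the weighted Cauchy--Schwarz inequality $\|F\|_{L^{1}}\le\sqrt{2\pi\,\|F\|_{L^{2}}\|tF\|_{L^{2}}}$ to $F=\partial_t^{2}\Lambda^{d-1}g_1$; the two $L^{2}$ norms reduce to explicit Gaussian moments of $|\omega|^{d+1}$ and yield the same $\sqrt{(d+1)!}$-type scaling up to a harmless constant.

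Assembling these pieces, integrating the $\beta$-uniform estimate over $\mathbb{S}^{d-1}$ (surface area $2\pi^{d/2}/\Gamma(d/2)$) and multiplying by $c_d=(2(2\pi)^{d-1})^{-1}$ gives
\[
\|f_\sigma\|_{\mathcal{R}}\;\le\;\frac{\pi^{d/2}\sqrt{(d+1)!}}{(2\pi)^{d-1}\,\Gamma(d/2)}\cdot\mathrm{Vol}(A)\cdot\sigma^{-(d+1)}.
\]
Stirling's approximation for $\sqrt{(d+1)!}$ and $\Gamma(d/2)$, combined with the elementary identity $\pi^{d/2}/(2\pi)^{d-1}=2^{1-d}\pi^{1-d/2}$, collapses the $d$-dependent prefactor to the target form $C\,d^{1/2}(\sqrt{2}\,e/\sigma)^{d}$, leaving enough slack to absorb polynomial-in-$d$ factors into the universal constant $C\le 1.2$.

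The main obstacle is the treatment of even $d$: $\Lambda^{d-1}$ is a nonlocal fractional operator there, so the Hermite-polynomial shortcut is unavailable and one must carry out the weighted Cauchy--Schwarz argument, juggling three $\Gamma$-terms that must be combined with sharp cancellations to keep the constant tight. A secondary point of care is that the ``all derivatives on $g_\sigma$'' route gives $\sigma^{-(d+1)}$ scaling, which in low-dimensional or very-small-$\sigma$ regimes may need to be refined by redistributing one derivative back onto $\mu_\beta$ (bounding $\|\partial_t\mu_\beta\|_{\mathcal{M}}=2\max_t\mu_\beta$ via Ball's cube-slicing inequality and its axis-aligned-box generalizations). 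Finally, pinning down $C\le 1.2$ for \emph{all} $d\ge 1$ requires a direct numerical check at small $d\in\{1,2,3\}$, where Stirling's approximation is least sharp.
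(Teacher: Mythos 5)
Your proposal follows essentially the same route as the paper's proof: both reduce the per-direction estimate to a one-dimensional Gaussian--Hermite moment (the paper via Fubini after pushing the absolute value inside the integral over $A$, you via the equivalent Young's-inequality statement $\mathcal{R}f_\sigma(\beta,\cdot)=\mu_\beta*g_\sigma$ with $\|\mu_\beta\|_{L^1}=\mathrm{Vol}(A)$), both bound $\int e^{-u^2/2}|\mathrm{He}_{d+1}(u)|\,du$ by $\sqrt{2\pi}\sqrt{(d+1)!}$ via Cauchy--Schwarz and orthogonality, and both finish by integrating over $\mathbb{S}^{d-1}$ and applying Stirling to $\sqrt{(d+1)!}/\Gamma(d/2)$. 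The one point where you go beyond the paper is the even-$d$ case: the paper's appendix derivation is written only for odd $d$ (where $\partial_t^2\Lambda^{d-1}$ is an integer-order derivative), whereas your weighted Cauchy--Schwarz argument for the fractional operator is a genuine, and welcome, supplement. Your closing caveats about the $\sigma^{-(d+1)}$ versus $\sigma^{-d}$ scaling and the need for a numerical check at small $d$ are well taken -- the paper's own assembly exhibits the same tension between the appendix computation and the stated constant.
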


\FloatBarrier

% % !TEX root = MAIN.tex
\begin{figure}[t]
  \centering
  \begin{minipage}[t]{0.305\linewidth}
    \centering
    \includegraphics[width=\linewidth]{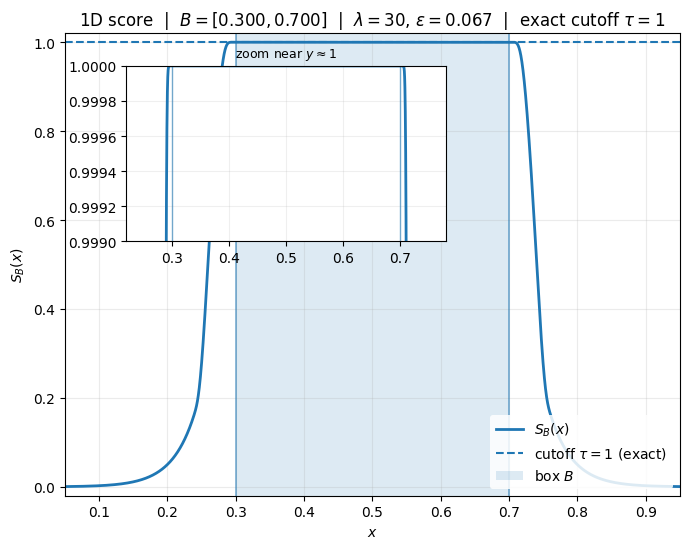}
    \vspace{0.3em}
    {\small\textbf{(a)} $d=1$.}
  \end{minipage}\hfill
  \begin{minipage}[t]{0.34\linewidth}
    \centering
    \includegraphics[width=\linewidth]{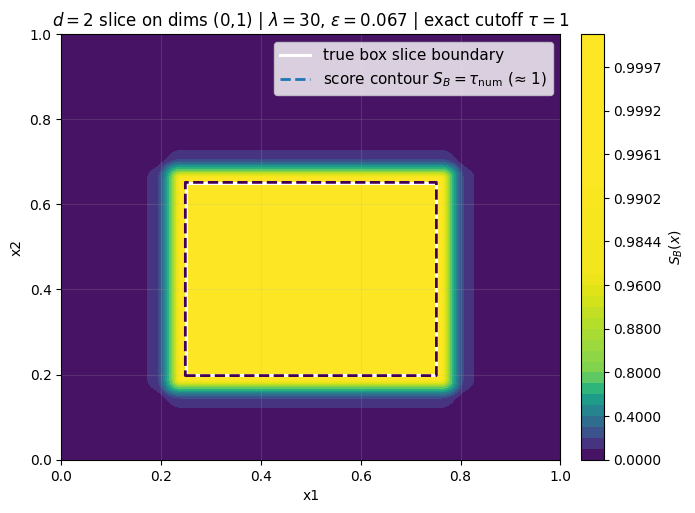}
    \vspace{0.3em}
    {\small\textbf{(b)} $d=2$.}
  \end{minipage}\hfill
  \begin{minipage}[t]{0.34\linewidth}
    \centering
    \includegraphics[width=\linewidth]{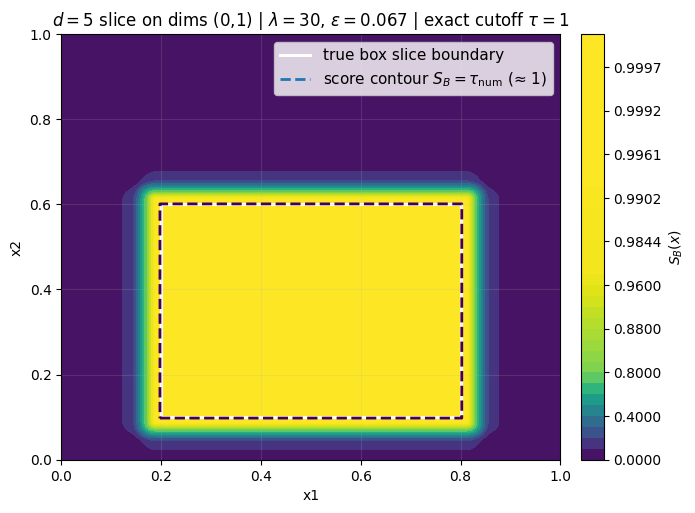}
    \vspace{0.3em}
    {\small\textbf{(c)} $d=5$ slice.}
  \end{minipage}
  \caption{\textbf{Smooth score that thresholds exactly to an axis-aligned box.}
We visualize the construction
$
S_B(x)=\prod_{j=1}^d
\vartheta_{\lambda,\varepsilon}(u_j-x_j)\,\vartheta_{\lambda,\varepsilon}(x_j-\ell_j)\in[0,1]
$
for a box $B=\prod_{j=1}^d[\ell_j,u_j]\subset[0,1]^d$, where
$
\vartheta_{\lambda,\varepsilon}(t)=(1-h_\varepsilon(t))\,e^{\lambda\min(t,0)}+h_\varepsilon(t),
\qquad
h_\varepsilon(t)=H\!\big((t+\varepsilon)/\varepsilon\big),
$
with $H\in C^\infty(\mathbb{R})$ a monotone cutoff satisfying $H(s)=0$ for $s\le 0$ and $H(s)=1$ for $s\ge 1$
(with vanishing endpoint derivatives), and $\varepsilon=c_0/\lambda$.
\textbf{(a)} 1D profile: $S_B(x)=1$ on $B$ and decreases smoothly outside.
\textbf{(b)} 2D heatmap of $S_B$ showing a plateau at $1$ on $B$ and smooth variation outside.
\textbf{(c)} 2D slice of the same score in $d=5$ (holding the remaining coordinates fixed inside $B$).
In all panels, the exact classification cutoff is $\tau=1$, i.e.,
$
\{x:\ S_B(x)\ge 1\}=B;
$
the solid white rectangle is the true box boundary on the displayed slice, and the dashed contour marks the numerical level set
$S_B(x)=1-10^{-12}$.}
  \label{fig:smooth_score_box}
\end{figure}

\section{Classification Problem is Easy}\label{sec:easy}

In the previous sections we showed that the hard tree indicator $1_A$ has infinite Radon total variation ($\mathcal{R}\mathrm{TV}$), and that several natural smooth surrogates can still have infinite $\mathcal{R}\mathrm{TV}$ in dimensions $d>1$. These results are most naturally interpreted as limitations for \emph{score learning}: approximating the discontinuous tree function (or even certain split-wise smoothings) while keeping $\mathcal{R}\mathrm{TV}$ bounded.

Here we show that the situation changes if we only care about \emph{classification after thresholding}. Concretely, for a decision set $A$ we ask for a smooth score $s:\mathbb{R}^d\to[0,1]$ such that $\{x:s(x)\ge \tau\}=A$ for a prescribed cutoff $\tau$. This requirement can be met by a score whose $\mathcal{R}\mathrm{TV}$ is finite and explicitly controlled, even when $1_A$ itself has infinite $\mathcal{R}\mathrm{TV}$. We focus on a single axis-aligned box (extensions to finite unions are straightforward).

\paragraph{Construction (single box).}
Fix $\lambda\ge 1$ and set $\varepsilon=c_0/\lambda$ for a constant $c_0>0$.
Let $H\in C^\infty(\mathbb{R})$ be nondecreasing with $H(s)=0$ for $s\le 0$, $H(s)=1$ for $s\ge 1$, and $H^{(m)}(0)=H^{(m)}(1)=0$ for $1\le m\le d+1$.
Define
\[
h_\varepsilon(t) \;:=\; H\!\left(\frac{t+\varepsilon}{\varepsilon}\right),
\qquad
\vartheta_{\lambda,\varepsilon}(t)
\;:=\; (1-h_\varepsilon(t))\,e^{\lambda t} + h_\varepsilon(t).
\]
Then $\vartheta_{\lambda,\varepsilon}(t)=1$ for $t\ge 0$ and $\vartheta_{\lambda,\varepsilon}(t)=e^{\lambda t}$ for $t\le -\varepsilon$, with a $C^{d+1}$ transition on $[-\varepsilon,0]$.

For a box $B=\prod_{j=1}^d[\ell_j,u_j]$, define the score
\[
S_B(x)
\;:=\;
\prod_{j=1}^d
\vartheta_{\lambda,\varepsilon}(u_j-x_j)\;
\vartheta_{\lambda,\varepsilon}(x_j-\ell_j)
\;\in\;[0,1].
\]
Inside $B$ all factors equal $1$, while outside $B$ at least one factor is strictly smaller than $1$. We visualize $S_B$ in \figref{fig:smooth_score_box}.

\paragraph{Exact thresholding and calibration.}
We quantify both exact recovery of the decision set at a fixed cutoff and distributional calibration in $L_1(P)$ under a standard tube-mass condition near $\partial B$.

\begin{assumption}[Tube--mass near $\partial B$]\label{assump:tube}
There exist constants $C>0$ and $\beta>0$ such that for all sufficiently small $t>0$,
\[
T_B(t)\;:=\;\mathbb{P}(\mathrm{dist}(X,\partial B)\le t) \;\le\; C\,t^\beta,
\qquad X\sim P.
\]
\end{assumption}

\begin{lemma}[Single box: exact thresholding and $L_1(P)$ control]\label{lem:single-box-main}%\label{lem:single_box_threshold}
Let $S_B$ be defined as above. Then $\{x:S_B(x)\ge 1\}=B$. Moreover, under Assumption~\ref{assump:tube}, for all $\lambda\ge 1$,
\[
\mathbb{E}\bigl[\,|S_B(X)-\mathbf{1}_B(X)|\,\bigr]
\;\le\; C_{d,\beta,c_0}\, \lambda^{-\beta},
\qquad X\sim P,
\]
where $C_{d,\beta,c_0}$ depends only on $(d,\beta,c_0)$ and the constants in Assumption~\ref{assump:tube}.
\end{lemma}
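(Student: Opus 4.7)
}

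The plan is to treat the two claims separately. The exact-thresholding claim is essentially algebraic, so I would dispose of it first, and then spend the main effort on the $L_1(P)$ bound by splitting the error integral into a near-boundary piece controlled by the tube-mass condition and a far-field piece controlled by the exponential tail of $\vartheta_{\lambda,\varepsilon}$.

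For the first claim I would verify, by a direct case analysis, that $\vartheta_{\lambda,\varepsilon}(t)=1$ iff $t\ge 0$. For $t\ge 0$ we have $h_\varepsilon(t)=1$ by definition, so $\vartheta_{\lambda,\varepsilon}(t)=1$. For $t<0$ the value $\vartheta_{\lambda,\varepsilon}(t)=(1-h_\varepsilon(t))e^{\lambda t}+h_\varepsilon(t)$ is a convex combination of $e^{\lambda t}<1$ and $1$ with nontrivial weight on the first summand (since $H(s)<1$ for $s<1$, equivalently $h_\varepsilon(t)<1$ for $t<0$), so $\vartheta_{\lambda,\varepsilon}(t)<1$. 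Thus $S_B(x)$, as a product of factors in $[0,1]$, equals $1$ exactly when \emph{every} factor equals $1$, which is equivalent to $u_j-x_j\ge 0$ and $x_j-\ell_j\ge 0$ for all $j$, i.e.\ $x\in B$.

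For the calibration bound I would first observe that $|S_B(x)-\mathbf 1_B(x)|=0$ on $B$ (since $S_B\equiv 1$ there) and $|S_B(x)-\mathbf 1_B(x)|=S_B(x)\in[0,1]$ on $B^c$, so it suffices to bound $\int_{B^c}S_B\,dP$. The second key step is the pointwise envelope: for $x\notin B$ let $\delta_\infty(x):=\mathrm{dist}_\infty(x,B)$ and pick a coordinate $j$ achieving the max; the $j$-th factor becomes $\vartheta_{\lambda,\varepsilon}(-\delta_\infty(x))$, and all other factors are $\le 1$, so $S_B(x)\le \vartheta_{\lambda,\varepsilon}(-\delta_\infty(x))$. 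By construction, this is $\le 1$ for $\delta_\infty\le\varepsilon$ and $=e^{-\lambda\delta_\infty}$ for $\delta_\infty\ge\varepsilon$. Using $\|\cdot\|_\infty\le\|\cdot\|_2\le\sqrt d\,\|\cdot\|_\infty$, I would rewrite everything in terms of Euclidean distance to $\partial B$ at the cost of a dimensional constant.

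The third step is the layer-cake split. Write $\int_{B^c}S_B\,dP\le I_{\mathrm{near}}+I_{\mathrm{far}}$ where $I_{\mathrm{near}}=\mathbb P(\mathrm{dist}(X,\partial B)\le\sqrt d\,\varepsilon)\le C(\sqrt d\,\varepsilon)^{\beta}$ by Assumption~\ref{assump:tube}, and $I_{\mathrm{far}}\le \int_{\{\delta_\infty>\varepsilon\}}e^{-\lambda\delta_\infty(x)}\,dP(x)$. Via integration by parts in the 1D pushforward (or equivalently a layer-cake identity), one obtains $I_{\mathrm{far}}\le \lambda\int_{\varepsilon}^\infty e^{-\lambda t}\,T_B(\sqrt d\,t)\,dt \le C\,d^{\beta/2}\,\lambda^{-\beta}\,\Gamma(\beta+1)$ after the substitution $u=\lambda t$. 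Plugging in $\varepsilon=c_0/\lambda$, both pieces are $O(\lambda^{-\beta})$ with a constant depending only on $(d,\beta,c_0)$ and the tube-mass constant $C$; summing gives the claim.

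The hard part is mostly bookkeeping rather than a conceptual obstacle: the transition window $(-\varepsilon,0)$ of $\vartheta_{\lambda,\varepsilon}$ does not enjoy a clean exponential upper bound (in that regime $\vartheta$ interpolates between $e^{\lambda t}$ and $1$ from above), so one cannot avoid a separate near-boundary accounting. Keeping the constants dimension-explicit through the $\sqrt d$ factor converting $\ell_\infty$ to $\ell_2$ and through the surface geometry (when one wants $\beta$-dependence sharp) is the only delicate point; the exponential decay plus Assumption~\ref{assump:tube} make the remainder routine.
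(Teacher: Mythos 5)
Your proposal is correct and follows essentially the same route as the paper: exact thresholding via the plateau/monotonicity of $\vartheta_{\lambda,\varepsilon}$, reduction to $\int_{B^c}S_B\,dP$, a pointwise exponential envelope outside the $\varepsilon$-collar, and a near/far split handled by the tube-mass condition and a layer-cake Gamma integral, respectively. The only differences are cosmetic — you dominate $S_B$ by the single factor at the maximal ($\ell_\infty$) overhang while the paper multiplies all factors and works with the $\ell_1$ overhang $Z=d_1(x,B)$, and you should (as the paper does) split the far integral at the $t_0$ where Assumption~\ref{assump:tube} ceases to apply and absorb the tail $e^{-\lambda t_0}\le C\lambda^{-\beta}$ — but these affect only constants.
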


\paragraph{Finite $\mathcal{R}\mathrm{TV}$ with an explicit bound.}
Beyond exact recovery and calibration, the barrier score admits a finite and quantitative $\mathcal{R}\mathrm{TV}$ bound in terms of the face measures of $B$.

\begin{theorem}[$\mathcal{R}\mathrm{TV}$ upper bound for a single box]\label{thm:RTV-single-main}%\label{thm:single_box_rtv}
For all $\lambda\ge 1$ and $\varepsilon=c_0/\lambda$,
\[
\|S_B\|_{\mathcal{R}}
\;\le\;
C_d
\sum_{r=1}^d
\lambda^{\,d+1-r}\;
\mathcal{H}^{d-r}\!\bigl(\Sigma_{d-r}(B)\bigr),
\]
where $\Sigma_{d-r}(B)$ is the union of $(d-r)$-dimensional faces of $B$, $\mathcal{H}^{d-r}$ is
$(d-r)$-dimensional Hausdorff measure, and $C_d$ depends only on $d$ (and the choice of $H$).
\end{theorem}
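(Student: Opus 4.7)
The plan is to reduce $\|S_B\|_{\mathcal{R}}$ to a weighted sum of $L^1$-norms of mixed partial derivatives of order $d+1$, then exploit the tensor-product structure $S_B(x)=\prod_{j=1}^d F_j(x_j)$ with $F_j(x_j):=\vartheta_{\lambda,\varepsilon}(u_j-x_j)\,\vartheta_{\lambda,\varepsilon}(x_j-\ell_j)$. For the reduction, I would use the Fourier–slice intertwining $\partial_t\mathcal{R}f(\beta,\cdot)=\mathcal{R}[(\beta\cdot\nabla)f](\beta,\cdot)$, which iterates to $\partial_t^{d+1}\mathcal{R}S_B=\mathcal{R}[(\beta\cdot\nabla)^{d+1}S_B]$; combined with the trivial Fubini bound $\int_{\mathbb{R}}|\mathcal{R}g(\beta,t)|\,dt\le\|g\|_{L^1(\mathbb{R}^d)}$, the multinomial expansion of $(\beta\cdot\nabla)^{d+1}$, and $|\beta^\alpha|\le 1$, this yields $\|S_B\|_{\mathcal{R}}\le C_d\sum_{|\alpha|=d+1}\|\partial^\alpha S_B\|_{L^1(\mathbb{R}^d)}$ (for odd $d$ the ramp $\Lambda^{d-1}$ is $\pm\partial_t^{d-1}$ and the bound is immediate; for even $d$ one absorbs the half-integer 1D Riesz power into the constant via its Fourier-multiplier action on the Schwartz-class slices).

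Next I would establish single-coordinate estimates. The key observation is that although Leibniz expands $F_j^{(\alpha_j)}$ into $\alpha_j+1$ terms, the cross-terms involving derivatives of \emph{both} factors vanish because the relevant supports are disjoint: $\vartheta^{(k)}(u_j-\cdot)$ is supported in $\{x_j\ge u_j\}$ while $\vartheta^{(\alpha_j-k)}(\cdot-\ell_j)$ is supported in $\{x_j\le\ell_j\}$. Hence, for $\alpha_j\ge 1$,
\[
F_j^{(\alpha_j)}(x_j)=(-1)^{\alpha_j}\vartheta^{(\alpha_j)}(u_j-x_j)+\vartheta^{(\alpha_j)}(x_j-\ell_j).
\]
Using $\vartheta(t)=e^{\lambda t}$ on $t\le-\varepsilon$ and smooth transition of width $\varepsilon=c_0/\lambda$ and amplitude $1$, I would show $\|\vartheta^{(k)}\|_{L^1(\mathbb{R})}\le C_k\lambda^{k-1}$ for $k\ge1$ (the exponential tail contributes $\lambda^{k-1}e^{-c_0}$ and the transition piece contributes $C_k\lambda^{k-1}$), and $\|F_j\|_{L^1(\mathbb{R})}\le L_j+C/\lambda$ where $L_j:=u_j-\ell_j$. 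Consequently $\|F_j^{(\alpha_j)}\|_{L^1(\mathbb{R})}\le C_{\alpha_j}\lambda^{\alpha_j-1}$ whenever $\alpha_j\ge 1$.

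Finally, the tensor-product structure gives $\partial^\alpha S_B(x)=\prod_j F_j^{(\alpha_j)}(x_j)$, so by Fubini $\|\partial^\alpha S_B\|_{L^1(\mathbb{R}^d)}=\prod_j\|F_j^{(\alpha_j)}\|_{L^1(\mathbb{R})}$. Setting $R:=|\{j:\alpha_j\ge 1\}|$ and $J:=\{j:\alpha_j\ge 1\}$, the per-factor bounds yield $\|\partial^\alpha S_B\|_{L^1}\le C_\alpha\,\lambda^{d+1-R}\prod_{j\notin J}L_j$. Grouping $|\alpha|=d+1$ by $R\in\{1,\ldots,d\}$ and by the $R$-subset $J$ (with $\binom{d}{R-1}$ positive compositions of $d+1$ into $R$ parts on each such $J$), one obtains
\[
\|S_B\|_{\mathcal{R}}\;\le\; C_d\sum_{r=1}^d\lambda^{d+1-r}\sum_{|J|=r}\prod_{j\notin J}L_j\;=\; C_d\sum_{r=1}^d\lambda^{d+1-r}\,e_{d-r}(L_1,\ldots,L_d),
\]
and the identification $\mathcal{H}^{d-r}(\Sigma_{d-r}(B))=2^r\,e_{d-r}(L_1,\ldots,L_d)$ (each $(d-r)$-face fixes $r$ coordinates to one of two extremes and has free-side product $\prod_{j\notin J}L_j$) absorbs into the constant to give the claimed bound. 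The main obstacle is a clean treatment of the nonlocal ramp $\Lambda^{d-1}$ in even $d$ for Step 1; a secondary care point is keeping the $\lambda$-budget tight, i.e., ensuring each differentiated factor contributes $\lambda^{\alpha_j-1}$ and not $\lambda^{\alpha_j}$, which is exactly what the $1/\lambda$ width of the transition and the $1/\lambda$ mass of the exponential tail deliver.
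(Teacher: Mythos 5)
Your proposal is correct and follows essentially the same route as the paper's proof: the same master inequality $\|S_B\|_{\mathcal{R}}\le C_d\sum_{|\alpha|=d+1}\|\partial^\alpha S_B\|_{L^1(\mathbb{R}^d)}$ obtained from the directional-derivative form of the Radon transform, the same support-disjointness observation that kills the Leibniz cross-terms in each coordinate factor (so only $U^{(q)}L+UL^{(q)}$ survives), and the same tensorization followed by grouping over the set of actively differentiated coordinates and identification with the skeleton measures. The only elision is that your zero-derivative factors contribute $L_j+C/\lambda$ rather than $L_j$; expanding these products via elementary symmetric polynomials (as the paper does) pushes the surplus $\lambda^{-1}$ factors into the higher-codimension skeleton terms of the final sum, so the stated bound is unaffected.
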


We defer the proof to Appendix~\ref{app:post-thres}. Taken together, Lemma~\ref{lem:single-box-main} and Theorem~\ref{thm:RTV-single-main} show that if we only care about the \emph{thresholded decision set}, then shallow models can represent axis-aligned box classifiers using smooth scores with controlled $\mathcal{R}\mathrm{TV}$. The difficulty identified earlier arises when one instead asks for a calibrated score that closely approximates the discontinuous indicator (or certain split-wise smoothings) while keeping $\mathcal{R}\mathrm{TV}$ small.

\section{Experiments}\label{sec:experiments}

\paragraph{Task and data.}
We consider a synthetic axis-aligned box classification task in $\mathbb{R}^d$.
For $d=5$, inputs $x$ are sampled i.i.d.\ uniformly from $[0,1]^d$
($100{,}000$ train, $20{,}000$ test), and labels are
\[
y \;=\; \mathbf{1}\!\left\{x \in [\ell,u]^d\right\},
\qquad
\ell = 0.0471381679,\ \ u = 0.9528618321,
\]
which leads to roughly balanced classes (box volume $\approx 0.59$).
We hold out $20{,}000$ points from the training set for validation.

\paragraph{Models, loss, and evaluation.}
We train single-hidden-layer ReLU MLPs (widths $W\in\{8,16,32,64,128,256\}$)
using an \emph{MSE loss} on the \emph{raw network output} $f_\theta(x)\in\mathbb{R}$
(i.e., a \emph{logit score}).
We report test-set Intersection-over-Union (IoU) under two thresholds:
\[
\text{(fixed)}\quad \tau=0 \qquad\text{and}\qquad
\text{(optimized)}\quad \tau^\star \in \arg\max_{\tau\in\mathcal{G}} \mathrm{IoU}_{\mathrm{val}}(\tau),
\]
where $\mathcal{G}$ is a uniform grid of $201$ thresholds spanning the range of validation logits.
In particular, \emph{IoU@0.5} in our plots corresponds to the level set $\{x:f_\theta(x)\ge 0\}$
(since $p=0.5$ corresponds to logit $0$), not to a probability threshold.
IoU is computed on the full $5$-D test set as
\[
\mathrm{IoU} \;=\; \frac{\mathrm{TP}}{\mathrm{TP}+\mathrm{FP}+\mathrm{FN}},
\]
so it depends only on the overlap between the predicted-positive region and the true box.

\paragraph{Visualization.}
For visualization we plot $2$-D slices on dims $(0,1)$: we fix $x_2=x_3=x_4=0.5$ and vary
$(x_0,x_1)$ on a $500\times 500$ grid over $[-0.25,1.25]^2$.
Heatmaps show the raw score $f_\theta(x)$.
Decision boundaries are drawn as contours $f_\theta(x)=\tau$ (cyan), and we optionally overlay
offset contours $\tau\pm 0.1$ (white dotted).
We also overlay iso-contours at $f_\theta(x)=0$ and $f_\theta(x)=1$ (pink/magenta) to indicate score scaling.
The ground-truth box boundary on the slice is drawn as a black dashed rectangle.
We annotate the slice-IoU only as a sanity check; all reported IoUs are computed in the full $5$-D space. See Figures~\ref{fig:nn-box-logits-a}--\ref{fig:nn-box-logits-c} and Figures~\ref{fig:nn-box-threshold-a}--\ref{fig:nn-box-threshold-c}.

\paragraph{Decision-tree baseline.}
Figure~\ref{fig:dt-training-data} shows the labeled training set (on the same $(0,1)$ slice)
and an axis-aligned decision tree fit used as a baseline in the experiments below.

\paragraph{RTV proxy.}
To track a simple complexity surrogate across widths, we compute the weight-based
Radon-total-variation proxy
$\mathcal{R}\mathrm{TV}=\tfrac12\big(\sum_i \lVert w_i\rVert_2^2+\sum_i a_i^2\big)$
from the trained network parameters (biases excluded).
In our runs, $\mathcal{R}\mathrm{TV}$ increases with width: \vspace{-2mm}
\[
W = [8, 16, 32,64,128,256]:\ [24.1, 39.6, 119.7, 189.2, 264.2, 374.8],
\]
highlighting that improved geometric fit can coincide with substantially larger variation/complexity.

%\vspace{-5mm}

\begin{figure}[h!]
  \centering
  \setlength{\tabcolsep}{4pt}
  \renewcommand{\arraystretch}{1.0}
  \begin{tabular}{c c c}
    & \scriptsize $W=8$ & \scriptsize $W=16$ \\
    \scriptsize M &
    \includegraphics[width=0.45\linewidth]{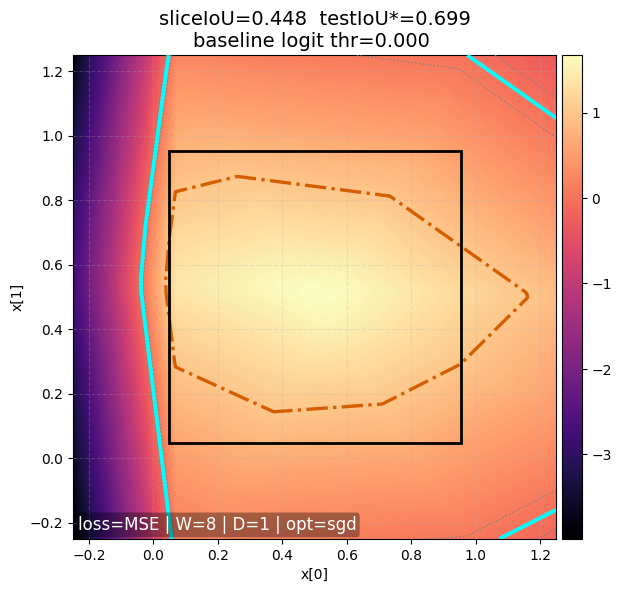} &
    \includegraphics[width=0.45\linewidth]{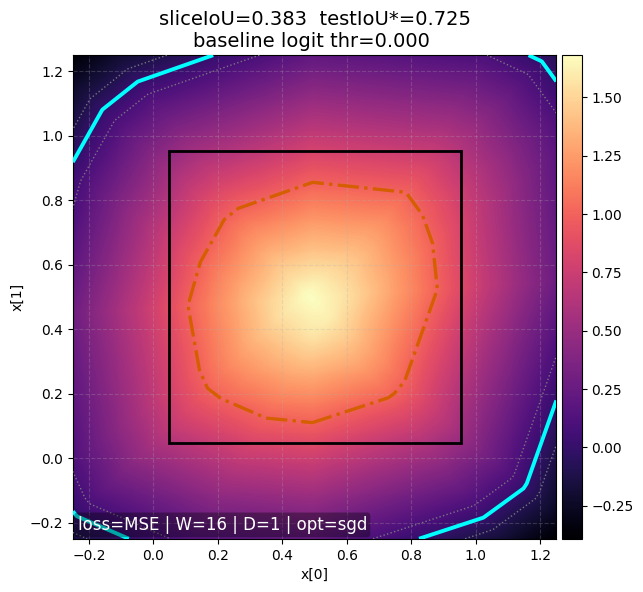} \\
    \scriptsize O &
    \includegraphics[width=0.45\linewidth]{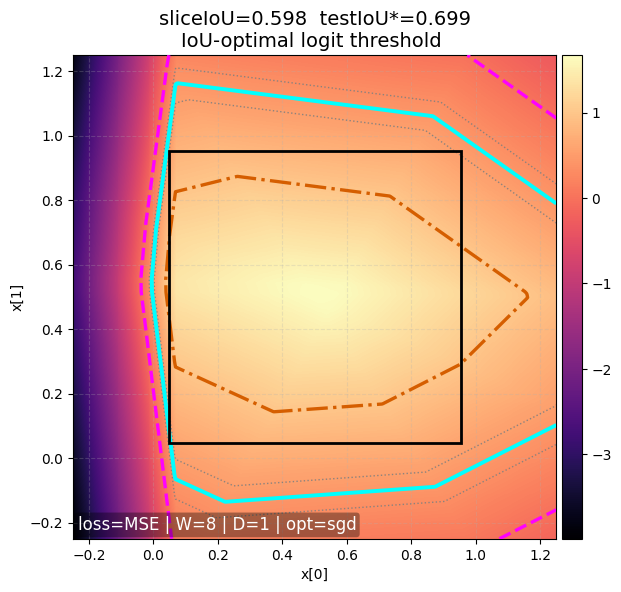} &
    \includegraphics[width=0.45\linewidth]{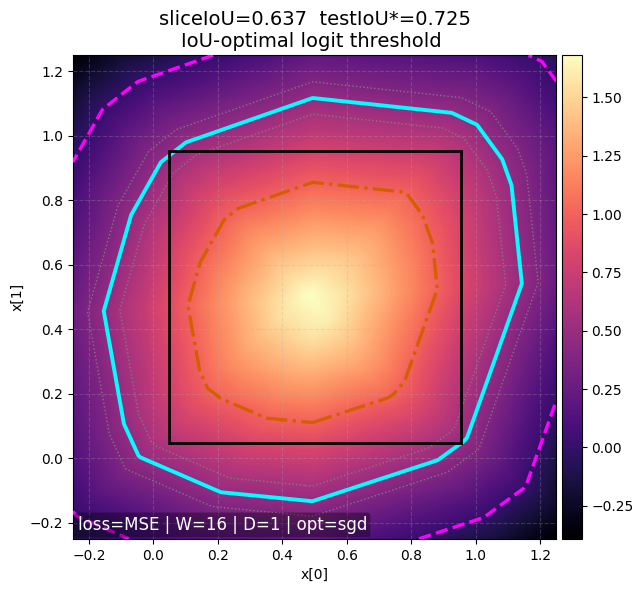} \\
  \end{tabular}
  \caption{\textbf{Raw logits on a 2-D slice across widths (MSE training).}
  Heatmaps show the learned score $f_\theta(x)$ (raw logit output) on the $(x_0,x_1)$ slice
  with $x_2=x_3=x_4=0.5$.
  Row \textbf{M} shows the trained model output.
  Row \textbf{O} overlays the IoU-optimal decision boundary $\{x:f_\theta(x)=\tau^\star\}$,
  where $\tau^\star$ is selected by grid search on a held-out validation split.
  The dashed rectangle is the ground-truth box boundary on the slice.
  Panel headers report $\mathrm{IoU}@\tau{=}0 \rightarrow \mathrm{IoU}^\star$ and $\tau^\star$;
  note that $\mathrm{IoU}@0.5$ corresponds here to the level set at \emph{logit} $0$
  (since no sigmoid/probability is used).
  Across widths, $\mathcal{R}\mathrm{TV}$ increases:
  $24.1$ ($W{=}8$), $39.6$ ($16$), $119.7$ ($32$), $189.2$ ($64$), $264.2$ ($128$), $374.8$ ($256$),
  illustrating a fit--complexity trade-off.}
  \label{fig:nn-box-logits-a}
\end{figure}

\begin{figure}[t]
  \centering
  \setlength{\tabcolsep}{4pt}
  \renewcommand{\arraystretch}{1.0}
  \begin{tabular}{c c c}
    & \scriptsize $W=32$ & \scriptsize $W=64$ \\
    \scriptsize M &
    \includegraphics[width=0.45\linewidth]{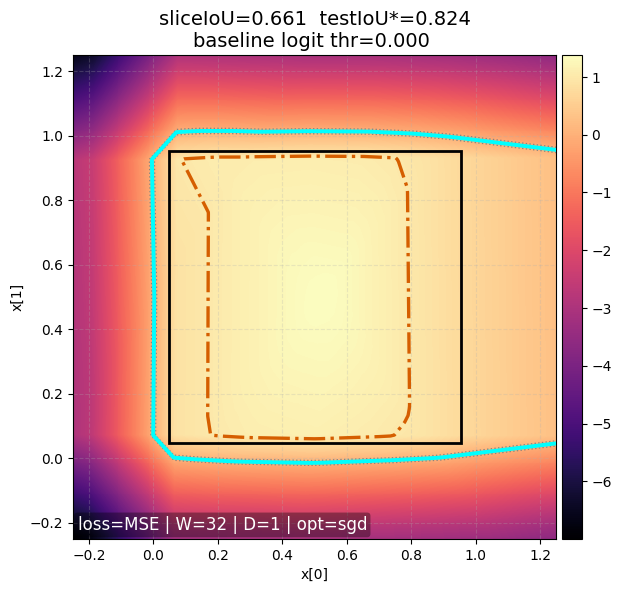} &
    \includegraphics[width=0.45\linewidth]{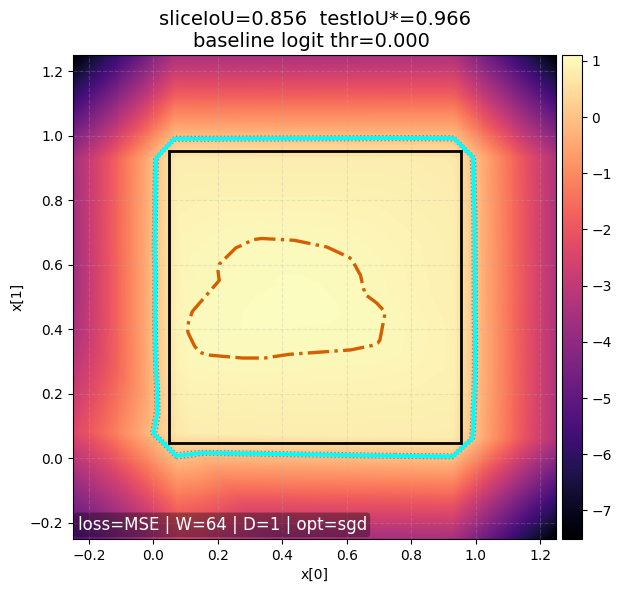} \\
    \scriptsize O &
    \includegraphics[width=0.45\linewidth]{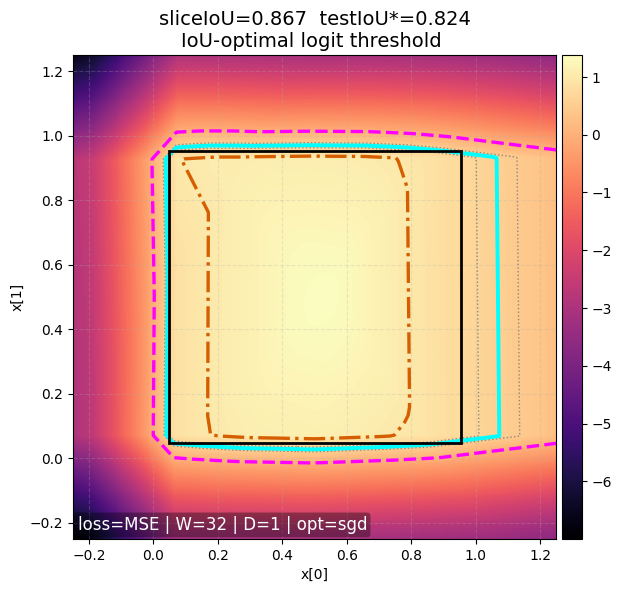} &
    \includegraphics[width=0.45\linewidth]{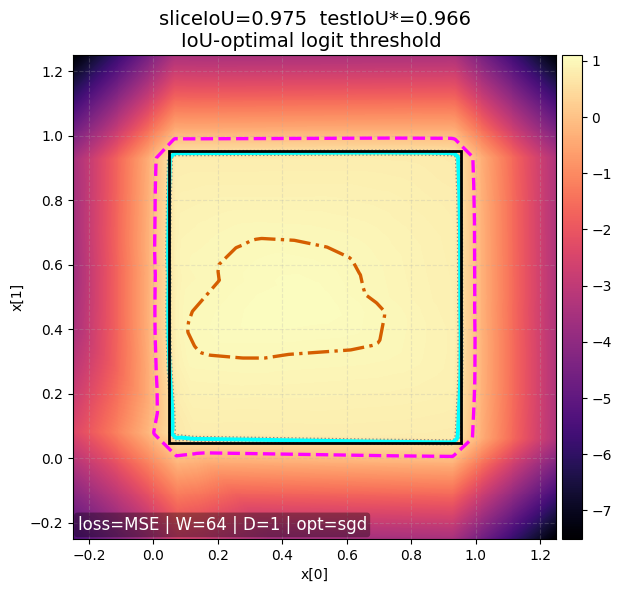} \\
  \end{tabular}
  \caption{\textbf{Raw logits on a 2-D slice across widths (continued).}}
  \label{fig:nn-box-logits-b}
\end{figure}

\begin{figure}[t]
  \centering
  \setlength{\tabcolsep}{4pt}
  \renewcommand{\arraystretch}{1.0}
  \begin{tabular}{c c c}
    & \scriptsize $W=128$ & \scriptsize $W=256$ \\
    \scriptsize M &
    \includegraphics[width=0.45\linewidth]{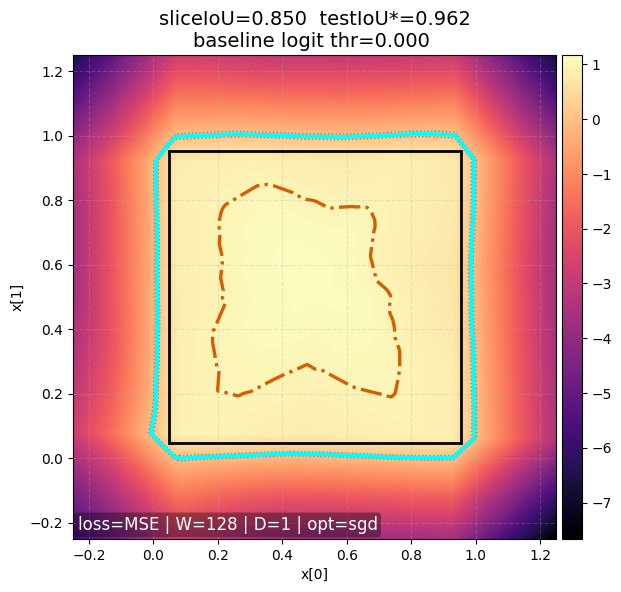} &
    \includegraphics[width=0.45\linewidth]{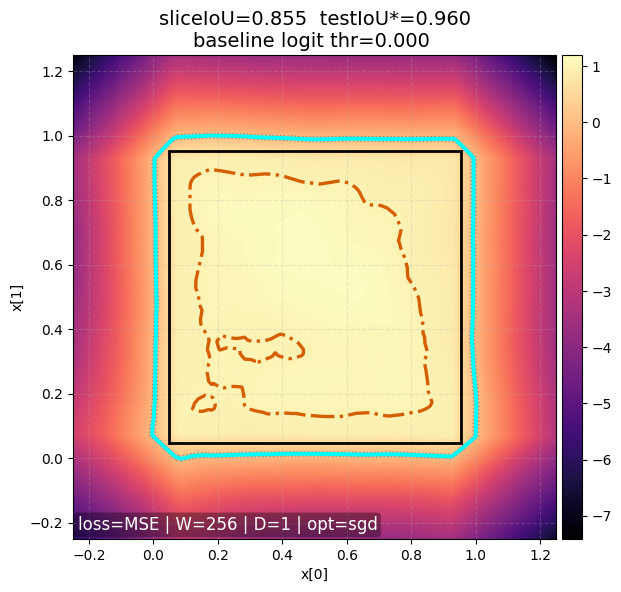} \\
    \scriptsize O &
    \includegraphics[width=0.45\linewidth]{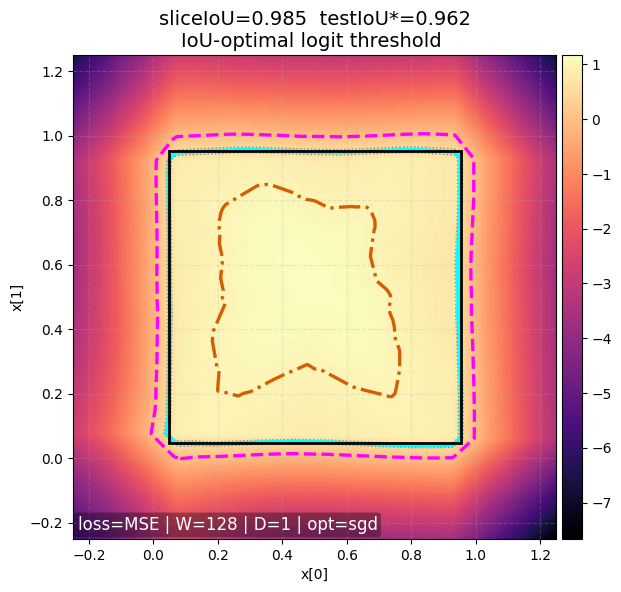} &
    \includegraphics[width=0.45\linewidth]{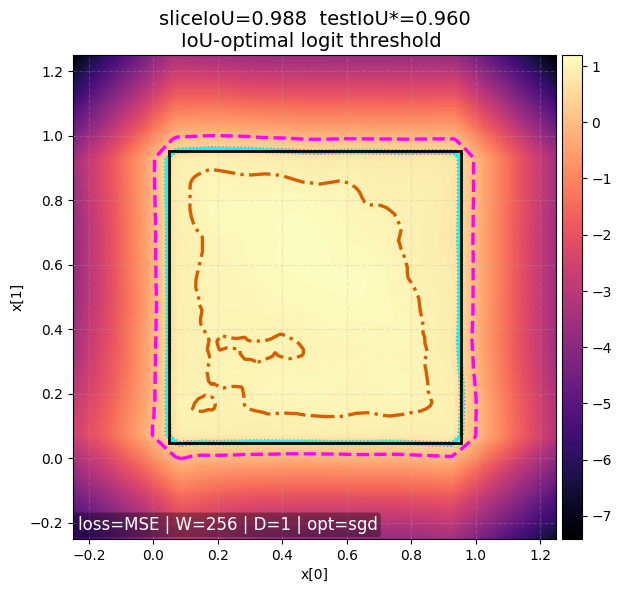} \\
  \end{tabular}
  \caption{\textbf{Raw logits on a 2-D slice across widths (continued).}}
  \label{fig:nn-box-logits-c}
\end{figure}

\begin{figure}[t]
  \centering
  \setlength{\tabcolsep}{4pt}
  \renewcommand{\arraystretch}{1.0}
  \begin{tabular}{c c c}
    & \scriptsize $W=8$ & \scriptsize $W=16$ \\
    \scriptsize M &
    \includegraphics[width=0.45\linewidth]{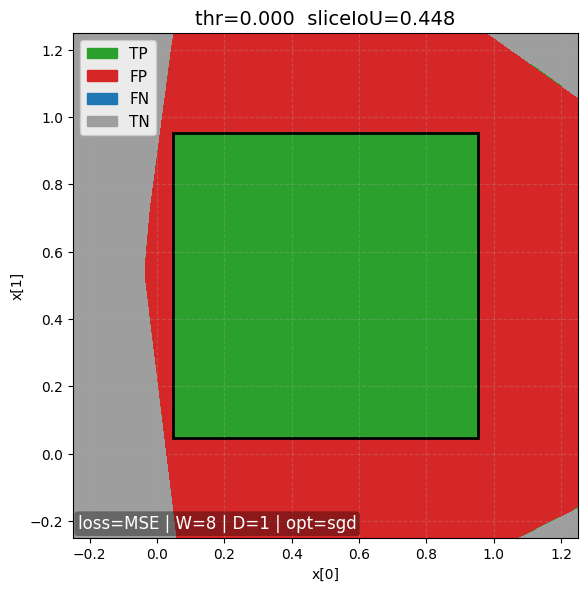} &
    \includegraphics[width=0.45\linewidth]{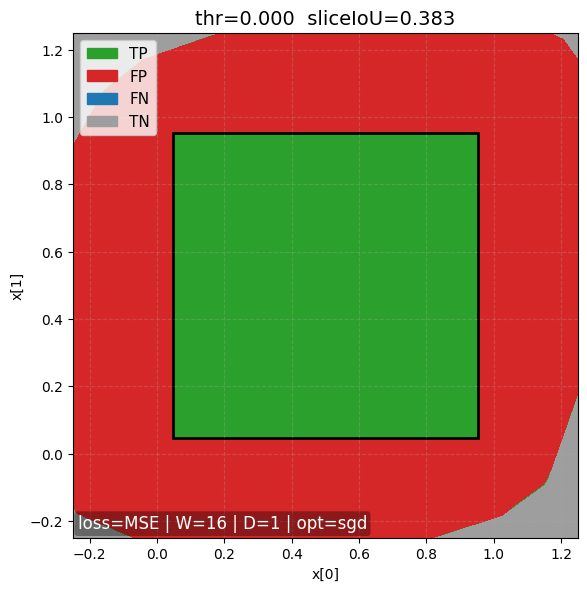} \\
    \scriptsize O &
    \includegraphics[width=0.45\linewidth]{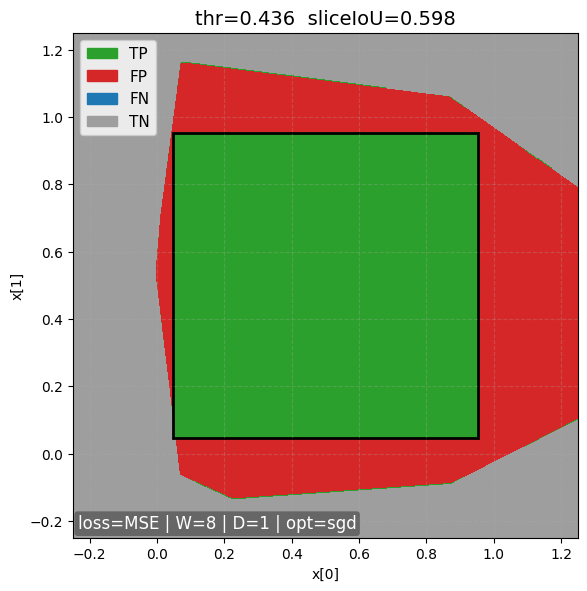} &
    \includegraphics[width=0.45\linewidth]{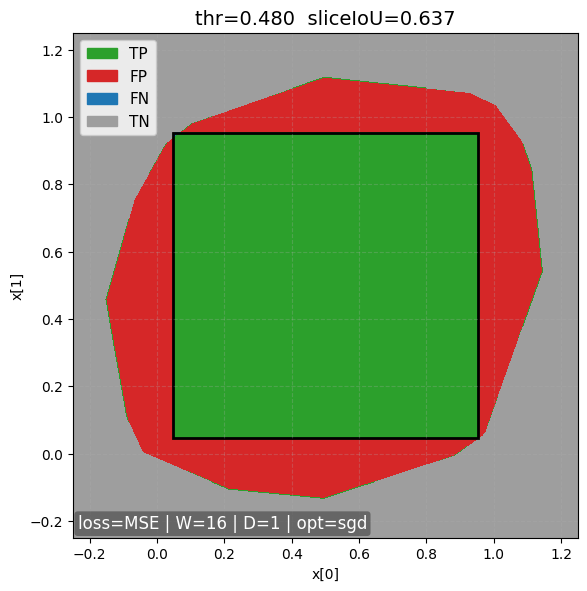} \\
  \end{tabular}
  \caption{\textbf{Thresholded predictions on the same slice.}
  Each subplot shows the predicted-positive region on the $(x_0,x_1)$ slice
  (TP=green, FP=red, FN=blue, TN=gray), with the ground-truth box outlined (dashed).
  Row \textbf{M} uses the fixed threshold $\tau=0$ (the logit threshold corresponding to $p=0.5$).
  Row \textbf{O} uses the IoU-optimal threshold $\tau^\star$ chosen on the validation split.
  Numbers above each panel report the threshold and the resulting test IoU (computed in full $5$-D),
  highlighting that calibration/threshold selection can substantially improve IoU without changing the trained network.}
  \label{fig:nn-box-threshold-a}
\end{figure}

\begin{figure}[t]
  \centering
  \setlength{\tabcolsep}{4pt}
  \renewcommand{\arraystretch}{1.0}
  \begin{tabular}{c c c}
    & \scriptsize $W=32$ & \scriptsize $W=64$ \\
    \scriptsize M &
    \includegraphics[width=0.45\linewidth]{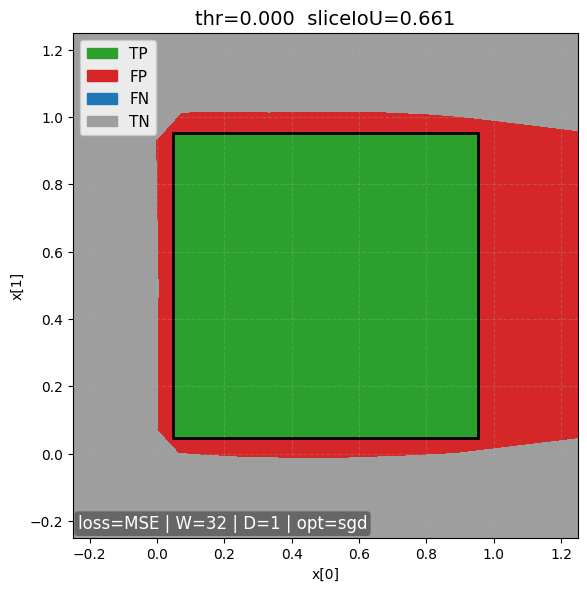} &
    \includegraphics[width=0.45\linewidth]{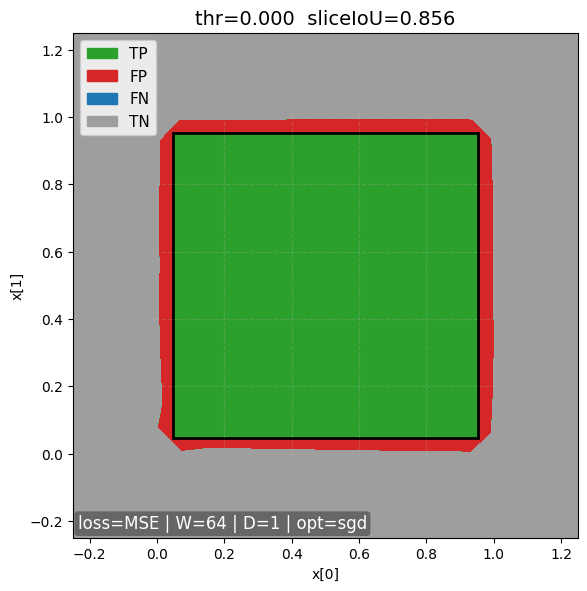} \\
    \scriptsize O &
    \includegraphics[width=0.45\linewidth]{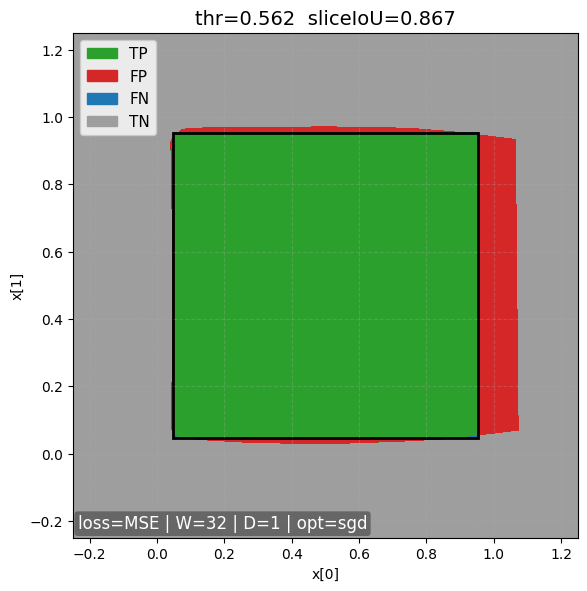} &
    \includegraphics[width=0.45\linewidth]{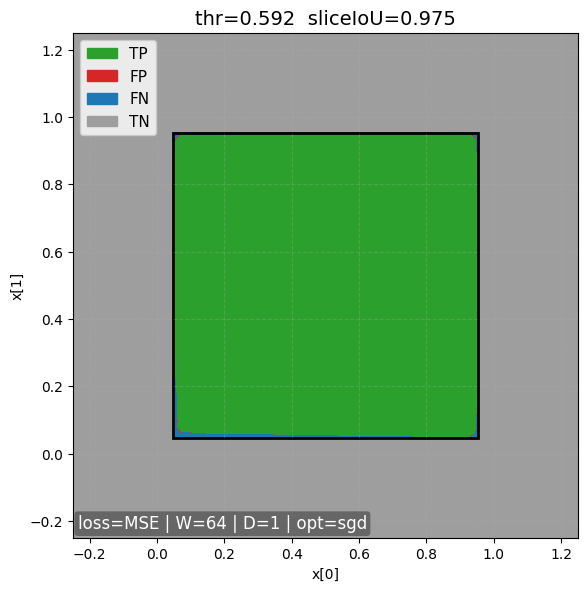} \\
  \end{tabular}
  \caption{\textbf{Thresholded predictions on the same slice (continued).}}
  \label{fig:nn-box-threshold-b}
\end{figure}

\begin{figure}[t]
  \centering
  \setlength{\tabcolsep}{4pt}
  \renewcommand{\arraystretch}{1.0}
  \begin{tabular}{c c c}
    & \scriptsize $W=128$ & \scriptsize $W=256$ \\
    \scriptsize M &
    \includegraphics[width=0.45\linewidth]{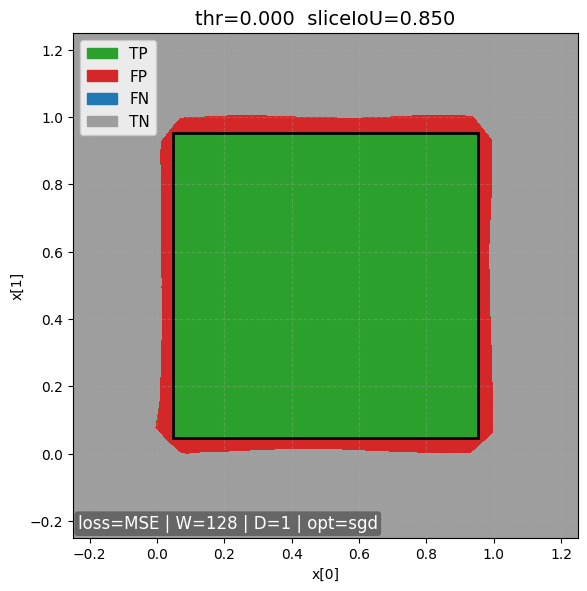} &
    \includegraphics[width=0.45\linewidth]{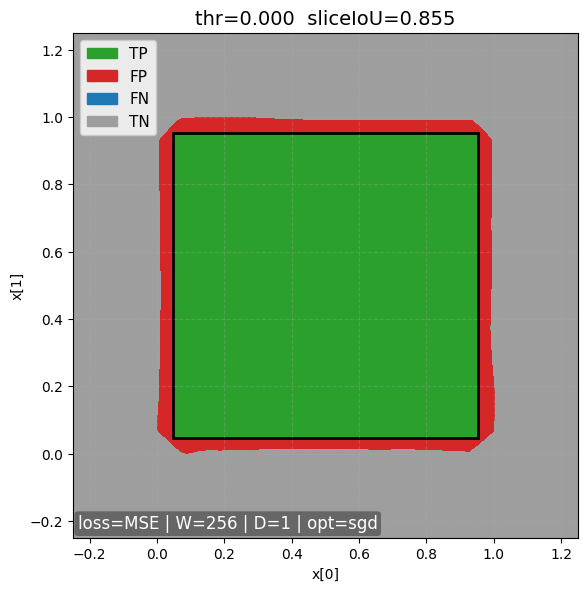} \\
    \scriptsize O &
    \includegraphics[width=0.45\linewidth]{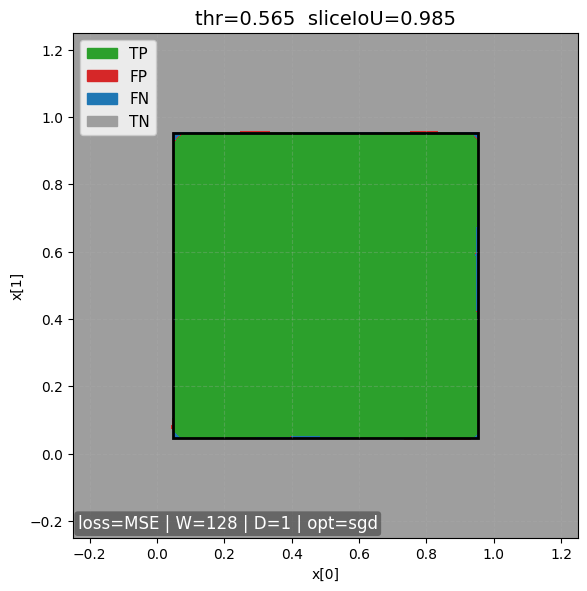} &
    \includegraphics[width=0.45\linewidth]{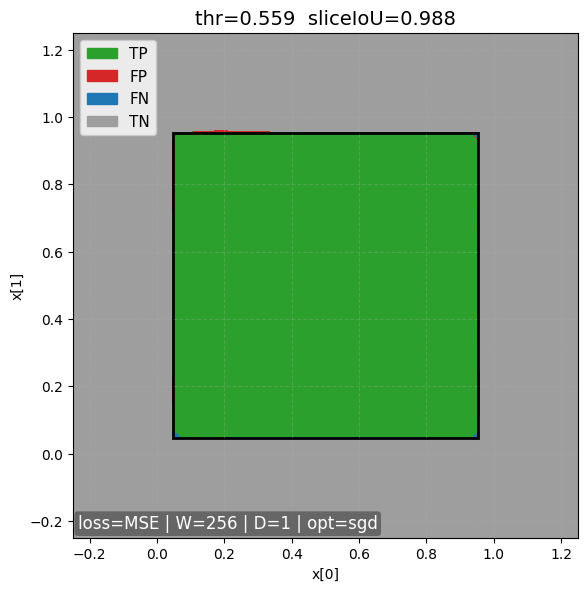} \\
  \end{tabular}
  \caption{\textbf{Thresholded predictions on the same slice (continued).}}
  \label{fig:nn-box-threshold-c}
\end{figure}

\FloatBarrier

\section{Conclusion and Future Work}\label{sec:conclusion}

We examined when accuracy and interpretability can come apart for \emph{shallow} ReLU networks on axis-aligned, decision-tree–type targets, using Radon total variation ($\mathcal{R}\mathrm{TV}$) as a geometric complexity measure. A key point is that two objectives are distinct: matching the \emph{decision set} after thresholding versus matching the \emph{score} itself (calibration / regression). 

On the approximation side, we identified sharp differences between these objectives. The hard indicator $\mathbf 1_A$ lies outside bounded-$\mathcal{R}\mathrm{TV}$ classes in dimension $d>1$, and several common smooth surrogates inherit this behavior, while other smoothings restore finiteness but with an unfavorable dependence on dimension. On the classification side, we exhibited smooth scores that recover $A$ exactly at a fixed threshold and admit quantitative control of $\mathcal{R}\mathrm{TV}$ together with distributional calibration under a tube-mass condition. Our experiments complement these results by showing that near-perfect thresholded accuracy can coexist with meaningful variation in the learned score geometry and complexity.

Overall, this supports a simple takeaway: evaluating only the thresholded classifier can mask substantial differences in the underlying score. When one additionally requires scores that are calibrated and geometrically simple, an explicit accuracy--complexity tension becomes visible.

\paragraph{Future Directions.}
It would be useful to extend these ideas beyond single hidden-layer models and beyond axis-aligned trees, and to study algorithmic mechanisms that bias learning toward low-complexity scores.

\newpage

\newpage
\bibliography{ref}
\newpage
\appendix
% !TEX root = MAIN.tex
\section{A Gap Between Decision Trees and Neural Networks: 
Supplementary Materials}
\vspace*{10mm}
\begin{itemize}
    \item \appref{app:hard}: \textbf{Approximation of hard-threshold decision trees}\vspace{2mm}
    \begin{itemize}
        \item \appref{subapp:step}: Proof of \lemref{lem:step}
        \item \appref{subapp:generalhard}: Proof of \thmref{thm:hard}
    \end{itemize}
    \item \appref{app:sigmoid}: \textbf{Approximation of Sigmoidal smooth decision trees}\vspace{2mm}
    \begin{itemize}
        \item Proof of \thmref{thm:sigmoid}.\vspace{1mm}
    \end{itemize}
    
    \item \appref{app:gauss}: \textbf{Approximation of Gaussian smooth decision trees}\vspace{2mm}
    \begin{itemize}
        \item Proof of \thmref{thm:gaussian}\vspace{1mm}
    \end{itemize}

    \item \appref{app:post-thres}: \textbf{Approximation Post-thresholding}
     \begin{itemize}
        \item Proof of \lemref{lem:single-box-main}\vspace{1mm}
        \item Proof of \thmref{thm:RTV-single-main}\vspace{1mm}
    \end{itemize}
\end{itemize}

\section{Hard Threshold Decision Trees}\label{app:hard}
In this appendix, we provide the proof of the main results as presented in \secref{sec:hard}.

% !TEX root = MAIN.tex
\subsection{$\tv$ of 1-D step functions}\label{subapp:step}

We defined a step function in single dimension as $f_{\sf{step}}: \reals \to \reals$ where
\begin{align*}
    f_{\sf{step}}(x) = \sum_{i=1}^{n} c_i\cdot {1}\curlybracket{x \in (z_i, z_{i+1})}
\end{align*}
for given set of scalars $-\infty < z_1 \le z_2 \le \ldots \le z_N < \infty$

We restate the claim of unboundedness of $\rtv{f_{\sf{step}}}$ with the proof below it.
\begin{lemma}
    $\rtv{f_{\sf{step}}}$ is unbounded.
\end{lemma}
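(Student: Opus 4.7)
The plan is to apply the one-dimensional formula from Theorem~3.1 of Savarese et al., already stated in the excerpt, and then exhibit the blow-up directly in the distributional second derivative of the step function. Since that formula reduces $\rtv{\cdot}$ in one dimension to $\int|f''(x)|\,dx$ plus a bounded slope-at-infinity correction, it suffices to show that the $L^1$-mass of $f''_{\sf step}$ is infinite.

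First I would identify $f''_{\sf step}$ as a distribution. With the convention that $f_{\sf step}$ vanishes outside $(z_1, z_{n+1})$, the function has signed jumps $a_i := c_i - c_{i-1}$ at each breakpoint $z_i$ (taking $c_0 = c_{n+1} = 0$). Then $f'_{\sf step} = \sum_i a_i\,\delta_{z_i}$ is a finite signed Radon measure, but differentiating once more yields $f''_{\sf step} = \sum_i a_i\,\delta'_{z_i}$, a sum of dipole distributions acting by $\langle \delta'_{z_i}, \varphi\rangle = -\varphi'(z_i)$. Each dipole is a distribution of order one rather than a measure, because no constant bounds $|\varphi'(z_i)|$ in terms of $\|\varphi\|_\infty$; this is the structural obstruction driving the divergence.

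To convert this qualitative obstruction into a quantitative divergence compatible with the $\rtv{\cdot}$ seminorm, I would smooth $f_{\sf step}$ against a standard mollifier $\psi_\varepsilon(x) = \varepsilon^{-1}\psi(x/\varepsilon)$ (with $\psi \in C_c^\infty$, $\psi \ge 0$, $\int \psi = 1$) and compute
\[
\int_{\mathbb{R}} \bigl|{(f_{\sf step}\ast\psi_\varepsilon)}''(x)\bigr|\, dx \;=\; \varepsilon^{-1}\!\sum_{i}|a_i|\,\int_{\mathbb{R}}|\psi'(u)|\, du \;\longrightarrow\; \infty \quad \text{as } \varepsilon \to 0,
\]
where the equality holds once $\varepsilon$ is small enough that the translated supports of $\psi'\bigl((\cdot - z_i)/\varepsilon\bigr)$ are pairwise disjoint (so no cancellation between different breakpoints is possible). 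Lower semicontinuity of $\rtv{\cdot}$ under $L^1_{\mathrm{loc}}$ convergence then forces $\rtv{f_{\sf step}} = +\infty$.

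The main obstacle is presentational rather than mathematical: the 1-D formula $\int|f''(x)|\,dx$ implicitly assumes $f''$ is a signed measure, so some care is needed in stating what this integral means when $f''$ is only a distribution of order one. The mollification route above handles this cleanly, and the same viewpoint sets up the multi-dimensional argument used in Theorem~\ref{thm:hard}, where one reduces to precisely this one-dimensional pathology by slicing any axis-aligned decision tree along a coordinate direction.
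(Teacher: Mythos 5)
Your proof is correct and follows essentially the same route as the paper: reduce to the one-dimensional formula $\int|f''|\,dx$, identify $f''_{\sf step}$ as a sum of dipoles $\sum_i a_i\,\delta'_{z_i}$ at the breakpoints, and exhibit the divergence by mollification. Your version is in fact slightly more careful than the paper's (correct jump coefficients $a_i=c_i-c_{i-1}$, the correct $\varepsilon^{-1}$ scaling of $\|\psi_\varepsilon'\|_{L^1}$, and an explicit lower-semicontinuity step to pass to the limit), but the underlying argument is the same.
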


\begin{proof}
    Using Theorem 3.1~\citep{savarese19a}, we note that for a choice of small enough $\epsilon > 0 $
    \begin{align*}
        \rtv{f_{\sf{step}}} &= \int_{- \infty}^\infty |f''_{\sf{step}}(x)| d x\\
        & = \int_{- \infty}^\infty \bigg\lvert\paren{\sum_{i=1}^{n} c_i\cdot \Delta_{i,i+1} \delta_{z_i}(x) }' \bigg\rvert d x\\
        & = \int_{- \infty}^\infty \bigg\lvert\sum_{i=1}^{n} c_i\cdot \Delta_{i,i+1} \delta_{z_i}'(x)  \bigg\rvert d x \\
        & = \sum_{i = 1}^n \int_{z_i - \epsilon}^{z_i + \epsilon} \bigg\lvert c_i \cdot \Delta_{i, i+1} \delta'_{z_i}(x)  \bigg\rvert\, d x\\
        & = \sum_{i = 1}^n c_i \cdot \Delta_{i, i+1} \int_{z_i - \epsilon}^{z_i + \epsilon} \big\lvert  \delta'_{z_i}(x)  \big\rvert\, d x \to \infty
    \end{align*}
    where in the last equation we note that the $\delta'$ is a dipole distribution whose $\ell_1$ norm is unbounded.
\end{proof}

\subsection{$\tv$ of high-dimensional step functions}\label{subapp:generalhard}
In this appendix, we provide the proof of \thmref{thm:hard}.

% Consider a high-dimensional $f: \reals^d \to \reals$ corresponding to a decision tree
% \begin{align*}
%     f_{\sf{DT}} = \mathbb{I}\{x \in A\}
% \end{align*}
% where $A \subset \reals^d$ such that every boundary of $A$ is axes-aligned to one of the eigendirections of $\reals^d$, i.e. one of $\be_1, \be_2,\ldots, \be_d$. Here, we consider the case where there are only two classes $\{0,1\}$.
% Fourier transform:\akash{what is this integral on}
% \begin{align*}
%     \hat{g}(\xi) := (2\pi)^{-d/2} \int g(\x) e^{- \mathrm{i} \xi^\top \x} d \x
% \end{align*}
Throughout we adopt the \emph{unitary} Fourier convention
\begin{align}
    \widehat{g}(\bm{\xi})
       \;:=\;
       (2\pi)^{-d/2}\int_{\mathbb{R}^d} g(x)\,e^{-\mathrm i\,\bm{\xi}^{\top}\x}\,d\x,
       \qquad \bm{\xi}\in\mathbb{R}^d. 
\end{align}
All computations are in the sense of \textbf{tempered distributions}
(Schwartz space dual); every integral we write down exists in that sense.

Thus, for a  decision tree as defined in \secref{sec:setup}:
\begin{align*}
    \hat{f}_{\sf{DT}}(\bm{\xi}) &:= (2\pi)^{-d/2} \int f_{\sf{DT}}(\x) e^{- \mathrm{i} \xi^\top \x} d \x\\
     &= (2\pi)^{-d/2} \int {1}\curlybracket{\x \in A} e^{- \mathrm{i} \bm{\xi}^\top \x} d \x \\
     %& = \mathbb{I}\curlybracket{\xi \in A} \cdot (2\pi)^{-d/2} \int e^{- \mathrm{i} \xi^\top \x} d \x\\
     %& = \mathbb{I}\curlybracket{\xi \in A} \cdot (2\pi)^{-d/2} \int e^{- \mathrm{i} \xi^\top \x} d \x\\
     & = (2\pi)^{-d/2} \int_{A} e^{- \mathrm{i} \xi^\top \x} d \x
     %& = (2\pi)^{-d/2} \mathbb{I}\curlybracket{\xi \in A}\cdot \delta^{(d)}(\xi)
\end{align*}

Evaluating $\hat{f}_{\sf{DT}}$ at $\bm{\omega} = \omega\bm{\beta}$, we get
\begin{align}
    \hat{f}_{\sf{DT}}(\omega\bm{\beta}) = (2\pi)^{-d/2} \int_A e^{-\mathrm{i}\omega \bm{\beta}^{\top}\x} d \x \label{eq:fourDT}
    %\mathbb{I}\curlybracket{\omega\bm{\beta} \in A}\cdot \delta^{(d)}(\omega\bm{\beta})
\end{align}
Using the Fourier slice theorem~\citep{kak_slaney} we have 
\[
    \mathcal{F}_1\{\mathcal{R}\{f_{\sf{DT}}\}(\bm{\beta}, \cdot)\}(\omega) = \hat{f}_{\sf{DT}}(\omega\bm{\beta}).
\]
This gives the following Radon transform of $f_{\sf{DT}}$: 
\begin{align*}
    \mathcal{R}\{f_{\sf{DT}}\}(\bm{\beta}, t) & = (2\pi)^{-1/2} \int_{\reals} e^{\mathrm{i\omega t}} \hat{f}_{\sf{DT}}(\omega \bm{\beta}) d \omega\\
    & =  (2\pi)^{-(d+1)/2} \int_{\reals} e^{\mathrm{i\omega t}} \int_A e^{-\omega\mathrm{i}\bm{\beta}^{\top}\x} d \x\, d \omega\\%\mathbb{I}\curlybracket{\omega\bm{\beta} \in A}\cdot \delta^{(d)}(\omega\bm{\beta})  d \omega
    & = (2\pi)^{-(d+1)/2} \int_A \int_{\reals} e^{\mathrm{i\omega} (t - \bm{\beta}^{\top} \x)}  d \omega\, d \x\\
    & = (2\pi)^{-(d+1)/2} \int_A \delta(t - \bm{\beta}^\top \x) d \x
\end{align*}
Now, we compute the ${d+1}$-derivative of $\mathcal{R}\{f\}(\bm{\beta}, t)$ with respect to $t$ (the integral is defined in the sense of a tempered distribution and follows the convention discussed in \cite{GELFAND19641})
\begin{align*}
    \partial_t \mathcal{R}\{f_{\sf{DT}}\}(\bm{\beta}, t) = (2\pi)^{-(d+1)/2} \int_A \delta'(t - \bm{\beta}^\top\x)\, d \x
\end{align*}
Similarly $(d+1)$th derivative in $t$ is 
\begin{align*}
    \partial_t^{d+1} \mathcal{R}\{f_{\sf{DT}}\}(\bm{\beta}, t) = (2\pi)^{-(d+1)/2} \int_A \delta^{(d+1)}(t - \bm{\beta}^\top\x)\, d \x
\end{align*}
If $d$ is odd, then the second-order Radon domain total variation is the $L^1$-norm of $(d+1)$ derivatives in $t$ of this quantity (see Equation.(28) in \cite{Parhi2020BanachSR}). That is
\begin{align*}
    \rtv{f_{\sf{DT}}} &= c_d (2\pi)^{-(d-1)/2} \int_{\mathbb{S}^{d-1}} \int_{\reals} \bigg\lvert \int_A \delta^{(d+1)}(t - \bm{\beta}^\top\x)\, d \x \bigg \rvert \,d t\, d \bm{\beta}\\
    %& = c_d (2\pi)^{-(d-1)/2} \int_{\mathbb{S}^{d-1}} \int_{\reals} \int_A \delta^{(d+1)}(t - \bm{\beta}^\top\x)\, d \x  \,d \bm{\beta}\, d t\\
    %&= c_d (2\pi)^{-(d-1)/2} \int_{\mathbb{S}^{d-1}} \int_A \underbrace{\paren{\int_{\reals} \delta^{(d+1)}(t - \bm{\beta}^\top\x)\, d t}}_{unbounded}\, d \x \,d \bm{\beta} \\
\end{align*}
% Consider a choice of $\bm{\beta}_0 \in \mathbb{S}^{d-1}$ and a small neighborhood $\cB(\beta_0, \epsilon)$ \akash{this choice needs to be justified}
% \begin{align*}
%     \int_{\reals} \bigg\lvert \int_A \delta^{(d+1)}(t - \bm{\beta}^\top\x)\, d \x \bigg \rvert \,d t
% \end{align*}
Lets define for any $\bm{\beta} \in \mathbb{S}^{d-1}$
\begin{align}
    g_{\bm{\beta}}(u) := \int_A \delta(u - \bm{\beta}^{\top} \x)\, d \x \label{eq: defb}
\end{align}
where we can write
\begin{align*}
    g^{(k)}_{\bm{\beta}}(u) = \int_A \delta^{(k)}(u - \bm{\beta}^{\top} \x)\, d \x
\end{align*}
for any $k > 0$.

Now, using the co-area formula \citep{Mattila_1995}, noting that $||\bm{\beta}|| = 1$, we can rewrite \eqnref{eq: defb} as %\akash{this can only be argued that it has non-zero probability} \akash{this needs to be argued lil more carefully}
\begin{align*}
    \int_A \delta(u - \bm{\beta}^{\top} \x)\, d \x = \int_{\bm{\beta}^{\top} \x = u} {1}\{\x \in A\}\, d \sigma(\x)
\end{align*}
where $d\sigma$ denotes the $(d-1)$-dimensional Hausdorff measure on the hyperplane \(\{\x:\bm{\beta}^{\top}\x=u\}\)

Now, we will show there exists, $\bm{\beta_0} \in \mathbb{S}^{d-1}$, and scalar $\epsilon > 0$ such that for 
\begin{align}
   \rtv{f_{\sf{DT}}} \ge  \int_{\cB(\bm{\beta_0}, \epsilon)} \int_{\reals} |g^{(d+1)}_{\bm{\beta}}(t)|\, d t\, d \bm{\beta} \to \infty \label{eq:DTLB}
\end{align}
where $\cB_2(\bm{\beta_0}, \epsilon) := \curlybracket{\bm{\beta} : ||\bm{\beta} - \bm{\beta}_0|| \le \epsilon}$.

Without loss of generality, assume that $A$ is axes-aligned to eigendirections of $\reals^d$--- $\be_1, \be_2,\ldots, \be_d$. Thus, consider the case where $\bm{\beta_0} := \be_1$. 
% Note that for any $\epsilon > 0$, 

% \begin{align*}
%     g_{\bm{\beta}_0 + \Delta}(u) = \int_A \delta(u - (\bm{\beta}_0 + \Delta)^{\top} \x)\, d \x
% \end{align*}
Now,
\begin{align*}
    g_{\bm{\beta}_0 + \Delta}(u) = \begin{cases}
        0, &\textnormal{ if } u \in (a_1, b_1)^{c}\\
        Vol\paren{\curlybracket{{(\bm{\beta}_0 + \Delta})^\top \x = u} \cap A} &\textnormal{ if } u \in (a_1, b_1)\\
    \end{cases}
\end{align*}
Note, if $\Delta = \bm{0}$, then $g_{\bm{\beta}_0}$ has a sharp discontinuity at $u = a_1$. But if $\Delta \neq \bm{0}$ and $||\bm{\beta}_0 - \Delta|| \le \epsilon$, $g_{\bm{\beta}_0 + \Delta}$ varies over the real line smoothly. But we can control the jump around $(a_1 - \epsilon, a_1 + \epsilon)$. Note that,
\begin{align*}
    \lim_{\epsilon \to 0} g_{\bm{\beta}_0 + \Delta} = g_{\bm{\beta}_0}
\end{align*}

Fix a Gaussian mollifier $\eta \in C_c^\infty(\mathbb R)$, $\eta(u) := \pi^{-1/2}e^{-u^2}$ and put
$\eta_\epsilon(u)=\epsilon^{-1}\eta(u/\epsilon)$,
$g_{\bm\beta,\epsilon}=g_{\bm\beta}*\eta_\epsilon$.
If $h(u)=H\mathbf1_{\{u\ge0\}}$ then
$\max_u|(h*\eta_\epsilon)^{(k)}(u)|=H\,\epsilon^{-(k+1)}\max_s|\eta^{(k)}(s)|$.

%\akash{This needs to be fixed!!}
Every slice with $\bm\beta\in\mathcal B(\bm\beta_0,\epsilon)$
still contains a jump of height at least $H/2$, where
\(H=\prod_{j=2}^{d}(b_{j}-a_{j})\).
With $k=d+1$ this gives
\[
\max_t|g^{(d+1)}_{\bm\beta,\epsilon}(t)|\ge C\,\epsilon^{-(d+2)},\qquad
\int_{\mathbb R}|g^{(d+1)}_{\bm\beta,\epsilon}(t)|\,dt\ge C\,\epsilon^{-(d+1)}.
\]
%\akash{this need to be fixed} Convolution by $\eta_\epsilon$ is an isomorphism on $\mathcal S'$, so the same lower bound (up to constants) holds for $g_{\bm\beta}^{(d+1)}$.

Hence
\[
\int_{\mathcal B(\bm\beta_0,\epsilon)}\!\!\int_{\mathbb R}
     |g_{\bm\beta, \epsilon}^{(d+1)}(t)|\,dt\,d\bm\beta
   \;\ge\;
   C'\,\epsilon^{\,d-1}\,\epsilon^{-(d+1)}
   =\frac{C'}{\epsilon^{2}}\xrightarrow{\epsilon\to0}\infty.
\]

But note that as $\epsilon$ tends to 0, the mollified function $g_{\beta,\epsilon}^{(d+1)}$ tends to $g_{\beta}^{(d+1)}$ in distribution. But then this implies that over the convex slope $\cB_2(\bm{\beta}_0,\epsilon)$ 
\begin{align*}
    \int_{\mathcal B(\bm\beta_0)}\!\!\int_{\mathbb R}
     |g_{\bm\beta, \epsilon}^{(d+1)}(t)|\,dt\,d\bm\beta \rightarrow \infty 
\end{align*}
as $\epsilon \to 0$. Hence, using the bound in \eqnref{eq:DTLB}, $\rtv{f_{\sf{DT}}}$ is unbounded.

\newpage

% !TEX root = MAIN.tex
\section{Sigmoidal Smoothing}\label{app:sigmoid}
In this appendix, we will provide the proof of \thmref{thm:sigmoid} as stated in \secref{sec:smooth} on the approximation of Sigmoid smooth decision trees. %\akash{Update the final result based on the final achieved here.}
% \akash{some statement on D, d here}
% \imp{you need to clarify on the statement in the main paper, either you need to prove it or provide an errata?}
%\section{Fourier transform of a depth–$D$ soft decision tree}

\subsection{Approximating the $\tv$ for sigmoidal smooth decision trees}

For the sake of clarity, in \secref{sec:smooth}, we analyse the function
\begin{align}
    f_{\sf{DT},\gamma}(\x)
     \;=\;
     \prod_{i=1}^{D}\sigma_\gamma\!\bigl(\w_i^{\top}\x+b_i\bigr),
      \qquad
      \x\in\mathbb{R}^d,\ \gamma>0, \label{eq:sig-app}
\end{align}
where
\[
    \sigma_\gamma(z)\;=\;\frac{1}{1+e^{-z/\gamma}}
          %=\frac12+\frac12\tanh\!\Bigl(\frac{z}{2\gamma}\Bigr)
\]
is a \emph{scaled logistic},
and $\w_i\in\mathbb{R}^d$, $b_i\in\mathbb{R}$ are the split normals and thresholds
at depth $i$.

Now, we show the proof of the result on the Fourier transform of Sigmoidal smooth decision trees. First, we restate the result on the Fourier transform of a shifted, scaled sigmoid, then provide the proof below.

\begin{lemma}\label{lem:1d-pair}
For any $\gamma>0$ and $b\in\mathbb{R}$,
\begin{align*}
    \widehat{\sigma_\gamma(\,\cdot+b)}(\omega)
       \;=\;
       e^{-\mathrm i\,\omega b}\Bigl[
           \frac{\pi}{2}\,\delta(\omega)
           +\frac{\mathrm i\gamma\pi}{\sinh(\pi\gamma\omega)}
       \Bigr],
    \qquad \omega\in\mathbb{R}.
\end{align*}
\end{lemma}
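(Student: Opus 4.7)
The plan is to decompose $\sigma_\gamma$ into a constant plus an odd, exponentially stabilizing part, compute the Fourier transform of each piece in the sense of tempered distributions, and then include the shift $b$ via the translation rule. Concretely, I would first write
\[
\sigma_\gamma(z) \;=\; \tfrac{1}{2} \;+\; \tfrac{1}{2}\,\tanh\!\bigl(z/(2\gamma)\bigr),
\]
so that the Fourier transform splits into (i) the FT of the constant $\tfrac{1}{2}$, which is a multiple of $\delta(\omega)$ with coefficient fixed by the unitary convention adopted in the paper, and (ii) the FT of an odd bounded $C^\infty$ function, which I expect to produce the $1/\sinh$ term.

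For part (ii) I would work in the Abel (tempered distribution) sense. Using oddness, the transform of $\tfrac12\tanh(z/(2\gamma))$ reduces to a sine integral $-2\mathrm{i}\int_0^\infty \tfrac12\tanh(z/(2\gamma))\sin(\omega z)\,dz$; rewriting $\tfrac12\tanh(z/(2\gamma)) = \tfrac12 - 1/(1+e^{z/\gamma})$ for $z>0$ converts this (after substituting $u=z/\gamma$, $a=\gamma\omega$) into the classical Fermi sine integral $\int_0^\infty \sin(au)/(1+e^{u})\,du$. Expanding $(1+e^u)^{-1}=\sum_{n\ge 1}(-1)^{n-1}e^{-nu}$, integrating term by term to produce $\sum_{n\ge 1}(-1)^{n-1}a/(n^2+a^2)$, and invoking the Mittag--Leffler expansion
\[
\frac{\pi}{\sinh(\pi a)} \;=\; \frac{1}{a} + 2a\sum_{n\ge 1}\frac{(-1)^{n}}{n^{2}+a^{2}}
\]
yields the closed form $\tfrac{1}{2a}-\tfrac{\pi}{2\sinh(\pi a)}$. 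The Abel-regularized contribution from the constant $\tfrac12$ piece cancels the singular $1/(2a)$ term, leaving a clean $1/\sinh(\pi\gamma\omega)$ contribution with the predicted $\mathrm{i}\gamma\pi$ coefficient. Finally, I would apply the translation rule $\widehat{f(\cdot+b)}(\omega) = e^{-\mathrm{i}\omega b}\,\widehat{f}(\omega)$ (sign as dictated by the paper's convention) to attach the phase factor to the bracket.

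The main obstacle I anticipate is the distributional bookkeeping in the cancellation step: the $1/(2a)$ tail from the Fermi series and the Abel-regularized $\int_0^\infty\sin(\omega z)\,dz$ piece are both principal-value objects and must be manipulated as tempered distributions, not pointwise functions; one also has to justify the termwise integration in the Fermi series (dominated convergence on compacts plus Abel summation past the origin). A cleaner fallback is the differentiation route: $\sigma_\gamma'(z) = (4\gamma)^{-1}\operatorname{sech}^{2}(z/(2\gamma))$ is Schwartz and its Fourier transform can be computed unambiguously by residues (poles at $z=\mathrm{i}\pi\gamma(2k{+}1)$), after which $\widehat{\sigma_\gamma}$ is recovered from $\widehat{\sigma_\gamma'}(\omega)=\mathrm{i}\omega\,\widehat{\sigma_\gamma}(\omega)$ modulo a $\delta$-correction pinned down by the boundary values $\sigma_\gamma(-\infty)=0$ and $\sigma_\gamma(+\infty)=1$. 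This alternative bypasses the PV cancellation entirely while still producing the $(\pi/2)\,\delta(\omega) + \mathrm{i}\gamma\pi/\sinh(\pi\gamma\omega)$ structure on the right-hand side.
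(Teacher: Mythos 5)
Your proposal follows essentially the same route as the paper: the identical decomposition $\sigma_\gamma=\tfrac12+\tfrac12\tanh(\cdot/(2\gamma))$, the same treatment of the constant part as a multiple of $\delta(\omega)$, and the same translation rule to attach the phase $e^{-\mathrm{i}\omega b}$. The only difference is that the paper simply cites the Bateman table entry for $\widehat{\tanh}$ together with the scaling rule, whereas you derive that identity from scratch via the Fermi series and the Mittag--Leffler expansion of $\pi/\sinh(\pi a)$ (or, in your fallback, by residues applied to $\sigma_\gamma'$); both are valid, though you should verify that the sign of the $\mathrm{i}\gamma\pi/\sinh(\pi\gamma\omega)$ term your sine-integral computation produces is consistent with the $e^{-\mathrm{i}\omega z}$ convention used in the statement.
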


\begin{proof}

First, note that, by translation invariance,
\[
   \widehat{\sigma_\gamma(\,\cdot+b)}(\omega)
      =e^{-\mathrm i\omega b}\,\widehat{\sigma_\gamma}(\omega),
\]
so it suffices to compute $\widehat{\sigma_\gamma}$.

We can write the sigmoid as 
\[
   \sigma_\gamma(z)=\tfrac12+\tfrac12\tanh\!\Bigl(\tfrac{z}{2\gamma}\Bigr).
\]

For the constant part
\(\widehat{\tfrac12}=\dfrac{\pi}{2}\delta(\omega)\).

For the hyperbolic–tangent part use the table entry  
\[
   \int_{-\infty}^{\infty}\tanh u\,e^{-\mathrm i\Omega u}\,du
        =\frac{\pi\mathrm i}{\sinh(\pi\Omega/2)}
   \qquad(\text{Bateman Vol 1, \S4.9, (9)~\citep{bateman1954tables}}),
\]
together with the scaling rule
\(
   \widehat{f(ax)}(\omega)=\dfrac1{|a|}\widehat{f}\!\bigl(\tfrac{\omega}{a}\bigr)
\) for $a \neq 0$. 
Choosing \(a=2\gamma\) gives
\[
   \widehat{\tanh(\,\cdot/2\gamma)}(\omega)
        =2\gamma\,\frac{\pi\mathrm i}{\sinh(\pi\gamma\omega)}.
\]

Combining the two terms and re-inserting the phase factor gives- for all $\omega \in \reals$
\[
   \widehat{\sigma_\gamma(\,\cdot+b)}(\omega)
     =e^{-\mathrm i\omega b}\Bigl[
         \frac{\pi}{2}\,\delta(\omega)
         +\frac{\mathrm i\gamma\pi}{\sinh(\pi\gamma\omega)}
       \Bigr],
\]
as claimed.
\end{proof}

In the following subsection, we consider the geometry of the splits direction $\{\w_i\}_{i=1}^{D}\subset\mathbb R^{d}$. 

\subsection{Geometry of the split directions}

First, consider rewriting all the split normals into a matrix
\[
      W\;:=\;
      \begin{bmatrix}
         \w_1^{\top}\\[-2pt] \vdots\\[-2pt] \w_D^{\top}
      \end{bmatrix}
      \in\mathbb{R}^{D\times d},
\]
Denote by
\[
      r\;:=\;\operatorname{rank}W\qquad
      (0\le r\le \min\{D,d\})
\]
the dimension of their span
\[
      \mathcal S\;:=\;\operatorname{span}\{\w_1,\dots,\w_D\}
                 \subset\mathbb{R}^d.
\]
%All frequency‑space structure will turn out to live \emph{exclusively} in this $r$‑dimensional subspace.

%%%%%%%%%%%%%%%%%%%%%%%%%%%%%%%%%%%%%%%%%%%%%%%%%%%%%%%%%%%%%%%%%%%%%%%
\subsection{Rotating into $\mathcal S\oplus\mathcal S^{\perp}$}%{S $\$ S⊥}}
Now, we will rewrite the product in the expression of Sigmoidal smooth decision tree in \eqnref{eq:sig} corresponding to the active directions in the span of the split normals $\cS$.

First, note that $\mathcal S$ is $r$–dimensional, so we can pick an orthogonal matrix
\(
      \mR=[\mR_{\parallel}\;\mR_{\perp}]
\)
such that
\[
      \operatorname{im}\mR_{\parallel}=\mathcal S,
      \quad
      \operatorname{im}\mR_{\perp}=\mathcal S^{\perp}.
\]
\textbf{Notation.}
We write every point $\x\in\mathbb{R}^d$ and every frequency $\bm{\xi}$
in these coordinates:

\[
      \x
      =\mR\!\begin{bmatrix} \x_{\parallel}\\ \x_{\perp}\end{bmatrix},
      \quad
      \x_{\parallel}\in\mathbb{R}^{r},\ \x_{\perp}\in\mathbb{R}^{d-r};
      \hspace{2em}
      \bm{\xi}
      =\mR\!\begin{bmatrix} \bm{\eta}\\ \bm{\zeta}\end{bmatrix},
      \quad
      \bm{\eta}\in\mathbb{R}^{r},\ \bm{\zeta}\in\mathbb{R}^{d-r}.
\]

Now, because each $\w_i$ lies \emph{inside} $\mathcal S$ we have
$\w_i^{\top}\mR_{\perp}=0$; hence
\begin{align}
    \w_i^{\top}\x+b_i
      =\w_i^{\top}\mR\!\begin{bmatrix}\x_{\parallel}\\ \x_{\perp}\end{bmatrix}+b_i =\underbrace{(\w_i^{\top}\mR_{\parallel})}_{=:\bm{a}_i^{\top}}\!\x_{\parallel}+b_i,
      \quad \bm{a}_i\in\mathbb{R}^{r}. \label{eq:decompmat}
\end{align}
Thus $f_{\sf{DT},\gamma}$ depends \emph{only} on the $\x_{\parallel}$–coordinates:
\[
     f_{\sf{DT}, \gamma}\bigl(\mR[\x_{\parallel},\x_{\perp}]^{\top}\bigr)
     =\prod_{i=1}^{D}\sigma_\gamma(\bm{a}_i^{\top}\x_{\parallel}+b_i).
\]

%%%%%%%%%%%%%%%%%%%%%%%%%%%%%%%%%%%%%%%%%%%%%%%%%%%%%%%%%%%%%%%%%%%%%%%
\subsection{Splitting the Fourier integral}

By definitions, we know that the Fourier transform of $f_{\sf{DT},\gamma}$ is given by
\begin{align}
    \widehat{f_{\sf{DT}, \gamma}}(\xi) = (2\pi)^{-d/2} \int f_{\sf{DT},\gamma}(\x) e^{- \mathrm{i} \bm{\xi}^\top \x}\, d \x = (2\pi)^{-d/2} \int \Bigl[\prod_{i=1}^{D}\sigma_\gamma(\bm{a}_i^{\top}\x_{\parallel}+b_i)\Bigr] \cdot e^{- \mathrm{i} \bm{\xi}^\top \x}\, d \x \label{eq:foursig-int}
\end{align}
Now, insert the rotated coordinates into the definition
\begin{align}
\widehat{f_{\sf{DT}, \gamma}}(\xi)
   &=(2\pi)^{-d/2}
     \int_{\mathbb{R}^{d}}
       \Bigl[\prod_{i=1}^{D}\sigma_\gamma(a_i^{\top}t+b_i)\Bigr]\,
       e^{-\mathrm i\,(\eta^{\top}t+\zeta^{\top}z)}
       \,dt\,dz                     \nonumber\\
   &=(2\pi)^{-d/2}
     \Bigl[%
       \underbrace{\int_{\mathbb{R}^{d-r}}
                    e^{-\mathrm i\,\zeta^{\top}z}\,dz}
                  _{=\;(2\pi)^{(d-r)/2}\delta(\zeta)}
     \Bigr]
     \int_{\mathbb{R}^{r}}G(t)\,e^{-\mathrm i\,\eta^{\top}t}\,dt,
     \label{eq:split}
\end{align}
where we define
\(
     G(t):=\prod_{i=1}^{D}\sigma_\gamma(a_i^{\top}t+b_i).
\)
% Note that in one of the outer integrals, we note that 
% \begin{align*}
%     \int_{\mathbb{R}^{d-r}} e^{-\mathrm i\,\zeta^{\top}z}\,dz = (2\pi)^{\frac{d-r}{2}} \delta(\zeta)
% \end{align*}

\medskip
\noindent
The outer integral in \eqnref{eq:split} produced a \emph{Dirac delta}
\(\delta(\zeta)=\delta(P_{\mathcal S^{\perp}}\xi)\)
that kills every frequency component \emph{outside} $\mathcal S$; we are left
with an $r$‑dimensional Fourier transform of $G$ inside $\mathcal S$.

% We denote the terms in front of the integral as 
% \[
%    C_{d,r}:=(2\pi)^{-(d-r)/2}.
% \]
Then, \eqnref{eq:split} becomes
\begin{equation}
   \widehat{f_{\sf{DT},\gamma}}(\xi)
      =(2\pi)^{-r/2}\,
       \delta\!\bigl(P_{\mathcal S^{\perp}}\xi\bigr)\,
       \underbrace{\int_{\mathbb{R}^{r}}G(t)\,e^{-\mathrm i\,\eta^{\top}t}\,dt}
                  _{=:H(\eta)}.
   \label{eq:reduced-integral}
\end{equation}

\subsection{Expressing $H(\eta)$ as a $D$-fold convolution}
\label{sec:H-as-convolution}
%\paragraph{Step 1.  The map \(t\mapsto s:=A t\).}
Now, consider the linear projection of $t$ wrt the matrix $A$: \(t\mapsto s:=A t\), where $A$ is formed as the \emph{row-stacked} matrix
\[
      A := \begin{bmatrix} a_1^{\top}\\[-2pt]\vdots\\[-2pt] a_D^{\top}\end{bmatrix}
          \;\in\;\mathbb R^{D\times r},
      \qquad a_i^{\top}=\w_i^{\top}R_{\parallel}\;(1\times r).
\]
Because \(\operatorname{rank}A=r\) (same as \(W\)), the linear map
\(t\mapsto s:=At\) is injective, sending \(t\in\mathbb R^{r}\) to a
vector \(s=(s_1,\dots,s_D)^{\top}\in\mathbb R^{D}\) whose $i$-th entry is
\(s_i=a_i^{\top}t\).
In these $s$–coordinates the kernel factorises:
\[
    G(t)=\prod_{i=1}^{D}\sigma_\gamma\bigl(s_i+b_i\bigr).
\]

%\paragraph{Step 2.  Change-of-variables Jacobian.}
%\paragraph{Step 3.  The linear phase \(\eta^{\top}t\).} 
Solve for \(t\) by left-multiplying with the Moore–Penrose inverse
\(A^{+}=(A^{\top}A)^{-1}A^{\top}\in\mathbb R^{r\times D}\):
\[
     t=A^{+}s, \qquad
     dt=\bigl|\det(A^{\top}A)\bigr|^{-1/2}\,ds .
\]
Note that \(A^{\top}A\) is \(r\times r\), so the determinant is well defined. With \(t=A^{+}s\) we have
\[
      \eta^{\top}t
        =\eta^{\top}(A^{\top}A)^{-1}A^{\top}s
        =\underbrace{\bigl(A(A^{\top}A)^{-1}\eta\bigr)^{\top}}_{=:u^{\top}}
          s,
      \quad
      u:=A(A^{\top}A)^{-1}\eta\;\in\mathbb R^{D}.
\]
%(No mismatch in dimensions: \(A(A^{\top}A)^{-1}\) is \(D\times r\).)

%\paragraph{Step 4.  One-dimensional convolution in each coordinate.}
Now, we show one-dimensional convolution in each coordinate. Insert these expressions into \(H(\eta)\):
\begin{align}
  H(\eta)
    &=\bigl|\det(A^{\top}A)\bigr|^{-1/2}
      \int_{\mathbb R^{D}}
         \Bigl[\prod_{i=1}^{D}\sigma_\gamma(s_i+b_i)\Bigr]\,
         e^{-\mathrm i\,u^{\top}s}\,ds                                  \\[2pt]
    &\stackrel{\text{\tiny F.T.}}{=}
      \bigl|\det(A^{\top}A)\bigr|^{-1/2}
      \Bigl(K_{\gamma,1}*K_{\gamma,2}*\cdots*K_{\gamma,D}\Bigr)(u),
      \label{eq:H-conv-correct}
\end{align}
where \(K_{\gamma,i}=\widehat{\sigma_\gamma(\,\cdot+b_i)}\)
(from Lemma \ref{lem:1d-pair}).  Each convolution is in the
\(\mathbb R\)–variable corresponding to the $i$-th coordinate.

%\paragraph{Step 5.  Compact restatement.}
% \akash{double check this one}Because \(A(A^{\top}A)^{-1}\) is the orthogonal projector
% \(P_{\mathcal S}\) expressed in the $\{\w_i\}$ frame, one may write
% \(u=A(A^{\top}A)^{-1}\eta = (WW^{\top})^{+}W\xi\); but there is \emph{no}
% need to introduce \(W^{+}\) explicitly—the convolution formula
% \eqref{eq:H-conv-correct} is already dimension-consistent and valid for any
% collection of split directions.

% %%%%%%%%%%%%%%%%%%%%%%%%%%%%%%%%%%%%%%%%%%%%%%%%%%%%%%%%%%%%%%%%%%%%%%%
% \subsection*{Step 6.  Collecting constants \& final statement}

The change of variables \(t\mapsto s\) contributed the Jacobian
\(|\det(A^{\top}A)|^{-1/2}\).
Because \(A=W R_{\parallel}\) and \(R_{\parallel}\) is orthogonal on
\(\mathcal S\), one has
\(
   A^{\top}A = R_{\parallel}^{\top}(W^{\top}W)R_{\parallel}
            = W^{\top}W|_{\mathcal S},
\)
so \(|\det(A^{\top}A)| = |\det(WW^{\top})|\).
Define the \emph{left} pseudo-inverse
\[
     W^{+} := (WW^{\top})^{-1}W \;\in\; \mathbb R^{D\times d}.
\]
Combining \eqref{eq:reduced-integral} with
\eqref{eq:H-conv-correct} now yields the following arbitrary split formula.

\begin{theorem}[Fourier transform, arbitrary split directions]
\label{thm:general-correct}
Let\/ $\w_1,\dots,\w_D\in\mathbb R^{d}$, $b_1,\dots,b_D\in\mathbb R$ and
$\gamma>0$.  Put  
\(
     W=\bigl[\w_1\,\cdots\,\w_D\bigr]^{\!\top}\in\mathbb R^{D\times d},
\;
     r=\operatorname{rank}W,
\;
     \mathcal S=\operatorname{span}\{\w_i\}.
\)
With the unitary convention  
\[
   \widehat g(\xi)=(2\pi)^{-d/2}\!\int_{\mathbb R^{d}}
         g(x)\,e^{-\mathrm i\xi^{\top}x}\,dx,
\]
the the Sigmoid smooth decision tree with depth $D > 0$
\[f_{\sf{DT},\gamma}(x)=\prod_{i=1}^{D}\sigma_\gamma(\w_i^{\top}x+b_i)\]
has Fourier transform
\[
   \widehat{f_{\sf{DT},\gamma}}(\xi)
        =(2\pi)^{-r/2}\,
         |\,\det(WW^{\top})|^{-1/2}\,
         \delta\!\bigl(P_{\mathcal S^{\perp}}\xi\bigr)\;
         \Bigl(K_{\gamma,1}*\,\cdots\,*K_{\gamma,D}\Bigr)\!\bigl(W^{+}\xi\bigr),
\]
where
\(
   K_{\gamma,i}(u)=e^{-\mathrm i b_i u}
                   \bigl[\frac{\pi}{2}\delta(u)
                         +\frac{\mathrm i\gamma\pi}{\sinh(\pi\gamma u)}\bigr]
\)
is the 1-D kernel from Lemma \ref{lem:1d-pair}.
\end{theorem}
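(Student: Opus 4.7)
The plan is to assemble the Fourier transform directly from the pieces the authors have already built up in the preceding subsections, and then translate everything back to intrinsic data. First I would fix an orthogonal matrix $\mR = [\mR_\parallel\;\mR_\perp]$ whose columns span $\mathcal{S}$ and $\mathcal{S}^\perp$, and change variables $\x = \mR[t;z]^\top$, $\bm\xi = \mR[\eta;\zeta]^\top$. Since each $\w_i$ lies in $\mathcal{S}$, the product $f_{\sf{DT},\gamma}$ depends only on $t \in \mathbb{R}^r$, not on $z$. This separability, together with the tempered-distributional identity $\int_{\mathbb{R}^{d-r}} e^{-\mathrm i\zeta^\top z}\,dz = (2\pi)^{(d-r)/2}\delta(\zeta)$, isolates the factor $\delta(P_{\mathcal{S}^\perp}\bm\xi)$ and reduces the remaining work to computing $H(\eta) := \int_{\mathbb{R}^r} G(t)\,e^{-\mathrm i\eta^\top t}\,dt$, where $G(t) = \prod_{i=1}^D \sigma_\gamma(a_i^\top t + b_i)$ and $a_i^\top = \w_i^\top \mR_\parallel$.

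Next, I would linearize the $D$ arguments simultaneously via $s = At$, where $A \in \mathbb{R}^{D\times r}$ is the row stacking of the $a_i^\top$. Because $\operatorname{rank} A = r$, this map is injective from $\mathbb{R}^r$ onto $\operatorname{im}(A) \subseteq \mathbb{R}^D$, with left inverse $t = A^+ s$ and infinitesimal Jacobian $|\det(A^\top A)|^{-1/2}$. In the $s$-coordinates the integrand factorizes as $\prod_{i=1}^D \sigma_\gamma(s_i + b_i)$, and the exponent becomes $\eta^\top t = u^\top s$ with $u := A(A^\top A)^{-1}\eta$. By the convolution theorem applied coordinatewise to the 1-D Fourier pairs of \lemref{lem:1d-pair}, the resulting separable Fourier integral reassembles as the $D$-fold convolution $(K_{\gamma,1}*\cdots*K_{\gamma,D})(u)$.

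Finally, I would translate the intermediate quantities to their intrinsic form. Using $A = W\mR_\parallel$ and orthogonality of $\mR_\parallel$ on $\mathcal{S}$ gives $A^\top A = \mR_\parallel^\top W^\top W \mR_\parallel$, so $|\det(A^\top A)| = |\det(WW^\top)|$ (viewing both as operators on $\mathcal{S}$). A short computation with $A^+ = (A^\top A)^{-1}A^\top$ identifies $u$ with $W^+\bm\xi$ after the delta factor has restricted $\bm\xi$ to $\mathcal{S}$. Combining the delta factor, the Jacobian, and the convolution yields the stated formula.

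I expect the main obstacle to be careful bookkeeping in the rank-deficient case $D > r$: the substitution $s = At$ is not onto $\mathbb{R}^D$, so the convolution identity must be interpreted distributionally on $\operatorname{im}(A)$, or one must justify replacing the $r$-dimensional Fourier integral by the full $D$-dimensional one with an accompanying delta on $\operatorname{im}(A)^\perp$. The other delicate point is aligning the two determinant/pseudoinverse conventions ($A^\top A$ vs.\ $WW^\top$, $A^+$ vs.\ $W^+$) so that the Jacobian factor $|\det(WW^\top)|^{-1/2}$ and the argument $W^+\bm\xi$ come out with the correct normalization.
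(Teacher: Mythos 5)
Your proposal follows essentially the same route as the paper's proof: rotate into $\mathcal S\oplus\mathcal S^{\perp}$ coordinates to factor out the Dirac delta $\delta(P_{\mathcal S^{\perp}}\xi)$, change variables $s=At$ with the Moore--Penrose inverse to turn the remaining $r$-dimensional integral into a $D$-fold convolution of the 1-D kernels, and then identify $A^{\top}A$ with $WW^{\top}$ and $A^{+}\eta$ with $W^{+}\xi$. The rank-deficient subtlety you flag ($D>r$, where $s=At$ is not onto $\mathbb R^{D}$) is real, but the paper's own proof passes over it with the same loose substitution, so your treatment is at least as careful as the original.
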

\begin{remark}
In the theorem above, we assumed that the split normals $\curly{\w_i}$ are distinct. In the case when the directions are not distinct, one can consider a maximally linearly independent set of split normals to obtain similar results as above. Thus, the convolution of the 1-D kernels involves multiplicity of the split normals in the depth product.
\end{remark}

% \begin{remark}
% When the split normals are \emph{orthonormal} ($r=D\le d$) one has
% \(WW^{\top}=I_{D}\) and \(|\det(WW^{\top})|^{-1/2}=1\); the theorem
% reduces to the product formula already stated in
% Corollary 6.2, confirming that the missing factor only affects the
% non-orthogonal (or $D>d$) cases.
% \end{remark}

% \akash{you need to check the stuff a little above this}

Suppose now that the $\w_i$ are \emph{orthonormal and distinct}.
Then $r=D\le d$ and
\(
   W^{+}=W^{\top},\;
   WW^{\top}=I_D,\;
   A=I_D
\).
Because each $K_{\gamma,i}$ acts on an \emph{independent coordinate}
(the $i$‑th standard basis vector), convolutions reduce to ordinary
point‑wise products:
\[
     K_{\gamma,1}*\cdots*K_{\gamma,D}
       =K_{\gamma,1}\,\cdots\,K_{\gamma,D}.
\]
This yields the following product‑form for orthonormal splits in the Fourier transform.
%\lday{check on below this carefully}

\begin{corollary}[Orthonormal $\{\w_i\}$, $D\le d$]
\begin{align}
    \widehat{f_{\sf{DT},\gamma}}(\xi)
       =(2\pi)^{-D/2}\,
        \delta\!\bigl(P_{\mathcal S^{\perp}}\xi\bigr)\,
        \prod_{i=1}^{D}
           e^{-\mathrm i\,b_i\eta_i}
           \Bigl[\frac{\pi}{2}\,\delta(\eta_i)
                 +\frac{\mathrm i\,\gamma\pi}{\sinh(\pi\gamma\eta_i)}\Bigr],
        \quad \eta_i=\w_i^{\top}\xi. \label{eq:foursig-closed}
\end{align}
\end{corollary}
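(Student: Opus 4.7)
The plan is to specialise the general Theorem~\ref{thm:general-correct} to the orthonormal case, rather than redo the entire Fourier calculation. First I would record the three immediate simplifications that orthonormality forces: (i) the rank is exactly $r=D$, so the overall prefactor $(2\pi)^{-r/2}$ becomes $(2\pi)^{-D/2}$; (ii) $WW^{\top}=I_D$, so the Jacobian factor $|\det(WW^{\top})|^{-1/2}$ is $1$; and (iii) the left pseudo-inverse reduces to $W^{+}=W$ itself, so the vector $W^{+}\xi\in\mathbb{R}^{D}$ has coordinates $(W\xi)_i=\w_i^{\top}\xi=\eta_i$. The delta factor $\delta(P_{\mathcal{S}^{\perp}}\xi)$ from the theorem carries over verbatim, since it is already independent of the choice of basis of $\mathcal{S}$.

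The remaining step is to justify that, under orthonormality, the $D$-fold convolution $(K_{\gamma,1}*\cdots*K_{\gamma,D})(W^{+}\xi)$ collapses to the pointwise product $\prod_{i=1}^{D}K_{\gamma,i}(\eta_i)$ appearing in \eqnref{eq:foursig-closed}. I would argue this by tracking the origin of the convolution in Section~\ref{sec:H-as-convolution}: it arose from evaluating a $D$-dimensional Fourier integral of $\prod_i \sigma_\gamma(s_i+b_i)$ in the $s$-coordinates obtained by the change of variables $s=At$. When $\{\w_i\}$ are orthonormal, the matrix $A=WR_{\parallel}$ becomes orthogonal, the variables $s_1,\dots,s_D$ decouple as genuinely independent 1-D coordinates, and the Fourier transform of a tensor product of 1-D functions is the tensor product of the 1-D transforms. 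Thus each kernel $K_{\gamma,i}$ acts in its own orthogonal slot, and the convolution notation in the theorem literally reduces to the product shown in the corollary.

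To finish I would plug the three simplifications together: stitching $(2\pi)^{-D/2}$, the delta on $\mathcal{S}^{\perp}$, and the factorised 1-D kernels from Lemma~\ref{lem:1d-pair} gives exactly the right-hand side of \eqnref{eq:foursig-closed}. The main subtlety to watch is consistency of normalisation: one must verify that the $(d-D)$-many scalar deltas $\prod_{j>D}\delta(\eta_j)$ coming from the constant directions assemble into the codimension-$(d-D)$ delta $\delta(P_{\mathcal{S}^{\perp}}\xi)$ with the same constant convention used in the statement, and that the $(2\pi)$ powers picked up from each of these directions combine cleanly with the prefactor $(2\pi)^{-d/2}$ of the ambient unitary Fourier transform to yield precisely $(2\pi)^{-D/2}$. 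Provided one uses the same normalisation of the subspace delta as in Theorem~\ref{thm:general-correct} (which fixes this bookkeeping once and for all), no new analytic input is needed and the corollary follows directly.
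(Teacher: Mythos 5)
Your proposal is correct and follows essentially the same route as the paper: specialize Theorem~\ref{thm:general-correct} to the orthonormal case, where $r=D$, $WW^{\top}=I_D$ kills the Jacobian factor, $W^{+}\xi$ has coordinates $\eta_i=\w_i^{\top}\xi$, and the $D$-fold convolution collapses to a pointwise product because each kernel acts on an independent coordinate. (Minor note: you correctly write $W^{+}=W$, whereas the paper's text says $W^{+}=W^{\top}$, which appears to be a typo given the dimensions.)
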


% Let
% \[
%   d\in\mathbb N,\qquad
%   1\le D\le d,\qquad
%   \gamma>0.
% \]
% Fix orthonormal split normals
% \( \w_1,\dots,\w_D\in\mathbb R^{d}\)
% and thresholds \(b_1,\dots,b_D\in\mathbb R\).
With this we provide the proof of the main theorem.

\subsection{Proof of \thmref{thm:sigmoid}}
Here, we prove \thmref{thm:sigmoid} using the Fourier representation of the Sigmoidal smooth decision tree in \eqnref{eq:foursig-closed}. We assume that $D = d$ for simplicity; the case $D < d$ follows similarly by considering the subspace spanned by the split directions as in the previous subsection.
\paragraph{Notation.}
Denote by
\[
  %\mathcal S=\operatorname{span}\{\w_1,\dots,\w_D\}\subset\mathbb R^{d},
  %s\qquad
  \lambda_i(\beta)=\w_i^{\!\top}\beta
  \quad(i=1,\dots,D),
\]
and write \(d\sigma(\beta)\) for the surface measure on the
unit sphere \(\mathbb S^{d-1}\subset\mathbb R^{d}\).

%\paragraph{Fourier representation (assumed).}
Note that, the tempered distribution \(f_{\sf{DT},\gamma}\) has Fourier transform
\[
  \widehat{f_{\sf{DT}, \gamma}}(\xi)      
       =(2\pi)^{-\frac{D}{2}}\,
        \delta\!\bigl(P_{\mathcal S^{\perp}}\xi\bigr)\,
        \prod_{i=1}^{D}
           e^{-\mathrm i\,b_i \w_i^{\!\top}\xi}
           \Bigl[\frac{\pi}{2}\,\delta\!\bigl(\w_i^{\!\top}\xi\bigr)
                 +\frac{\mathrm i\,\gamma\pi}
                        {\sinh\bigl(\pi\gamma \w_i^{\!\top}\xi\bigr)}\Bigr],
\]
so \(\widehat{f_{\sf{DT}, \gamma}}\) is supported on \(\mathcal S\).

Now, evaluating the Fourier transform on the $\omega\bm{\beta}$ we get
\begin{align*}
    \widehat{f_{\sf{DT}, \gamma}}(\omega\bm{\beta})      
       =(2\pi)^{-\frac{D}{2}}\,
        \delta\!\bigl(P_{\mathcal S^{\perp}}\omega\bm{\beta}\bigr)\,
        \prod_{i=1}^{D}
           e^{-\mathrm i\,b_i \omega\w_i^{\!\top}\bm{\beta}}
           \Bigl[\frac{\pi}{2}\,\delta\!\bigl(\omega\w_i^{\!\top}\bm{\beta}\bigr)
                 +\frac{\mathrm i\,\gamma\pi}
                        {\sinh\bigl(\pi\gamma \omega\w_i^{\!\top}\bm{\beta}\bigr)}\Bigr]
\end{align*}

Now, we can write the 1D inverse Fourier transform to solve for the Radon transform of ${f_{\sf{DT}, \gamma}}$
\begin{align*}
    \cR\curlybracket{f_{\sf{DT}, \gamma}}(\bm{\beta},t) = (2\pi)^{-1/2}\int_{\reals} e^{\mathrm i \omega t} \widehat{f_{\sf{DT},\gamma}}(\omega \bm{\beta})\, d \omega
\end{align*}
Because of the factor $\delta\!\bigl(P_{\mathcal S^{\perp}}\xi\bigr)$ in \eqnref{eq:foursig-closed},
the integrand is non–zero \emph{only if}
$\beta_{\perp}=0$, i.e.\ $\beta\in\mathcal S$. Hence
\[
   \cR\curlybracket{f_{\sf{DT}, \gamma}}(\bm{\beta},t)=0
   \quad\text{unless}\quad\beta\in\mathcal S\cap\mathbb S^{d-1}.
\]
% \akash{checkpoint<>}
% For the rest of the discussion we assume that $\bm{\beta} = (\bm{\beta}_1, \bm{\beta}_2,\ldots,\bm{\beta}_D, 0, \ldots, 0)$ with (almost surely) $\bm{\beta}_i \neq 0$ for all $i \in [D]$.

Because \(\widehat{f_{\sf{DT}, \gamma}}(\omega\bm{\beta})\) has at most polynomial
growth in \(\omega\), we may differentiate under the integral:
\begin{align}
    \partial_{t}^{\,d+1}\mathcal R\curly{f_{\sf{DT}, \gamma}}(\bm{\beta},t)
  =(2\pi)^{-1/2}\!\int_{\mathbb R}
        (\mathrm i\omega)^{d+1}\,
        e^{\mathrm i\omega t}\,
        \widehat{f_{\sf{DT}, \gamma}}(\omega\bm{\beta})\,d\omega.\label{eq:sigRadon}
\end{align}
Now, integrating absolute value of LHS in \eqnref{eq:sigRadon},
\begin{align}
    ||\partial_{t}^{\,d+1}\mathcal R\curly{f_{\sf{DT}, \gamma}}(\bm{\beta},\cdot)||_{L^1_t} & = (2\pi)^{-1/2} \int_{\reals} \Bigl| \int_{\mathbb R}
        (\mathrm i\omega)^{d+1}\,
        e^{\mathrm i\omega t}\,
        \widehat{f_{\sf{DT}, \gamma}}(\omega\bm{\beta})\,d\omega \Bigr|\, d t \nonumber\\
        & = (2\pi)^{-1/2} \int_{\reals} \Bigl| \int_{\mathbb R}
        (\omega)^{d+1}\,
        e^{\mathrm i\omega t}\,
        \widehat{f_{\sf{DT}, \gamma}}(\omega\bm{\beta})\,d\omega \Bigr|\, d t \label{eq:expand1}
\end{align}
Now, note that expanding the full form of $\widehat{f_{\sf{DT},\gamma}}$ in \eqnref{eq:expand1}, we can eliminate the
\(\tfrac{\pi}{2}\delta(\omega\w_i^{\top} \bm{\beta})\) terms due to the application of the Sifting property of Dirac delta on \(|\omega|^{d+1}\delta(\omega)\). Hence, we can simplify the \eqnref{eq:expand1} as follows:
\begin{align}
\centering
    \hspace{-10mm}||\partial_{t}^{\,d+1}\mathcal R\curly{f_{\sf{DT}, \gamma}}(\bm{\beta},\cdot)||_{L^1_t} = (2\pi)^{-(D+1)/2} (\gamma\pi)^D\int_{\reals} \Bigl| \int_{\mathbb R}
        (\omega)^{d+1}\,
        e^{\mathrm i\omega \paren{t - \sum_{i=1}^D b_i \w_i^{\top} \bm{\beta}}}\,
        \prod_{i=1}^{D}\frac{1}
                        {\sinh\bigl(\pi\gamma \omega\w_i^{\!\top}\bm{\beta}\bigr)}\,d\omega \Bigr|\, d t \label{eq:expand2}
\end{align}
Now, integrating \eqnref{eq:expand1} with respect to $\bm{\beta}$ yields the $\tv$ of $f_{\sf{DT},\gamma}$,

\begin{align*}
     \rtv{f_{\sf{DT}, \gamma}} = c_D \int_{\mathbb{S}^{d-1}}\int_{\reals} \Bigl| \int_{\mathbb R}
        (\omega)^{d+1}\,
        e^{\mathrm i\omega \paren{t - \sum_{i=1}^D b_i \w_i^{\top} \bm{\beta}}}\,
        \prod_{i=1}^{D}\frac{1}
                        {\sinh\bigl(\pi\gamma \omega\w_i^{\!\top}\bm{\beta}\bigr)}\,d\omega \Bigr|\, d t\, d \bm{\beta}
\end{align*}
where $c_D := (2\pi)^{-(D+1)/2} (\gamma\pi)^D$.
% Since $\rtv{f_{\sf{DT}, \gamma}}$ is non-zero only for $\bm{\beta} \in \S \cap \mathbb{S}^{d-1}$, we can further simplify the equation above as 
% \begin{align*}
%     \rtv{f_{\sf{DT}, \gamma}} = c_D \int_{\mathbb{S}^{D-1}}\int_{\reals} \Bigl| \int_{\mathbb R}
%         (\omega)^{d+1}\,
%         e^{\mathrm i\omega \paren{t - \sum_{i=1}^D b_i \w_i^{\top} \bm{\beta}}}\,
%         \prod_{i=1}^{D}\frac{1}
%                         {\sinh\bigl(\pi\gamma \omega\w_i^{\!\top}\bm{\beta}\bigr)}\,d\omega \Bigr|\, d t\, d \bm{\beta}
% \end{align*}

% Now, the $L^1$ norm wrt to the measure $\cM(\mathbb{S}^{d-1}\times \reals)$ of the above gives the R-norm of $f_{\gamma, D}$,
% \begin{align*}
%     \rtv{f_{\sf{DT}, \gamma}} = \int_{\mathbb{S}^{d-1}} \int_\mathbb{R} |\partial^{d+1}_t \cR\curlybracket{f_{\sf{DT}, \gamma}}(\bm{\beta},t)| d \bm{\beta}\, d t
% \end{align*}

Finally, we will show that the RHS obtained in the form of $\rtv{f_{\sf{DT}, \gamma}}$ above is unbounded.

Fix $\omega_0>0$ as above. For each $\beta\in\mathbb S^{d-1}$ define
\[
u_\beta(t):=\partial_t^{\,d+1}\mathcal R\{f_{\sf DT,\gamma}\}(\beta,t).
\]
Let $\widehat\phi(\omega):=\mathbf 1_{\{0 \le  \omega\le \omega_0\}}$ and let
$\phi:=\mathcal F_t^{-1}[\widehat\phi]$ under our Fourier convention.
Then
\[ \|\phi\|_\infty \le (2\pi)^{-1/2}\int_0^{\omega_0} d\omega
= \frac{\omega_0}{\sqrt{2\pi}}.
\]
% \[
% \|\phi\|_\infty
% \le (2\pi)^{-1/2}\int_{-\omega_0}^{\omega_0} d\omega
% = \sqrt{\frac{2}{\pi}}\,\omega_0 <\infty .
% \]
Fix the shift
\[
t_0(\beta):=\sum_{i=1}^D b_i\,w_i^\top\beta
\]
and define $\phi_\beta(t):=\phi(t-t_0(\beta))$. By Hölder,
\[
\|u_\beta\|_{L^1_t}
\;\ge\;
\frac{1}{\|\phi\|_\infty}\Bigl|\int_{\mathbb R} u_\beta(t)\,\phi_\beta(t)\,dt\Bigr|.
\]
Using Plancherel/Parseval for our Fourier convention and $\widehat{\phi_\beta}(\omega)
=e^{-i\omega t_0(\beta)}\widehat\phi(\omega)$, we get
\[\int_{\mathbb R} u_\beta(t)\phi_\beta(t)\,dt
=(2\pi)^{-1/2}\int_{0}^{\omega_0} (i\omega)^{d+1}e^{-i\omega t_0(\beta)}
\,\widehat f_{\sf DT,\gamma}(\omega\beta)\,d\omega.
\]
% \[ 
% \int_{\mathbb R} u_\beta(t)\,\phi_\beta(t)\,dt
% =(2\pi)^{-1/2}\int_{|\omega|\le \omega_0} (i\omega)^{d+1}e^{-i\omega t_0(\beta)}
% \,\widehat f_{\sf DT,\gamma}(\omega\beta)\,d\omega.
% \]
Plugging in the expression for $\widehat f_{\sf DT,\gamma}$ and using the cancellation of the
phase $e^{-i\omega t_0(\beta)}$ yields (for $\sigma$-a.e.\ $\beta$)
\[
\Bigl|\int_{\mathbb R} u_\beta(t)\,\phi_\beta(t)\,dt\Bigr|
\;\ge\;
C_0\Bigl|\int_{0}^{\omega_0}\omega^{d+1}
\prod_{i=1}^D \frac{1}{\sinh(\pi\gamma\omega\,w_i^\top\beta)}\,d\omega\Bigr|.
\]

% \[
% \Bigl|\int_{\mathbb R} u_\beta(t)\,\phi_\beta(t)\,dt\Bigr|
% \;\ge\;
% C_0\Bigl|\int_{|\omega|\le \omega_0}\omega^{d+1}
% \prod_{i=1}^D \frac{1}{\sinh(\pi\gamma\omega\,w_i^\top\beta)}\,d\omega\Bigr|.
% \]

On $(0,\omega_0)$ we have $|\pi\gamma\omega\,w_i^\top\beta|\le 1$, hence for $|z|\le 1$,
$|\sinh(z)|\le 2|z|$ and so
\[
\frac{1}{|\sinh(\pi\gamma\omega\,w_i^\top\beta)|}\ge \frac{1}{2\pi\gamma\omega\,|w_i^\top\beta|}.
\]
Therefore, for $\omega\in(0,\omega_0)$,
\[
\prod_{i=1}^D \frac{1}{|\sinh(\pi\gamma\omega\,w_i^\top\beta)|}
\ge
\frac{1}{(2\pi\gamma)^D}\cdot \frac{1}{\omega^D}\cdot \frac{1}{|w_1^\top\beta|},
\]
where we used $|w_i^\top\beta|\le 1$ for $i\ge 2$ after normalizing $\|w_i\|_2=1$
(or absorb $\max_i\|w_i\|_2$ into constants).
Hence, 
% using symmetry in $\omega$,
\[\Bigl|\int_{0}^{\omega_0}\omega^{d+1}
\prod_{i=1}^D \frac{1}{\sinh(\pi\gamma\omega\,w_i^\top\beta)}\,d\omega\Bigr|
=\int_{0}^{\omega_0}\Bigl|\omega^{d+1}
\prod_{i=1}^D \frac{1}{\sinh(\pi\gamma\omega\,w_i^\top\beta)}\Bigr|\,d\omega,
\]
% \[
% \Bigl|\int_{|\omega|\le \omega_0}\omega^{d+1}
% \prod_{i=1}^D \frac{1}{\sinh(\pi\gamma\omega\,w_i^\top\beta)}\,d\omega\Bigr|
% \;\ge\;
% \frac{C_1}{|w_1^\top\beta|}\int_0^{\omega_0}\omega^{d+1-D}\,d\omega.
% \]
If $D\le d+1$, the $\omega$-integral equals a finite constant $C_2>0$; if $D\ge d+2$
it diverges already in $\omega$. In either case (for $d\ge 2$),
\[
\|u_\beta\|_{L^1_t}\;\ge\;\frac{C_3}{|w_1^\top\beta|}.
\]
Integrating over $\beta\in\mathbb S^{d-1}$ gives
\[
\rtv{f_{\sf DT,\gamma}}
\;=\;c_d\int_{\mathbb S^{d-1}}\|u_\beta\|_{L^1_t}\,d\sigma(\beta)
\;\ge\;C_4\int_{\mathbb S^{d-1}}\frac{1}{|w_1^\top\beta|}\,d\sigma(\beta)
\;=\;+\infty,
\]
since $w_1^\top\beta=0$ defines an equator and $|\beta_1|^{-1}$ is not integrable near $0$ on
$\mathbb S^{d-1}$ when $d\ge 2$.

\newpage

% Using this we can compute the ($d+1$)-th derivative of 
% \begin{align*}
%     \partial^{d+1}_t R\curlybracket{f_{\gamma, D}}(\bm{\beta},t) =  (2\pi)^{-1/2} \int_{\reals} (\mathrm i \omega)^{d+1} e^{\mathrm i \omega t} \widehat{f_{\gamma, D}}(\omega \bm{\beta})\, d \omega
% \end{align*}

% where 
% \begin{align*}
%     \int_{\reals} e^{\mathrm{i\omega t}} \mathbb{I}\curlybracket{\omega\bm{\beta} \in A}\cdot \delta^{(d)}(\omega\bm{\beta})  d \omega = \mathbb{I}\curlybracket{\bm{0} \in A} 
% \end{align*}
% Thus, we have the following:

% \begin{align*}
%     \mathcal{R}\{f\}(\bm{\beta}, t) =
%     \begin{cases}
%         0 \quad &\textnormal{ if } \bm{\beta} \neq \bm{0}\\
%     (2\pi)^{-(d+1)/2} \quad & \textnormal{ otherwise }
%     \end{cases}
% \end{align*}

% !TEX root = MAIN.tex
\section{Gaussian Smoothing}\label{app:gauss}

In this appendix, we provide the proof of the main theorem on the approximation of Gaussian smoothed decision trees as stated in \thmref{thm:gaussian}.

\subsection{Upper bounding the $\tv$ for the Gaussain-smoothed decision trees}

Let \( f_{\sf{DT}}(\cdot) = {1}\curly{\cdot \in A}\) be the indicator function of a decision-tree region \( A \subset \mathbb{R}^d \), and define the smoothed function via convolution:
\[
f_\sigma(\x) := (f_{\sf{DT}} * G_\sigma)(\x) = \int_{\mathbb{R}^d} {1}\curly{\y \in A}\, G_\sigma(\x - \y)\, d\y,
\]
where \( G_\sigma(\z) := \frac{1}{(2\pi \sigma^2)^{d/2}} \exp\left(-\frac{\|\z\|^2}{2\sigma^2}\right) \) is the Gaussian kernel.

By the Fourier-slice formula,
\[
R\{f_\sigma\}(\bm{\beta}, t) = \frac{1}{\sqrt{2\pi}} \int_{\mathbb{R}} e^{\mathrm i\omega t} \widehat{f_\sigma}(\omega \bm{\beta})\, d\omega,
\]
and since \( \widehat{f_\sigma}(\bm{\xi}) = \widehat{f_{\sf{DT}}}(\bm{\xi})\, e^{-\frac{\sigma^2 \|\bm{\xi}\|^2}{2}} \) (using Convolution theorem of Fourier transform), we compute:
\begin{align}
    \partial_t^{d+1} R\{f_\sigma\}(\bm{\beta}, t)
= \frac{1}{\sqrt{2\pi}} \int_{\mathbb{R}} (\mathrm i\omega)^{d+1} e^{\mathrm i\omega t} \widehat{f_{\sf{DT}}}(\omega \bm{\beta})\, e^{-\frac{\sigma^2 \omega^2}{2}} d\omega.\label{eq:Gd+1}
\end{align}

%\subsection*{Step 2: Plug in Definition of \(\widehat{f_{\sf{DT}}}\)}
As shown in \eqnref{eq:fourDT}, we have:
\[
\widehat{f_{\sf{DT}}}(\omega \bm{\beta})
= \frac{1}{(2\pi)^{d/2}} \int_A e^{-\mathrm i\omega \bm{\beta}^\top \x} d\x.
\]
Plugging this equation in \eqnref{eq:Gd+1}, we get:
\begin{align}
    \partial_t^{d+1} R\{f_\sigma\}(\bm{\beta}, t)
= \frac{1}{(2\pi)^{(d+1)/2}} \int_A \underbrace{\left[ \int_{\mathbb{R}} (\mathrm i\omega)^{d+1} e^{\mathrm i\omega(t - \bm{\beta}^\top \x)} e^{-\frac{\sigma^2 \omega^2}{2}} d\omega \right]}_{I_{d+1}} d\x.    \label{eq:d+1}
\end{align}
In the following, we would rewrite $I_{d+1}$ in terms of probabilist's Hermite polynomials. 
%\subsection*{Step 3: Inner integral with probabilists' Hermite polynomials}

Set \(m=d+1\) and \(s := t-\bm{\beta}^\top \x\).  Now, using Rodrigues' formula for Hermite polynomials: ${\mathrm{He}_{n}(u)=(-1)^{n}e^{\frac {u^{2}}{2}}{\frac {d^{n}}{du^{n}}}e^{-{\frac {u^{2}}{2}}}}$, and integration by parts, we have
\begin{align}
    I_m = \int_{\mathbb R} (\mathrm i\omega)^{m}\,e^{\,\mathrm i\omega s}\,e^{-\frac{\sigma^{2}\omega^{2}}{2}}\,d\omega
   = (- \mathrm i)^{m}\,\frac{\sqrt{2\pi}}{\sigma^{m+1}}\;
     e^{-\frac{s^{2}}{2\sigma^{2}}}\;
     \mathrm{He}_{m}\!\Bigl(\frac{s}{\sigma}\Bigr).    
\end{align}

With this, we can write:
\begin{align}
    \left|\partial_t^{d+1} R\{f_\sigma\}(\bm{\beta}, t)\right|
&= \frac{1}{(2\pi)^{(d+1)/2}} \Bigl|\int_A (- \mathrm i)^{d+1}\,\frac{\sqrt{2\pi}}{\sigma^{d+2}}\;
     e^{-\frac{(t-\bm{\beta}^\top \x)^{2}}{2\sigma^{2}}}\;
     \mathrm{He}_{d+1}\!\Bigl(\frac{t-\bm{\beta}^\top \x}{\sigma}\Bigr) d\x\Bigr| %\\
%& = 
\end{align}
Hence, we can write
\begin{align}
\left|\partial_t^{\,d+1} R\{f_\sigma\}(\bm{\beta},t)\right|
   \le \frac{\sigma^{-(d+2)}}{(2\pi)^{d/2}}
          \int_{A}
          \Bigl|\mathrm{He}_{d+1}\!\Bigl(\tfrac{t-\bm{\beta}^{\!\top}\x}{\sigma}\Bigr)\Bigr|
          \exp\!\Bigl(-\tfrac{(t-\bm{\beta}^{\!\top}\x)^2}{2\sigma^{2}}\Bigr)\,d\x.    \label{eq:boundG}
\end{align}
In odd dimensions, $\tv$ of $f_\sigma$ is the $L^1$-norm of the absolute value of $(d+1)-$th derivative of $R\{f_\sigma\}(\bm{\beta},t)$ with respect to $t$. Thus, we get
\allowdisplaybreaks
\begin{align*}
    ||f_{\sigma}||_\cR &= \int_{\mathbb{S}^{d-1}}\int_{\reals} \left|\partial_t^{\,d+1} R\{f_\sigma\}(\bm{\beta},t)\right| d t\, d \bm{\beta}\\
         & \le \frac{\sigma^{-(d+2)}}{(2\pi)^{d/2}} \int_{\mathbb{S}^{d-1}}\int_{\reals} \int_{A}
          \Bigl|\mathrm{He}_{d+1}\!\Bigl(\tfrac{t-\bm{\beta}^{\!\top}\x}{\sigma}\Bigr)\Bigr|
          \exp\!\Bigl(-\tfrac{(t-\bm{\beta}^{\!\top}\x)^2}{2\sigma^{2}}\Bigr)\,d\x\, dt\, d \bm{\beta}\\
          &= \frac{\sigma^{-(d+2)}}{(2\pi)^{d/2}} \int_{\mathbb{S}^{d-1}} \int_A \int_{\reals} \Bigl|\mathrm{He}_{d+1}\!\Bigl(\tfrac{t-\bm{\beta}^{\!\top}\x}{\sigma}\Bigr)\Bigr|
          \exp\!\Bigl(-\tfrac{(t-\bm{\beta}^{\!\top}\x)^2}{2\sigma^{2}}\Bigr)\,dt\, d\x\, d \bm{\beta}
\end{align*}

Substituting \( u = \frac{t - \bm{\beta}^\top \x}{\sigma} \), \( dt = \sigma du \), and noting that this inner integral is independent of \( x \), we obtain:
\begin{align}
    ||f_{\sigma}||_\cR \le \frac{\sigma^{-(d+1)}}{(2\pi)^{d/2}}  \int_{\mathbb{S}^{d-1}} \int_A C_{\mathrm{He}}(d+1) d\x \, d\bm{\beta} \label{eq:GRbound}
\end{align}
where we define
\[
C_{\mathrm{He}}(m)\;:=\;\int_{\mathbb R}
        e^{-u^{2}/2}\,
        \bigl|\mathrm{He}_{m}(u)\bigr|\,du.
\]

Simplifying the integrals with respect to $t$ and $\bm{\beta}$, we can rewrite \eqnref{eq:GRbound} as
\begin{align}
     ||f_{\sigma}||_\cR \le \frac{\sigma^{-(d + 1)}}{(2\pi)^{d/2}}C_{\mathrm{He}}(d+1)\cdot  Vol(A) \cdot |\mathbb{S}^{d-1}| \label{eq:RG1}
\end{align}
Now, we will bound $C_{\mathrm{He}}(d+1)$. Note that 
\begin{align*}
    C_{\mathrm{He}}(m)\;=\;\int_{\mathbb R}
        e^{-u^{2}/2}\,
        \bigl|\mathrm{He}_{m}(u)\bigr|\,du \le \sqrt{\paren{\int_\reals e^{-u^{2}/2}\, du} \paren{\int_\reals e^{-u^{2}/2}\,
        \mathrm{He}^2_{m}(u)\, du} } \quad (\star)
\end{align*}
where we have used Cauchy-Schwarz in the last inequality.

Note that the first term in $(\star)$ has a concrete form due to integral of a Gaussian density, and hence
\begin{align*}
    \int_\reals e^{-u^{2}/2}\, du = \sqrt{2 \pi}
\end{align*}
In the second term of the $(\star)$, we use a standard identity on inner product of probabilit's Hermite polynomials, 
\begin{align*}
    \int_\reals e^{-u^{2}/2}\,
        \mathrm{He}^2_{m}(u)\, du = \sqrt{2\pi} m!
\end{align*}
Using the approximations above, we can bound 
\begin{align*}
    C_{\mathrm{He}}(d+1) \le \sqrt{2\pi} \sqrt{(d+1)!} %\text{ or } \paren{\frac{d+1}{e}}^{d+1/2}\cdot \sqrt{2 \pi(d+1)}
\end{align*}

Now, using Sterling's approximation, we can further simplify the rhs as
%Using the series expansion of \(\mathrm{He}_{m}\) and bounding the Gaussian moments, one obtains the elementary envelope
\[%
C_{\mathrm{He}}(d{+}1)\;\le\;
c\,(d{+}1)^{\frac{d+1}{2}}\;2^{d/2}
\qquad(\text{some universal }c\approx 1.2).
\]
for $d \ge 3$.

Using this bound to simplify \eqnref{eq:RG1}, we get 
% and
% \[
% |\mathbb{S}^{d-1}| = \frac{2\pi^{d/2}}{\Gamma(d/2)}.
% \]
% Finally, we have,
% \begin{align*}
%     ||f_{\sigma}||_\cR \le \frac{\sigma^{-(d+1)}}{(2\pi)^{d/2}} \cdot \frac{2\pi^{d/2}}{\Gamma(d/2)} \cdot \paren{\frac{d+1}{e}}^{d+1/2}\cdot \sqrt{2 \pi(d+1)} \cdot Vol(A)
% \end{align*}
\begin{align*}
    ||f_{\sigma}||_\cR &\le \frac{\sigma^{-(d+1)}}{(2\pi)^{d/2}} \cdot \frac{2\pi^{d/2}}{\Gamma(d/2)} \cdot c\,(d{+}1)^{\frac{d+1}{2}}\;2^{d/2} \cdot Vol(A) \\
    & = \frac{\sigma^{-(d+1)}}{\Gamma(d/2)} \cdot c\,(d{+}1)^{\frac{d+1}{2}} \cdot Vol(A) 
\end{align*}
Now, using the Stirling's approximation on \(\Gamma(d/2) \approx \sqrt{2\pi} \left(\frac{d}{2e}\right)^{d/2} \sqrt{\frac{2}{d}}\):
we have
\[%
\|f_\sigma\|_{R}
   \;\le\;
   C\,d^{\,1/2}\;
   \Bigl(\tfrac{\sqrt{2}\,e}{\sigma}\Bigr)^{\!d}\;
   Vol(A)
\quad(\sigma>0),
\]
where \(C \le 2.2\).

\newpage

\section{Approximation Post-thresholding}\label{app:post-thres}

In this section we prove \lemref{lem:single-box-main} and \thmref{thm:RTV-single-main} from \secref{sec:easy}.

First we state a technical lemma that constructs smooth monotone barriers with exact plateaus and controlled derivatives.

\begin{lemma}\label{lem:barrier-fixed}
Fix integer $d\ge1$, scalars $\lambda\ge1$, and $\varepsilon\in(0,1]$.
Let $H\in C^\infty(\mathbb R)$ be nondecreasing with
\[
H(s)=0\ (s\le 0),\qquad H(s)=1\ (s\ge 1),\qquad H^{(m)}(0)=H^{(m)}(1)=0\ \ (1\le m\le d{+}1).
\]
Define the scaled step $h_\varepsilon(t):=H\!\big((t+\varepsilon)/\varepsilon\big)$ such that $h_\varepsilon=0$ on $(-\infty,-\varepsilon]$, $h_\varepsilon=1$ on $[0,\infty)$, and $h_\varepsilon'\ge0$ supported in $[-\varepsilon,0]$, and set
\[
\vartheta_{\lambda,\varepsilon}(t)\ :=\ (1-h_\varepsilon(t))\,e^{\lambda t}\ +\ h_\varepsilon(t).
\]
Then $\vartheta_{\lambda,\varepsilon}\in C^{d+1}(\mathbb R)$, is nondecreasing, and
\[
\vartheta_{\lambda,\varepsilon}(t)=e^{\lambda t}\quad (t\le -\varepsilon),\qquad
\vartheta_{\lambda,\varepsilon}(t)=1\quad (t\ge 0).
\]
For every $q\in\{1,\dots,d{+}1\}$ there exist constants $C_q$ (depending only on $q$ and $H$) such that
\[
\|\vartheta_{\lambda,\varepsilon}^{(q)}\|_{L^\infty(\mathbb R)}\ \le\ C_q\sum_{m=0}^q \lambda^{\,q-m}\,\varepsilon^{-m},\qquad
\|\vartheta_{\lambda,\varepsilon}^{(q)}\|_{L^1(\mathbb R)}\ \le\ C_q\Big(\lambda^{q-1}+\sum_{m=1}^q \lambda^{\,q-m}\,\varepsilon^{\,1-m}\Big).
\]
In particular, for $\varepsilon=c_0/\lambda$ (fixed $c_0>0$),
\[
\|\vartheta_{\lambda,\varepsilon}^{(q)}\|_{L^\infty}\ \le\ C'_q\,\lambda^q,\qquad
\|\vartheta_{\lambda,\varepsilon}^{(q)}\|_{L^1}\ \le\ C'_q\,\lambda^{q-1}.
\]
\end{lemma}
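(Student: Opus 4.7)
The proof is essentially a careful bookkeeping exercise built around the piecewise form of $\vartheta_{\lambda,\varepsilon}$, so I would organize it into four steps.

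First, I would verify the boundary behaviour by direct inspection of $h_\varepsilon$. Since $H(s)=0$ for $s\le 0$ and $H(s)=1$ for $s\ge 1$, the scaled cutoff satisfies $h_\varepsilon(t)=0$ on $(-\infty,-\varepsilon]$ and $h_\varepsilon(t)=1$ on $[0,\infty)$, with $h_\varepsilon'\ge 0$ supported in $[-\varepsilon,0]$. Substituting into the definition of $\vartheta_{\lambda,\varepsilon}$ yields $\vartheta_{\lambda,\varepsilon}(t)=e^{\lambda t}$ on $(-\infty,-\varepsilon]$ and $\vartheta_{\lambda,\varepsilon}(t)=1$ on $[0,\infty)$. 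For the $C^{d+1}$ regularity, the only candidate failure points are $t=-\varepsilon$ and $t=0$; the assumption $H^{(m)}(0)=H^{(m)}(1)=0$ for $1\le m\le d{+}1$ forces $h_\varepsilon^{(m)}$ to vanish from the interior side, which matches smoothly against the trivial values on the constant pieces, so the one-sided derivatives of $\vartheta_{\lambda,\varepsilon}$ agree up to order $d{+}1$ at both breakpoints.

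Second, I would confirm monotonicity by a one-line computation: differentiating gives $\vartheta'_{\lambda,\varepsilon}(t)=h_\varepsilon'(t)(1-e^{\lambda t})+(1-h_\varepsilon(t))\,\lambda e^{\lambda t}$. Since $h_\varepsilon'$ is supported in $[-\varepsilon,0]$, on its support $e^{\lambda t}\le 1$, and both summands are nonnegative; outside this support the statement is trivial.

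Third, for the derivative bounds I would apply Leibniz to the product $(1-h_\varepsilon)\,e^{\lambda t}$ to obtain
\[
\vartheta^{(q)}_{\lambda,\varepsilon}(t)
\;=\;(1-h_\varepsilon(t))\,\lambda^q e^{\lambda t}
\;-\;\sum_{k=1}^q \binom{q}{k}\,h_\varepsilon^{(k)}(t)\,\lambda^{q-k}e^{\lambda t}
\;+\;h_\varepsilon^{(q)}(t),
\]
and then invoke the chain-rule identity $h_\varepsilon^{(k)}(t)=\varepsilon^{-k}H^{(k)}((t+\varepsilon)/\varepsilon)$ together with the fact that $H^{(k)}$ is supported in $[0,1]$. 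This immediately gives $\|h_\varepsilon^{(k)}\|_{L^\infty}\le \|H^{(k)}\|_\infty\,\varepsilon^{-k}$ and $\|h_\varepsilon^{(k)}\|_{L^1}\le \|H^{(k)}\|_{L^1}\,\varepsilon^{\,1-k}$. The $L^\infty$ bound then follows by taking absolute values term by term and using $|1-h_\varepsilon|\le 1$ and $e^{\lambda t}\le 1$ on $t\le 0$. For the $L^1$ bound, the decaying exponential piece $(1-h_\varepsilon)\lambda^q e^{\lambda t}$ contributes at most $\int_{-\infty}^0 \lambda^q e^{\lambda t}\,dt=\lambda^{q-1}$, while each bump term $h_\varepsilon^{(k)}\lambda^{q-k}e^{\lambda t}$ is supported in $[-\varepsilon,0]$ (so $e^{\lambda t}\le 1$) and integrates to at most $C_k\lambda^{q-k}\varepsilon^{1-k}$; combining gives the stated estimate.

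Fourth, I would specialize to $\varepsilon=c_0/\lambda$. Then $\lambda^{q-m}\varepsilon^{-m}=c_0^{-m}\lambda^q$, so the $L^\infty$ sum collapses to a multiple of $\lambda^q$, and $\lambda^{q-m}\varepsilon^{1-m}=c_0^{1-m}\lambda^{q-1}$, so the $L^1$ sum collapses to a multiple of $\lambda^{q-1}$, absorbing all $c_0$-dependence into the constant $C'_q$. There is no real obstacle: the only mildly delicate point is separating the exponential tail (which contributes $\lambda^{q-1}$ in $L^1$ via integrability on $(-\infty,0]$) from the compactly supported bump pieces (which contribute $\varepsilon^{1-k}$ via change of variables in $H^{(k)}$), and then checking that the coupled scaling $\varepsilon=c_0/\lambda$ balances these contributions uniformly in $q$.
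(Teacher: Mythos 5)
Your proposal is correct and follows essentially the same route as the paper's proof: the exact Leibniz expansion of $\vartheta^{(q)}$, the scaling identity $h_\varepsilon^{(k)}=\varepsilon^{-k}H^{(k)}((\cdot+\varepsilon)/\varepsilon)$ with its $L^\infty$ and $L^1$ consequences, the tail/transition/plateau split for the $L^1$ bound, and the collapse under $\varepsilon=c_0/\lambda$. The only cosmetic difference is that the paper gets $C^{d+1}$ regularity for free from the global smooth formula rather than by matching one-sided derivatives at the breakpoints, but your conclusion is the same.
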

\begin{proof}[Proof of Lemma~\ref{lem:barrier-fixed}]
Let
\[
h_\varepsilon(t):=H\!\left(\frac{t+\varepsilon}{\varepsilon}\right),\qquad
\vartheta(t):=\vartheta_{\lambda,\varepsilon}(t)=(1-h_\varepsilon(t))e^{\lambda t}+h_\varepsilon(t).
\]

Since $H(s)=0$ for $s\le 0$ and $H(s)=1$ for $s\ge 1$, we have
\[
h_\varepsilon(t)=0\ \ (t\le -\varepsilon),\qquad
h_\varepsilon(t)=1\ \ (t\ge 0).
\]
Moreover, $H$ is nondecreasing, hence $h_\varepsilon$ is nondecreasing and $h_\varepsilon'(t)\ge 0$.
For every integer $m\ge 1$, $h_\varepsilon^{(m)}$ is supported in $[-\varepsilon,0]$.

Because the inner map $t\mapsto (t+\varepsilon)/\varepsilon$ is affine, repeated chain rule gives the exact identity
\begin{equation}\label{eq:heps-deriv}
h_\varepsilon^{(m)}(t)=\varepsilon^{-m}\,H^{(m)}\!\left(\frac{t+\varepsilon}{\varepsilon}\right),\qquad m\ge 0.
\end{equation}
In particular, for each $m\ge 0$,
\[
\|h_\varepsilon^{(m)}\|_{L^\infty(\mathbb R)}
\le \varepsilon^{-m}\,\|H^{(m)}\|_{L^\infty(\mathbb R)}
=:C_m\,\varepsilon^{-m}.
\]
For $m\ge 1$, using $\mathrm{supp}(h_\varepsilon^{(m)})\subset[-\varepsilon,0]$ and the change of variables
$s=(t+\varepsilon)/\varepsilon$ (so $dt=\varepsilon\,ds$), we also get
\[
\|h_\varepsilon^{(m)}\|_{L^1(\mathbb R)}
=\int_{-\varepsilon}^0 \varepsilon^{-m}\left|H^{(m)}\!\left(\frac{t+\varepsilon}{\varepsilon}\right)\right|dt
=\varepsilon^{1-m}\int_0^1 |H^{(m)}(s)|\,ds
=:\widetilde C_m\,\varepsilon^{1-m}.
\]
Here $C_m$ and $\widetilde C_m$ depend only on $m$ and $H$, not on $\lambda$ or $\varepsilon$.

Since $h_\varepsilon$ and $e^{\lambda t}$ are smooth, $\vartheta\in C^\infty(\mathbb R)$ (hence in $C^{d+1}$).
If $t\le -\varepsilon$ then $h_\varepsilon(t)=0$ so $\vartheta(t)=e^{\lambda t}$.
If $t\ge 0$ then $h_\varepsilon(t)=1$ so $\vartheta(t)=1$.

Now differentiate $\vartheta$:
\[
\vartheta'(t)=-(h_\varepsilon'(t))e^{\lambda t}+(1-h_\varepsilon(t))\lambda e^{\lambda t}+h_\varepsilon'(t)
=(1-h_\varepsilon(t))\lambda e^{\lambda t}+h_\varepsilon'(t)\bigl(1-e^{\lambda t}\bigr).
\]
On $[-\varepsilon,0]$ we have $h_\varepsilon'\ge 0$ and $t\le 0$ implies $1-e^{\lambda t}\ge 0$, hence $\vartheta'(t)\ge 0$.
On $(-\infty,-\varepsilon]$ we have $\vartheta(t)=e^{\lambda t}$ which is increasing, and on $[0,\infty)$ we have $\vartheta\equiv 1$.
Therefore $\vartheta$ is nondecreasing on $\mathbb R$.

Fix $q\in\{1,\dots,d+1\}$. Write
\[
\vartheta(t)=(1-h_\varepsilon(t))e^{\lambda t}+h_\varepsilon(t).
\]
By Leibniz' rule,
\begin{equation}\label{eq:vartheta-q}
\vartheta^{(q)}(t)
=\sum_{m=0}^q \binom{q}{m}(1-h_\varepsilon)^{(m)}(t)\,\partial_t^{\,q-m}(e^{\lambda t})
+h_\varepsilon^{(q)}(t).
\end{equation}
Since $\partial_t^{\,q-m}(e^{\lambda t})=\lambda^{q-m}e^{\lambda t}$ and $(1-h_\varepsilon)^{(m)}=-h_\varepsilon^{(m)}$ for $m\ge 1$,
we may bound pointwise (using $0\le 1-h_\varepsilon\le 1$):
\[
|\vartheta^{(q)}(t)|
\le \lambda^q e^{\lambda t}
+\sum_{m=1}^q \binom{q}{m}\lambda^{q-m}e^{\lambda t}\,|h_\varepsilon^{(m)}(t)|
+|h_\varepsilon^{(q)}(t)|.
\]

\underbar{\emph{$L^\infty$ bound.}}
On $t\le 0$ we have $e^{\lambda t}\le 1$, so using $\|h_\varepsilon^{(m)}\|_\infty\le C_m\varepsilon^{-m}$ gives
\[
\sup_{t\le 0}|\vartheta^{(q)}(t)|
\le \lambda^q+\sum_{m=1}^q \binom{q}{m}\lambda^{q-m}C_m\varepsilon^{-m}+C_q\varepsilon^{-q}.
\]
On $t\ge 0$, $\vartheta\equiv 1$ so $\vartheta^{(q)}\equiv 0$.
Absorbing binomial coefficients into a constant $C_q$ (depending only on $q$ and $H$) yields
\[
\|\vartheta^{(q)}\|_{L^\infty(\mathbb R)}
\le C_q\sum_{m=0}^q \lambda^{q-m}\varepsilon^{-m}.
\]

\underbar{\emph{$L^1$ bound.}}
Split $\mathbb R$ into $(-\infty,-\varepsilon]\cup[-\varepsilon,0]\cup[0,\infty)$.

On $(-\infty,-\varepsilon]$, $\vartheta^{(q)}(t)=\lambda^q e^{\lambda t}$, hence
\[
\int_{-\infty}^{-\varepsilon}|\vartheta^{(q)}(t)|\,dt
=\int_{-\infty}^{-\varepsilon}\lambda^q e^{\lambda t}\,dt
=\lambda^{q-1}e^{-\lambda\varepsilon}
\le \lambda^{q-1}.
\]
On $[0,\infty)$, $\vartheta^{(q)}\equiv 0$.
On $[-\varepsilon,0]$ we have $e^{\lambda t}\le 1$, so integrating the pointwise bound and using
$\|h_\varepsilon^{(m)}\|_{L^1}\le \widetilde C_m\varepsilon^{1-m}$ gives
\[
\int_{-\varepsilon}^{0}|\vartheta^{(q)}(t)|\,dt
\le \lambda^q\varepsilon
+\sum_{m=1}^q \binom{q}{m}\lambda^{q-m}\widetilde C_m\varepsilon^{1-m}
+\widetilde C_q\varepsilon^{1-q}.
\]
Absorbing constants again into $C_q$ yields
\[
\|\vartheta^{(q)}\|_{L^1(\mathbb R)}
\le C_q\Big(\lambda^{q-1}+\sum_{m=1}^q \lambda^{q-m}\varepsilon^{1-m}\Big).
\]

If $\varepsilon=c_0/\lambda$, then
\[
\sum_{m=0}^q \lambda^{q-m}\varepsilon^{-m}
=\sum_{m=0}^q \lambda^{q-m}\left(\frac{\lambda}{c_0}\right)^m
=\lambda^q\sum_{m=0}^q c_0^{-m}
\le C'_q\,\lambda^q,
\]
and similarly
\[
\lambda^{q-1}+\sum_{m=1}^q \lambda^{q-m}\varepsilon^{1-m}
=\lambda^{q-1}\Big(1+\sum_{m=1}^q c_0^{1-m}\Big)
\le C'_q\,\lambda^{q-1}.
\]
This proves the stated bounds.
\end{proof}

Using this barrier, we now prove \lemref{lem:single-box-main}.
% \begin{proof}
% Monotonicity on $[-\varepsilon,0]$ follows from
% $\vartheta'=(1-h_\varepsilon)\lambda e^{\lambda t}+h_\varepsilon'(1-e^{\lambda t})\ge0$,
% since $h_\varepsilon'\ge0$ and $1-e^{\lambda t}\ge0$ for $t\le0$; outside, $\vartheta$ is $e^{\lambda t}$ (increasing) or $1$.
% Leibniz and the scaling $\|h_\varepsilon^{(m)}\|_\infty\!\le C_m\varepsilon^{-m}$, $\|h_\varepsilon^{(m)}\|_{L^1}\!\le C_m\varepsilon^{1-m}$,
% plus $\int_{(-\infty,-\varepsilon]}|\partial_t^q e^{\lambda t}|dt=\lambda^{q-1}e^{-\lambda\varepsilon}\le\lambda^{q-1}$, give the bounds.
% \end{proof}

\begin{proof}[Proof of Lemma~\ref{lem:single-box-main}]
Recall that
\[
S_B(x)\ :=\ \prod_{j=1}^d \vartheta_{\lambda,\varepsilon}(u_j-x_j)\,
\vartheta_{\lambda,\varepsilon}(x_j-\ell_j),\qquad \varepsilon=c_0/\lambda.
\]
We use the barrier properties from Lemma~\ref{lem:barrier-fixed}:
\[
0\le \vartheta_{\lambda,\varepsilon}(t)\le 1,\qquad
\vartheta_{\lambda,\varepsilon}(t)=1\ (t\ge 0),\qquad
\vartheta_{\lambda,\varepsilon}(t)=e^{\lambda t}\ (t\le -\varepsilon).
\]

\emph{Exact thresholding.}
If $x\in B$, then $u_j-x_j\ge 0$ and $x_j-\ell_j\ge 0$ for all $j$, hence every factor equals $1$ and
$S_B(x)=1$. If $x\notin B$, then for some $j$ either $u_j-x_j<0$ or $x_j-\ell_j<0$; by monotonicity
$\vartheta_{\lambda,\varepsilon}(\cdot)<1$ on $(-\infty,0)$, so at least one factor is $<1$, hence $S_B(x)<1$.
Therefore $\{x:\ S_B(x)\ge 1\}=B$.

\emph{$L^1(P)$ closeness.}
Since $S_B=\mathbf 1_B=1$ on $B$ and $0\le S_B\le 1$ everywhere,
\[
\mathbb E\big[|S_B(X)-\mathbf 1_B(X)|\big]
=\mathbb E\big[S_B(X)\mathbf 1_{B^c}(X)\big].
\]

For $x\in\mathbb R^d$, define the coordinate overhangs
\[
\delta_j(x):=(\ell_j-x_j)_+ + (x_j-u_j)_+ \quad(\ge 0),\qquad
Z(x):=\sum_{j=1}^d \delta_j(x)=d_1(x,B).
\]
If $x\notin B$, then for each $j$ we have $\vartheta_{\lambda,\varepsilon}(u_j-x_j)\vartheta_{\lambda,\varepsilon}(x_j-\ell_j) = \vartheta_{\lambda,\varepsilon}(- \delta_j(x))$
%$u_j-x_j\le -\delta_j(x)$ and $x_j-\ell_j\le -\delta_j(x)$
, hence by monotonicity
\[
S_B(x)\le \prod_{j=1}^d \vartheta_{\lambda,\varepsilon}(-\delta_j(x)).
\]
Using the exponential form of $\vartheta_{\lambda,\varepsilon}$, for every $\delta\ge 0$,
\[
\vartheta_{\lambda,\varepsilon}(-\delta)\ \le\ e^{-\lambda(\delta-\varepsilon)_+},
\]
because if $\delta\ge \varepsilon$ then $-\delta\le -\varepsilon$ and $\vartheta_{\lambda,\varepsilon}(-\delta)=e^{-\lambda\delta}
\le e^{-\lambda(\delta-\varepsilon)}$, while if $\delta\le \varepsilon$ then $\vartheta_{\lambda,\varepsilon}(-\delta)\le 1
=e^{-\lambda(\delta-\varepsilon)_+}$.
Therefore for $x\notin B$,
\[
S_B(x)\le \exp\!\Big(-\lambda\sum_{j=1}^d(\delta_j(x)-\varepsilon)_+\Big).
\]
Since $\sum_{j=1}^d(\delta_j-\varepsilon)_+ \ge (\sum_j\delta_j-d\varepsilon)_+=(Z-d\varepsilon)_+$,
\[
S_B(x)\le \exp\!\big(-\lambda(Z(x)-d\varepsilon)_+\big)
\le \mathbf 1\{0<Z(x)\le d\varepsilon\}+e^{\lambda d\varepsilon}e^{-\lambda Z(x)}\mathbf 1\{Z(x)>0\}.
\]
Multiplying by $\mathbf 1_{B^c}(x)$ (equivalently $\mathbf 1\{Z(x)>0\}$) and taking expectations gives
\begin{equation}\label{eq:SB_split}
\mathbb E[S_B(X)\mathbf 1_{B^c}(X)]
\le \mathbb P\{0<Z(X)\le d\varepsilon\} + e^{\lambda d\varepsilon}\,\mathbb E[e^{-\lambda Z(X)}\mathbf 1_{B^c}(X)].
\end{equation}

Next we relate $Z$ to Euclidean distance to the boundary.

Let $D(x):=\mathrm{dist}(x,\partial B)$.
On $B^c$, the closest point in $B$ lies on $\partial B$, hence $\mathrm{dist}(x,B)=D(x)$; moreover $Z(x)=d_1(x,B)\ge \mathrm{dist}(x,B)=D(x)$.
Thus, on $B^c$, $\{0<Z\le t\}\subseteq\{D\le t\}$ and $e^{-\lambda Z}\le e^{-\lambda D}$, so
\[
\mathbb P\{0<Z(X)\le t\}\le \mathbb P\{D(X)\le t\},\qquad
\mathbb E[e^{-\lambda Z(X)}\mathbf 1_{B^c}(X)]\le \mathbb E[e^{-\lambda D(X)}\mathbf 1_{B^c}(X)].
\]

Let $t_0>0$ be such that the tube--mass condition holds for all $t\in(0,t_0]$:
$\mathbb P\{D(X)\le t\}\le C t^\beta$.
Applying this with $t=d\varepsilon$ yields
\[
\mathbb P\{0<Z(X)\le d\varepsilon\}\le \mathbb P\{D(X)\le d\varepsilon\}\le C(d\varepsilon)^\beta
= C(d c_0)^\beta \lambda^{-\beta}.
\]

For the Laplace term, use the layer-cake/integration-by-parts bound for the nonnegative random variable $D\mathbf 1_{B^c}$:
\[
\mathbb E[e^{-\lambda D(X)}\mathbf 1_{B^c}(X)]
=\int_{0}^{\infty} e^{-\lambda t}\,d\mathbb P\{0<D(X)\le t\}
\le \lambda\int_{0}^{t_0} e^{-\lambda t}\,\mathbb P\{D(X)\le t\}\,dt + e^{-\lambda t_0}.
\]
Using $\mathbb P\{D(X)\le t\}\le Ct^\beta$ for $t\le t_0$ gives
\[
\mathbb E[e^{-\lambda D(X)}\mathbf 1_{B^c}(X)]
\le C\lambda\int_0^\infty e^{-\lambda t}t^\beta\,dt + e^{-\lambda t_0}
= C\Gamma(\beta+1)\lambda^{-\beta} + e^{-\lambda t_0}.
\]
Since $\lambda\ge 1$, the exponential tail satisfies $e^{-\lambda t_0}\le C_{\beta,t_0}\lambda^{-\beta}$, hence
\[
\mathbb E[e^{-\lambda D(X)}\mathbf 1_{B^c}(X)]
\le C'_{\beta,t_0}\, \lambda^{-\beta}.
\]
Plugging these estimates into \eqref{eq:SB_split} and using $e^{\lambda d\varepsilon}=e^{dc_0}$ yields
\[
\mathbb E[|S_B(X)-\mathbf 1_B(X)|]
=\mathbb E[S_B(X)\mathbf 1_{B^c}(X)]
\le \Big(C(d c_0)^\beta + e^{dc_0}C'_{\beta,t_0}\Big)\lambda^{-\beta},
\]
which is the claimed $O(\lambda^{-\beta})$ bound with $C_{d,\beta,c_0}$ absorbing constants.
\end{proof}

\subsection{Proof of upper bound on the $\cR\mathrm{TV}$}
Now, we show the proof of the upper bound on the Radon total-variation as stated in \thmref{thm:RTV-single-main} for a single box from \secref{sec:easy}.
% \begin{theorem}[$\mathcal{R}\mathrm{TV}$ upper bound for a single box]\label{thm:RTV-single-main}%\label{thm:single_box_rtv}
% For all $\lambda\ge 1$ and $\varepsilon=c_0/\lambda$,
% \[
% \|S_B\|_{\mathcal{R}}
% \;\le\;
% C_d
% \sum_{r=1}^d
% \lambda^{\,d+1-r}\;
% \mathcal{H}^{d-r}\!\bigl(\Sigma_{d-r}(B)\bigr),
% \]
% where $\Sigma_{d-r}(B)$ is the union of $(d-r)$-dimensional faces of $B$, $\mathcal{H}^{d-r}$ is
% $(d-r)$-dimensional Hausdorff measure, and $C_d$ depends only on $d$ (and the choice of $H$).
% \end{theorem}

First, we estblish some auxiliary lemmas about the 1D barrier function $\vartheta_{\lambda,\varepsilon}$, and its use in constructing box indicators with controlled derivatives.

\begin{lemma}\label{lem:prod-1d-fixed}
Assume $\varepsilon=c_0/\lambda$ with fixed $c_0>0$.
Let $U(t)=\vartheta_{\lambda,\varepsilon}(u-t)$, $L(t)=\vartheta_{\lambda,\varepsilon}(t-\ell)$ with $u>\ell$.
For $q\ge0$ set $F_q:=\partial_t^{\,q}(U L)$. Then
\[
\|F_0\|_{L^1(\mathbb R)}\ \le\ (u-\ell)\ +\ C\,\tfrac{1}{\lambda},\qquad
\|F_q\|_{L^1(\mathbb R)}\ \le\ C_q\,\lambda^{\,q-1}\quad(q\ge1).
\]
\end{lemma}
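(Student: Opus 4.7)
The proof plan is to exploit a sharp \emph{support separation} between the derivatives of the two factors. First I would record the support structure inherited from Lemma~\ref{lem:barrier-fixed}. Since $\vartheta_{\lambda,\varepsilon}(s)=1$ for $s\ge0$, the factor $U(t)=\vartheta_{\lambda,\varepsilon}(u-t)$ equals $1$ on $(-\infty,u]$, so $U^{(m)}$ is supported in $[u,\infty)$ for every $m\ge1$; symmetrically, $L(t)=\vartheta_{\lambda,\varepsilon}(t-\ell)$ equals $1$ on $[\ell,\infty)$, so $L^{(m)}$ is supported in $(-\infty,\ell]$ for every $m\ge1$. Assuming $\ell<u$ (which is given since $u>\ell$), the supports of $U^{(m)}$ and $L^{(k)}$ are disjoint whenever both $m\ge1$ and $k\ge1$.

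For the $q=0$ bound the plan is a direct region split of $\mathbb{R}$ into $(-\infty,\ell-\varepsilon]\cup[\ell-\varepsilon,\ell]\cup[\ell,u]\cup[u,u+\varepsilon]\cup[u+\varepsilon,\infty)$. On $[\ell,u]$ both factors equal $1$, contributing exactly $u-\ell$. On the two transition slabs of width $\varepsilon$, one factor is $1$ and the other is bounded by $1$, contributing at most $2\varepsilon$. On each exponential tail the nontrivial factor is $e^{-\lambda|s|}$ with $|s|\ge\varepsilon$, whose integral equals $e^{-\lambda\varepsilon}/\lambda$. Summing and using $\varepsilon=c_0/\lambda$ yields $\|F_0\|_{L^1}\le (u-\ell)+\bigl(2c_0+2e^{-c_0}\bigr)/\lambda$.

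For $q\ge1$ I would expand by Leibniz,
\[
F_q\ =\ \sum_{m=0}^{q}\binom{q}{m}\,U^{(m)}\,L^{(q-m)},
\]
and observe that every mixed term with $1\le m\le q-1$ vanishes identically, because its support lies in $[u,\infty)\cap(-\infty,\ell]=\emptyset$. Only the endpoint terms survive: $U\,L^{(q)}$ and $U^{(q)}\,L$. On the support of $L^{(q)}$ we have $t\le\ell<u$, hence $U(t)=1$, so $U\,L^{(q)}=L^{(q)}$; symmetrically $U^{(q)}\,L=U^{(q)}$. Therefore $F_q=U^{(q)}+L^{(q)}$, and a change of variable ($t\mapsto u-t$ or $t\mapsto t-\ell$, both unit Jacobian) gives $\|U^{(q)}\|_{L^1}=\|L^{(q)}\|_{L^1}=\|\vartheta_{\lambda,\varepsilon}^{(q)}\|_{L^1}$. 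Invoking the $\varepsilon=c_0/\lambda$ specialization in Lemma~\ref{lem:barrier-fixed}, this is at most $C'_q\lambda^{q-1}$, yielding $\|F_q\|_{L^1}\le 2C'_q\lambda^{q-1}=:C_q\lambda^{q-1}$ as claimed.

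The only mild subtlety (not a real obstacle) is making sure the Leibniz expansion is justified for $q\le d+1$, which follows because $\vartheta_{\lambda,\varepsilon}\in C^{d+1}$ by Lemma~\ref{lem:barrier-fixed}, so $U,L\in C^{d+1}$ and their product is classically differentiable up to order $d+1$. All other steps are routine region-by-region estimates, and the crucial structural fact making the bound clean is the disjoint-support cancellation of the cross terms in Leibniz, which prevents an unwanted multiplicative blowup in the mixed-derivative terms.
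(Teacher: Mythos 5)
Your proposal is correct and follows essentially the same route as the paper's proof: the same five-region split for $F_0$, and for $q\ge 1$ the same Leibniz expansion with the disjoint-support observation ($\mathrm{supp}\,U^{(a)}\subset[u,\infty)$, $\mathrm{supp}\,L^{(b)}\subset(-\infty,\ell]$) killing all mixed terms, followed by Lemma~\ref{lem:barrier-fixed} applied to the two surviving endpoint terms. The only cosmetic difference is that you note $U\,L^{(q)}=L^{(q)}$ exactly on the relevant support, while the paper merely bounds $0\le U,L\le 1$; both give the identical estimate.
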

\begin{proof}
Let $\vartheta=\vartheta_{\lambda,\varepsilon}$. Recall $\vartheta(t)=1$ for $t\ge 0$ and
$\vartheta^{(m)}(t)=0$ for all $m\ge 1$ and all $t\ge 0$ (exact plateau).

On $[\ell,u]$ we have $t-\ell\ge 0$ and $u-t\ge 0$, hence $L(t)=U(t)=1$ and $UL=1$.
On the transition layers $[\ell-\varepsilon,\ell]\cup[u,u+\varepsilon]$ we have $0\le UL\le 1$, so their contribution is at most $2\varepsilon$.
On the left tail $(-\infty,\ell-\varepsilon]$ we have $L(t)=e^{\lambda(t-\ell)}$ and $U(t)=1$ (since $t\le \ell<u$),
so $\int_{-\infty}^{\ell-\varepsilon} UL\,dt \le \int_{-\infty}^{\ell-\varepsilon} e^{\lambda(t-\ell)}dt = \lambda^{-1}e^{-\lambda\varepsilon}\le \lambda^{-1}$.
Similarly the right tail contributes at most $\lambda^{-1}$.
Thus
\[
\|F_0\|_{L^1(\mathbb R)}=\int_\mathbb R UL
\le (u-\ell)+2\varepsilon+\frac{2}{\lambda}.
\]

By Leibniz,
\[
\partial_t^{\,q}(UL)=\sum_{a=0}^q \binom{q}{a} U^{(a)} L^{(q-a)}.
\]
Crucially, for any $a\ge 1$ we have
\[
U^{(a)}(t)=(-1)^a \vartheta^{(a)}(u-t)=0\quad\text{whenever }u-t\ge 0 \iff t\le u,
\]
so $\mathrm{supp}(U^{(a)})\subset (u,\infty)$.
Likewise for any $b\ge 1$,
\[
L^{(b)}(t)=\vartheta^{(b)}(t-\ell)=0\quad\text{whenever }t-\ell\ge 0 \iff t\ge \ell,
\]
so $\mathrm{supp}(L^{(b)})\subset (-\infty,\ell)$.
Since $\ell<u$, these supports are disjoint, hence for $1\le a\le q-1$ we have
$U^{(a)}(t)\,L^{(q-a)}(t)\equiv 0$ for all $t$.
Therefore only the endpoint terms remain:
\[
\partial_t^{\,q}(UL)=U^{(q)}L+U L^{(q)}.
\]
Using $0\le U,L\le 1$ we get
\[
\|F_q\|_{L^1}\le \|U^{(q)}\|_{L^1}+\|L^{(q)}\|_{L^1}
= 2\|\vartheta^{(q)}\|_{L^1(\mathbb R)}.
\]
Now apply Lemma~\ref{lem:barrier-fixed} to $\vartheta^{(q)}$.
If $\varepsilon=c_0/\lambda$, this gives $\|\vartheta^{(q)}\|_{L^1}\le C_q \lambda^{q-1}$, hence
$\|F_q\|_{L^1}\le C_q' \lambda^{q-1}$.
\end{proof}

\begin{lemma}\label{lem:per-box-Lp}
Assume $\varepsilon=c_0/\lambda$ with fixed $c_0>0$.
Let $B=\prod_{j=1}^d[\ell_j,u_j]$ and $S_B(x)=\prod_{j=1}^d G_j(x_j)$ with
$G_j(t)=\vartheta_{\lambda,\varepsilon}(u_j-t)\,\vartheta_{\lambda,\varepsilon}(t-\ell_j)$.
Fix $1\le p\le\infty$ and a multiindex $\alpha$ with $|\alpha|=s\ge1$, and set $q_j=\alpha_j$,
$J=\{j:q_j\ge1\}$ and $r=|J|$. Then
\[
\|\partial^\alpha S_B\|_{L^p(\mathbb R^d)}
\ \le\ C_{d,s,p}\,
\lambda^{\,s-\frac{r}{p}}\,
\prod_{k\notin J}\Big((u_k-\ell_k)+\tfrac{C}{\lambda}\Big)^{\frac1p}.
\]
\end{lemma}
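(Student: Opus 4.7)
The plan is to exploit the tensor product structure of $S_B$ and reduce everything to the one-dimensional estimates already prepared in Lemma~\ref{lem:barrier-fixed} and Lemma~\ref{lem:prod-1d-fixed}. Since $S_B(x)=\prod_{j=1}^d G_j(x_j)$, partial differentiation yields the exact factorization $\partial^\alpha S_B(x)=\prod_{j=1}^d G_j^{(\alpha_j)}(x_j)$, so by Fubini the $L^p(\mathbb{R}^d)$ norm factorizes as a product of one-dimensional norms, $\|\partial^\alpha S_B\|_{L^p(\mathbb{R}^d)}=\prod_{j=1}^d \|G_j^{(q_j)}\|_{L^p(\mathbb{R})}$. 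This reduces the proof to controlling, for each coordinate, either $\|G_j\|_{L^p}$ (when $j\notin J$) or $\|G_j^{(q_j)}\|_{L^p}$ for some $q_j\ge 1$ (when $j\in J$).

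For $j\notin J$, I will use the pointwise bound $0\le G_j\le 1$ (which holds because each factor of $\vartheta_{\lambda,\varepsilon}$ lies in $[0,1]$) to write $G_j^p\le G_j$ for finite $p$, so $\|G_j\|_{L^p}^p\le \|G_j\|_{L^1}$, and Lemma~\ref{lem:prod-1d-fixed} gives $\|G_j\|_{L^1}\le (u_j-\ell_j)+C/\lambda$. The case $p=\infty$ is trivial since $\|G_j\|_\infty\le 1$, and for a unified statement one can absorb this into the same formula by interpreting the exponent $1/p$ as $0$ when $p=\infty$.

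For $j\in J$, the key is to combine an $L^1$ and an $L^\infty$ bound on $G_j^{(q_j)}$. The $L^1$ bound $\|G_j^{(q)}\|_{L^1}\le C_q\,\lambda^{q-1}$ is exactly what Lemma~\ref{lem:prod-1d-fixed} supplies. For the $L^\infty$ bound, I will revisit the argument inside the proof of Lemma~\ref{lem:prod-1d-fixed}: on the support of any mixed term $U^{(a)}L^{(q-a)}$ with $1\le a\le q-1$, the supports of $U^{(a)}$ and $L^{(q-a)}$ are disjoint (one lives in $(u_j,\infty)$, the other in $(-\infty,\ell_j)$), so $G_j^{(q)}=U^{(q)}L+U L^{(q)}$ pointwise; since $0\le U,L\le 1$, Lemma~\ref{lem:barrier-fixed} gives $\|G_j^{(q)}\|_{L^\infty}\le 2\|\vartheta^{(q)}\|_{L^\infty}\le C_q'\lambda^q$. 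Then the elementary interpolation $\|f\|_{L^p}^p\le \|f\|_{L^\infty}^{p-1}\|f\|_{L^1}$ yields $\|G_j^{(q)}\|_{L^p}\le C_q''\,\lambda^{q-1/p}$ for every $1\le p\le\infty$ (interpreting the endpoint $p=\infty$ directly from the $L^\infty$ bound).

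Finally I will multiply the per-coordinate bounds: the factors from $j\in J$ contribute $\prod_{j\in J} C\,\lambda^{q_j-1/p}=C^r\,\lambda^{s-r/p}$, using $\sum_{j\in J}q_j=\sum_j q_j=s$ since $q_j=0$ off $J$, while the factors from $j\notin J$ contribute $\prod_{k\notin J}((u_k-\ell_k)+C/\lambda)^{1/p}$, giving the stated inequality with $C_{d,s,p}$ absorbing the combinatorial and kernel-dependent constants. The main subtlety, and the only place where real care is needed, is the disjoint-support argument for the $L^\infty$ bound on $G_j^{(q)}$: without it, Leibniz' rule would mix exponentially-growing and exponentially-small factors and produce a suboptimal power of $\lambda$. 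Everything else is bookkeeping on Fubini and interpolation.
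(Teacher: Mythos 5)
Your proof is correct and follows essentially the same route as the paper: factorize via separability and Tonelli, bound the inactive coordinates in $L^1$ using $0\le G_j\le 1$, and interpolate the active coordinates between the $L^1$ bound of Lemma~\ref{lem:prod-1d-fixed} and an $L^\infty$ bound of order $\lambda^{q_j}$. Your explicit disjoint-support argument showing $G_j^{(q)}=U^{(q)}L+UL^{(q)}$ (so that the $L^\infty$ bound really does follow from Lemma~\ref{lem:barrier-fixed} without mixed Leibniz terms) is a detail the paper's proof leaves implicit by citing Lemma~\ref{lem:barrier-fixed} directly, and it is a worthwhile clarification.
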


\begin{proof}
By separability (ignoring harmless signs), $\partial^\alpha S_B(x)=\prod_{j=1}^d \partial^{q_j}G_j(x_j)$.
If $1\le p<\infty$, Tonelli gives
\[
\|\partial^\alpha S_B\|_{L^p}^p=\prod_{j=1}^d \|\partial^{q_j}G_j\|_{L^p(\mathbb R)}^p,
\]
while for $p=\infty$ we have $\|\partial^\alpha S_B\|_\infty=\prod_{j=1}^d \|\partial^{q_j}G_j\|_\infty$.

For $q_j\ge1$, Lemma~\ref{lem:barrier-fixed} gives $\|\partial^{q_j}G_j\|_\infty\lesssim \lambda^{q_j}$ and
Lemma~\ref{lem:prod-1d-fixed} gives $\|\partial^{q_j}G_j\|_1\lesssim \lambda^{q_j-1}$,
hence by interpolation $\|\partial^{q_j}G_j\|_p\lesssim \lambda^{\,q_j-\frac1p}$.
For $q_j=0$, $\|G_j\|_\infty\le 1$ and Lemma~\ref{lem:prod-1d-fixed} yields $\|G_j\|_1\le (u_j-\ell_j)+C/\lambda$, so
$\|G_j\|_p\le \|G_j\|_1^{1/p}\le \big((u_j-\ell_j)+C/\lambda\big)^{1/p}$.
Multiplying over $j$ gives the claim.
\end{proof}

\begin{lemma}\label{lem:union-separated}%single-box}
Fix $\lambda\ge1$ and $\varepsilon=c_0/\lambda$ with $c_0>0$.
Let $B=\prod_{j=1}^d[\ell_j,u_j]$ and
\[
S_B(x)\ :=\ \prod_{j=1}^d G_j(x_j),\qquad
G_j(x_j):=\vartheta_{\lambda,\varepsilon}(u_j-x_j)\,\vartheta_{\lambda,\varepsilon}(x_j-\ell_j).
\]
Then
\[
\sum_{|\alpha|=d+1}\|\partial^\alpha S_B\|_{L^1(\mathbb R^d)}
\ \le\ C_d\sum_{r=1}^{d}\lambda^{\,d+1-r}\,\mathcal H^{d-r}\!\big(\Sigma_{d-r}(B)\big),
\]
where $C_d$ depends only on $d$ and $H$.
\end{lemma}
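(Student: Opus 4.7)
The plan is to reduce to a direct application of Lemma~\ref{lem:per-box-Lp} at $p=1$ and then reorganize the resulting bound by grouping multiindices according to which codimension-$r$ face of $B$ they target. The key starting observation is that since $S_B(x)=\prod_{j=1}^d G_j(x_j)$ is separable, every multiindex $\alpha$ with $|\alpha|=d+1$ has support $J:=\{j:\alpha_j\ge 1\}\subset[d]$ of cardinality $r=|J|\in\{1,\dots,d\}$ (it cannot exceed $d$). Lemma~\ref{lem:per-box-Lp} applied with $s=d+1$ and $p=1$ immediately yields the per-multiindex bound
\begin{equation*}
\|\partial^\alpha S_B\|_{L^1(\mathbb R^d)}\;\le\;C_d\,\lambda^{d+1-r}\prod_{k\notin J}\!\Big((u_k-\ell_k)+\tfrac{C}{\lambda}\Big).
\end{equation*}

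First, I would sum over $\alpha$ of fixed support $J$: the number of compositions of $d+1$ into $r$ positive parts is $\binom{d}{r-1}$, a dimension-only factor that can be absorbed into $C_d$. Second, I would expand the side product via the binomial identity
\begin{equation*}
\prod_{k\notin J}\!\Big((u_k-\ell_k)+\tfrac{C}{\lambda}\Big)\;=\;\sum_{K\subset[d]\setminus J}(C/\lambda)^{|K|}\prod_{m\notin J\cup K}(u_m-\ell_m),
\end{equation*}
and relabel $J':=J\cup K$, $r':=|J'|=r+|K|\in\{r,\dots,d\}$. The inner product $\prod_{m\notin J'}(u_m-\ell_m)$ is precisely the $(d-r')$-dimensional measure of a \emph{single} codimension-$r'$ face of $B$; since the $2^{r'}$ orientations of the $r'$ fixed coordinates produce all codimension-$r'$ faces once each, summing over $|J'|=r'$ gives $2^{-r'}\mathcal H^{d-r'}(\Sigma_{d-r'}(B))$.

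Third, I would interchange the order of summation so that the outer variable becomes $r'$ rather than $r$. Each resulting term carries a combined power $\lambda^{d+1-r}\cdot\lambda^{-(r'-r)}=\lambda^{d+1-r'}$ and a purely combinatorial coefficient of the form $\binom{d}{r-1}\binom{r'}{r}C^{r'-r}2^{-r'}$ depending only on $d$. Collecting all such coefficients into a single dimensional constant yields
\begin{equation*}
\sum_{|\alpha|=d+1}\|\partial^\alpha S_B\|_{L^1}\;\le\;C_d\sum_{r'=1}^{d}\lambda^{d+1-r'}\,\mathcal H^{d-r'}\!\big(\Sigma_{d-r'}(B)\big),
\end{equation*}
which is the claim after renaming $r'\mapsto r$.

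The main obstacle I anticipate is the bookkeeping between the multiindex support $J$ and the combined face label $J'=J\cup K$. One has to check that, at each fixed $r'$, the powers of $\lambda^{-1}$ introduced by the $C/\lambda$ terms in the expansion are exactly the ones needed to reduce the leading $\lambda^{d+1-r}$ to $\lambda^{d+1-r'}$, so that every contribution gets indexed by the dimension of the face it came from with no leftover $\lambda$ dependence in the combinatorics. Once this accounting is set up, no further analytic input is required beyond Lemmas~\ref{lem:barrier-fixed}, \ref{lem:prod-1d-fixed}, and \ref{lem:per-box-Lp}.
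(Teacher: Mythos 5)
Your proposal is correct and follows essentially the same route as the paper's proof: the per-multiindex bound $\|\partial^\alpha S_B\|_{L^1}\le C\,\lambda^{d+1-r}\prod_{k\notin J}\bigl((u_k-\ell_k)+C/\lambda\bigr)$ (which the paper derives directly from the one-dimensional lemmas and which is indeed the $p=1$ case of Lemma~\ref{lem:per-box-Lp}), followed by the same binomial expansion of the side product, regrouping by $r'=|J\cup K|$ so that each $C/\lambda$ factor downgrades $\lambda^{d+1-r}$ to $\lambda^{d+1-r'}$, and the identity $\sum_{|J'|=r'}\prod_{k\notin J'}(u_k-\ell_k)=2^{-r'}\mathcal H^{d-r'}(\Sigma_{d-r'}(B))$. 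The combinatorial bookkeeping you flag as the main obstacle is handled exactly as you describe, with all dimension-only coefficients absorbed into $C_d$.
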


\begin{proof}
%\paragraph{Step 1: Factorization by separability.}
Write $\alpha=(\alpha_1,\dots,\alpha_d)$ with $|\alpha|=d{+}1$, and let
$J:=\{j:\alpha_j\ge1\}$ (the active axes), $r:=|J|\in\{1,\dots,d\}$.
By separability and Tonelli,
\[
\partial^\alpha S_B(x)\;=\;\prod_{j=1}^d \partial_{x_j}^{\,\alpha_j}G_j(x_j),
\qquad
\|\partial^\alpha S_B\|_{L^1}\;=\;\prod_{j=1}^d \big\|\partial_{x_j}^{\,\alpha_j}G_j\big\|_{L^1(\mathbb R)}.
\]

%\paragraph{Step 2: One-dimensional $L^1$ bounds.}
From Lemma~\ref{lem:barrier-fixed} and Lemma~\ref{lem:prod-1d-fixed}
(with $\varepsilon=c_0/\lambda$),
\[
\big\|\partial_{x_j}^{\,q}G_j\big\|_{L^1(\mathbb R)}
\ \le\
\begin{cases}
C\,\lambda^{\,q-1}, & q\ge1,\\[3pt]
(u_j-\ell_j)+C/\lambda, & q=0.
\end{cases}
\]
Thus, for the multiindex $\alpha$ with active set $J$,
\begin{equation}\label{eq:L1-product}
\|\partial^\alpha S_B\|_{L^1}
\ \le\
C^{\,r}\,\lambda^{\,\sum_{j\in J}(\alpha_j-1)}\,
\prod_{k\notin J}\Big((u_k-\ell_k)+\tfrac{C}{\lambda}\Big)
\ =\
C^{\,r}\,\lambda^{\,d+1-r}\,
\prod_{k\notin J}\Big((u_k-\ell_k)+\tfrac{C}{\lambda}\Big),
\end{equation}
since $\sum_{j\in J}\alpha_j=d+1$.

%\paragraph{Step 3: Expanding the $(u_k-\ell_k)+C/\lambda$ factors.}
Let $a_k:=u_k-\ell_k$ and $\beta:=C/\lambda$. For fixed $J$,
\[
\prod_{k\notin J}(a_k+\beta)
\ =\
\sum_{L\subseteq J^c}\ \beta^{\,|L|}\ \prod_{k\notin J\cup L} a_k.
\]
Insert this into \eqref{eq:L1-product}:
\[
\|\partial^\alpha S_B\|_{L^1}
\ \le\
C^{\,r}\sum_{L\subseteq J^c}\ \lambda^{\,d+1-r}\,\beta^{\,|L|}\ \prod_{k\notin J\cup L} a_k
\ =\
C^{\,r}\sum_{L\subseteq J^c}\ \lambda^{\,d+1-(r+|L|)}\ \prod_{k\notin J\cup L} a_k.
\]
Define $r':=r+|L|\in\{r,\dots,d\}$. Grouping by $r'$,
\begin{equation}\label{eq:fixed-J-to-rprime}
\|\partial^\alpha S_B\|_{L^1}
\ \le\
\sum_{r'=r}^{d} C_d\,\lambda^{\,d+1-r'}\!\!\!\!
\sum_{\substack{L\subseteq J^c\\ |L|=r'-r}}
\ \prod_{k\notin J\cup L} a_k.
\end{equation}

%\paragraph{Step 4: Summing over multiindices with the same active set $J$.}
For a fixed $J$ with $|J|=r$, the number of compositions of $d{+}1$ into $r$ strictly positive parts
\((\alpha_j)_{j\in J}\) is ${d \choose r-1}$; absorbing this (and the $C^{\,r}$) into $C_d$, the sum over all
$\alpha$ with $\mathrm{supp}(\alpha)=J$ yields
\[
\sum_{\substack{\alpha:\ |\alpha|=d+1\\ \mathrm{supp}(\alpha)=J}}
\|\partial^\alpha S_B\|_{L^1}
\ \le\
\sum_{r'=r}^{d} C_d\,\lambda^{\,d+1-r'}\!\!\!\!
\sum_{\substack{L\subseteq J^c\\ |L|=r'-r}}
\ \prod_{k\notin J\cup L} a_k.
\]

%\paragraph{Step 5: Summing over active-axis choices $J$ and identifying skeleton measures.}
Now sum over all $J\subseteq\{1,\dots,d\}$ with $|J|=r$, and then over $r=1,\dots,d$.
For a fixed $r'$, the inner product $\prod_{k\notin J\cup L} a_k$ depends only on the union
$J':=J\cup L$ with $|J'|=r'$; each such $J'$ arises from finitely many pairs $(J,L)$,
which is absorbed into $C_d$. Hence
\[
\sum_{\substack{J\subset[d]\\ |J|=r}}
\ \sum_{\substack{L\subseteq J^c\\ |L|=r'-r}}
\ \prod_{k\notin J\cup L} a_k
\ \le\ C_d\sum_{\substack{J'\subset[d]\\ |J'|=r'}}
\ \prod_{k\notin J'} a_k.
\]
Recall that the \((d-r')\)-skeleton measure of an axis-aligned box satisfies
\[
\mathcal H^{d-r'}\!\big(\Sigma_{d-r'}(B)\big)
\ =\ \sum_{\substack{J'\subset[d]\\ |J'|=r'}} 2^{\,r'}\ \prod_{k\notin J'} a_k,
\]
because choosing $J'$ fixes which $r'$ coordinates are clamped to a face (each with two choices, $\ell$ or $u$), and the remaining coordinates span intervals of lengths $a_k$; overlaps of distinct faces have strictly lower dimension and therefore zero \(\mathcal H^{d-r'}\)-measure.
Therefore,
\[
\sum_{\substack{J'\subset[d]\\ |J'|=r'}} \prod_{k\notin J'} a_k
\ =\ 2^{-r'}\,\mathcal H^{d-r'}\!\big(\Sigma_{d-r'}(B)\big)
\ \le\ C_d\,\mathcal H^{d-r'}\!\big(\Sigma_{d-r'}(B)\big),
\]
absorbing $2^{-r'}$ into $C_d$.

%\paragraph{Step 6: Final aggregation.}
Combining and summing $r'=1,\dots,d$,
\[
\sum_{|\alpha|=d+1}\|\partial^\alpha S_B\|_{L^1}
\ \le\
\sum_{r'=1}^{d} C_d\,\lambda^{\,d+1-r'}\,
\mathcal H^{d-r'}\!\big(\Sigma_{d-r'}(B)\big),
\]
as claimed.
\end{proof}

\begin{lemma}\label{lem:radon-master}
If $f\in C^{d+1}(\mathbb R^d)$ and $\partial^\alpha f\in L^1(\mathbb R^d)$ for all $|\alpha|=d{+}1$, then
\[
\|f\|_{\cR}
:=\int_{S^{d-1}}\int_{\mathbb R}\big|\partial_t^{\,d+1}(\mathcal R f)(\beta,t)\big|\,dt\,d\beta
\ \le\ C_d\sum_{|\alpha|=d+1}\|\partial^\alpha f\|_{L^1(\mathbb R^d)}.
\]
\end{lemma}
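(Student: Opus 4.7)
The plan is to combine the standard intertwining relation between the Radon transform and directional derivatives with the elementary identity that integrating $\mathcal{R}g$ over $t$ recovers $\int g$. First, I would record the pointwise/distributional relation
\[
\partial_t^{\,k}(\mathcal R f)(\beta,t) \;=\; \mathcal R\!\bigl((\beta\cdot\nabla)^{k} f\bigr)(\beta,t),
\]
valid under our assumptions ($f\in C^{d+1}$ with top-order derivatives in $L^1$), by differentiating the hyperplane integral $\mathcal R f(\beta,t)=\int_{\{\beta^\top x=t\}} f\,dS$ in $t$; with $k=d{+}1$ this converts every $t$-derivative on the $\mathcal R$-side into mixed spatial derivatives on the $f$-side.

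Second, I would use the multinomial expansion
\[
(\beta\cdot\nabla)^{d+1}f \;=\; \sum_{|\alpha|=d+1}\binom{d+1}{\alpha}\beta^{\alpha}\,\partial^{\alpha}f,
\]
together with the trivial bound $|\beta^\alpha|\le 1$ on $S^{d-1}$ and the triangle inequality, to dominate $|\partial_t^{d+1}\mathcal R f(\beta,t)|$ pointwise in $(\beta,t)$ by $\sum_{|\alpha|=d+1}\binom{d+1}{\alpha}\,\mathcal R\!\left(|\partial^\alpha f|\right)(\beta,t)$.

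Third, I would integrate over $t\in\mathbb R$ and invoke Fubini. The key observation is that for any $h\in L^1(\mathbb R^d)$ and every $\beta\in S^{d-1}$,
\[
\int_{\mathbb R}\mathcal R |h|(\beta,t)\,dt \;=\; \int_{\mathbb R^d} |h(x)|\,dx \;=\; \|h\|_{L^1},
\]
since $\mathcal R$ followed by integration in $t$ composes to full integration on $\mathbb R^d$. Applying this to each $h=\partial^\alpha f$ gives, uniformly in $\beta$,
\[
\int_{\mathbb R}\bigl|\partial_t^{\,d+1}\mathcal R f(\beta,t)\bigr|\,dt \;\le\; \sum_{|\alpha|=d+1}\binom{d+1}{\alpha}\|\partial^\alpha f\|_{L^1(\mathbb R^d)}.
\]
A final integration over $\beta\in S^{d-1}$ contributes the factor $|S^{d-1}|$, and absorbing $|S^{d-1}|$ together with the multinomial coefficients into a dimension-only constant $C_d$ yields the claim.

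The main obstacle I anticipate is justifying the intertwining relation in the precise regularity required: with only $f\in C^{d+1}$ and $\partial^\alpha f\in L^1$ (no compact support or decay), differentiation under the hyperplane integral must be handled carefully. The standard fix is to first prove the identity for Schwartz $f$, then extend by a density/mollification argument with dominated convergence driven by the $L^1$ control of the top-order derivatives; once this is in place, the remainder of the argument is a clean application of Fubini and the multinomial inequality.
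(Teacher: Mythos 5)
Your proposal is correct and follows essentially the same route as the paper's proof: both rest on the intertwining $\partial_t^{d+1}\mathcal{R}f=\mathcal{R}\bigl((\beta\cdot\nabla)^{d+1}f\bigr)$, the multinomial expansion with $|\beta^\alpha|\le 1$, and the Fubini/change-of-variables fact that integrating the Radon transform over $t$ recovers the full $L^1$ norm, followed by integration over $S^{d-1}$. The paper likewise notes that differentiation under the hyperplane integral is justified by standard criteria or mollification, exactly as you anticipate.
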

\begin{proof}
Fix $\beta\in S^{d-1}$ and write the Radon transform in coordinates
\[
(\mathcal R f)(\beta,t)=\int_{\beta^\perp} f(y+t\beta)\,d\mathcal H^{d-1}(y).
\]
Since $f\in C^{d+1}$, for each $y$ the map $t\mapsto f(y+t\beta)$ is $C^{d+1}$ with
$\partial_t^{\,d+1} f(y+t\beta) = (\beta\cdot\nabla)^{d+1}f(y+t\beta)$.
Moreover $(\beta\cdot\nabla)^{d+1}f$ is a finite linear combination of $\partial^\alpha f$, $|\alpha|=d+1$,
hence lies in $L^1(\mathbb R^d)$ by assumption. It follows (e.g. by standard differentiation-under-the-integral
criteria, or by mollification) that for a.e.\ $t$,
\[
\partial_t^{\,d+1}(\mathcal R f)(\beta,t)=\int_{\beta^\perp} (\beta\cdot\nabla)^{d+1} f(y+t\beta)\,d\mathcal H^{d-1}(y).
\]
Therefore, by triangle inequality and the change of variables $x=y+t\beta$,
\[
\int_{\mathbb R}\big|\partial_t^{\,d+1}(\mathcal R f)(\beta,t)\big|\,dt
\le \int_{\mathbb R}\int_{\beta^\perp} \big|(\beta\cdot\nabla)^{d+1} f(y+t\beta)\big|\,dydt
= \int_{\mathbb R^d}\big|(\beta\cdot\nabla)^{d+1} f(x)\big|\,dx.
\]
Next expand the directional derivative:
\[
(\beta\cdot\nabla)^{d+1} = \Big(\sum_{j=1}^d \beta_j \partial_j\Big)^{d+1}
= \sum_{|\alpha|=d+1} \binom{d+1}{\alpha}\,\beta^\alpha\,\partial^\alpha,
\]
so using $|\beta^\alpha|\le 1$,
\[
\int_{\mathbb R^d}\big|(\beta\cdot\nabla)^{d+1} f\big|
\le \sum_{|\alpha|=d+1}\binom{d+1}{\alpha}\,\|\partial^\alpha f\|_{L^1(\mathbb R^d)}
\le C_d \sum_{|\alpha|=d+1}\|\partial^\alpha f\|_{1},
\]
where $C_d:=\max_{|\alpha|=d+1}\binom{d+1}{\alpha}\le (d+1)!$ (or any comparable bound).
Finally integrate over $\beta\in S^{d-1}$ and absorb $|S^{d-1}|$ into $C_d$.
\end{proof}

% \begin{proof}
% For fixed $\beta$, dominated convergence (majorant: a fixed linear combination of $|\partial^\alpha f|$, $|\alpha|=d{+}1$) gives
% $\partial_t^{\,d+1}(\mathcal R f)(\beta,t)=\int_{\beta^\perp} (\beta\!\cdot\!\nabla)^{d+1} f(y+t\beta)\,d\mathcal H^{d-1}(y)$.
% Integrate in $t$ and apply Fubini to get
% $\int_{\mathbb R}|\partial_t^{\,d+1}(\mathcal R f)|\le \int_{\mathbb R^d}|(\beta\cdot\nabla)^{d+1}f|$,
% then average over $\beta\in S^{d-1}$ and bound $(\beta\cdot\nabla)^{d+1}$ by a linear combination of Cartesian derivatives.
% \end{proof}

\begin{theorem}[Bounded $\mathcal{R}\mathrm{TV}$ score for a single box]\label{thm:RTV-score-single}
With $S_B$ as in Lemma~\ref{lem:union-separated} and $\varepsilon=c_0/\lambda$,
\[
\|S_B\|_{\cR\mathrm{TV}}
\ \le\ C_{d}\sum_{r=1}^{d}\lambda^{\,d+1-r}\,\mathcal H^{d-r}\!\big(\Sigma_{d-r}(B)\big).
\]
\end{theorem}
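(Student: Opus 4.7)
The plan is to chain two earlier lemmas: Lemma~\ref{lem:radon-master} converts the $\mathcal{R}\mathrm{TV}$ seminorm of a sufficiently smooth function into a sum of $L^1$ norms of its order-$(d{+}1)$ mixed partials, while Lemma~\ref{lem:union-separated} bounds exactly that sum by the face-measure expression on the right-hand side.

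First I would check that the hypotheses of Lemma~\ref{lem:radon-master} apply to $f = S_B$. Since $\vartheta_{\lambda,\varepsilon}\in C^{d+1}(\mathbb{R})$ by construction (Lemma~\ref{lem:barrier-fixed}), each univariate factor $G_j(x_j)=\vartheta_{\lambda,\varepsilon}(u_j-x_j)\,\vartheta_{\lambda,\varepsilon}(x_j-\ell_j)$ is $C^{d+1}(\mathbb{R})$, and hence the product $S_B(x)=\prod_j G_j(x_j)$ lies in $C^{d+1}(\mathbb{R}^d)$. Integrability of the mixed partials $\partial^\alpha S_B$ for $|\alpha|=d+1$ is precisely what the product estimate in Lemma~\ref{lem:union-separated} establishes (inactive coordinates contribute an $L^1$ factor of order $(u_j-\ell_j)+C/\lambda<\infty$, and active ones contribute $O(\lambda^{\alpha_j-1})$ by Lemma~\ref{lem:prod-1d-fixed}), so every such mixed derivative is in $L^1(\mathbb R^d)$.

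With the hypotheses verified, Lemma~\ref{lem:radon-master} gives
\[
\|S_B\|_{\mathcal R}
\ \le\ C_d\sum_{|\alpha|=d+1}\|\partial^\alpha S_B\|_{L^1(\mathbb R^d)},
\]
and substituting the explicit bound from Lemma~\ref{lem:union-separated} yields the claimed dimension-only $C_d$ multiplying $\sum_{r=1}^{d}\lambda^{\,d+1-r}\,\mathcal H^{d-r}(\Sigma_{d-r}(B))$, which is exactly the statement.

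The theorem itself is therefore a one-line assembly; the real work has already been absorbed into Lemma~\ref{lem:union-separated}, whose critical ingredient is the disjoint-support observation $\mathrm{supp}(U^{(a)})\cap\mathrm{supp}(L^{(b)})=\emptyset$ for $a,b\ge 1$, which collapses Leibniz expansions of $\partial_{x_j}^{\,q_j}G_j$ to just the two endpoint terms $U^{(q)}L+U L^{(q)}$ and thus prevents any spurious $\lambda$-blow-up from cross terms. If one were redoing the argument from scratch, that cancellation would be the main obstacle; given the lemmas already in place, the present proof is essentially a two-step invocation.
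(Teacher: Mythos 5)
Your proof is correct and is essentially identical to the paper's own argument, which likewise obtains the result by applying Lemma~\ref{lem:radon-master} to $f=S_B$ and then invoking Lemma~\ref{lem:union-separated}. Your explicit verification of the smoothness and $L^1$ hypotheses, and your identification of the disjoint-support cancellation as the real workhorse, are accurate refinements of what the paper leaves implicit.
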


\begin{proof}
Apply Lemma~\ref{lem:radon-master} to $f=S_B$ and Lemma~\ref{lem:union-separated}.
\end{proof}

\newpage

\newpage

\end{document}